\documentclass{article}

\usepackage[margin=1in]{geometry}
\usepackage[colorinlistoftodos]{todonotes}
\usepackage{titling,kantlipsum}
\usepackage{amsfonts,amsthm,amsmath,amssymb}
\usepackage{xcolor}
\usepackage{graphicx}
\usepackage{subcaption}
\usepackage{enumitem}
\usepackage{mdframed}
\usepackage{authblk}
\usepackage{parskip}
\usepackage{bbm}
\usepackage{algorithm}
\usepackage[noend]{algpseudocode}
\algrenewcommand\algorithmiccomment[1]{\hfill \textcolor{gray}{// #1}}
\usepackage{setspace}
\usepackage[colorlinks, linkcolor=black,citecolor=black,urlcolor = black, bookmarks=true]{hyperref}
\usepackage[nameinlink, capitalize]{cleveref}
\usepackage{ragged2e}
\usepackage{tcolorbox}
\hypersetup{
	colorlinks,
	linkcolor={red!75!black},
	citecolor={blue!75!black},
	urlcolor={red!75!black},
}
\allowdisplaybreaks
\usetikzlibrary{decorations.pathreplacing}

\newtheorem{theorem}{Theorem}

\newtheorem{lemma}{Lemma}
\newtheorem{definition}{Definition}

\newtheorem{example}{Example}

\newtheorem{conjecture}{Conjecture}

\DeclareMathOperator*{\Uniform}{Unif}
\DeclareMathOperator*{\Bern}{Bern}
\DeclareMathOperator*{\Var}{Var}

\newcommand*\calF{\mathcal{F}}

\newcommand*\calA{\mathcal{A}} 
\newcommand*\calL{\mathcal{L}}

\newcommand*\calP{\mathcal{P}}
\newcommand*\calE{\mathcal{E}}
\newcommand*\calB{\mathcal{B}}
\newcommand*\bbR{\mathbb{R}}
\newcommand*\bbE{\mathbb{E}}

\newcommand*\bbN{\mathbb{N}}

\newcommand*\ind{\mathbbm{1}}

\newcommand*\pfx{\mathrm{pfx}}
\newcommand*\el{{}}
\newcommand*\up{up}
\newcommand*\low{low}
\newcommand*\decision{\mathrm{decision}}
\newcommand*\choice{\mathrm{choice}}

\makeatletter

\newcommand{\abstracttext}[1]{\def\@abstract{#1}}

\renewcommand{\maketitle}{
    \begingroup
    \renewcommand{\thefootnote}{\fnsymbol{footnote}} 

  \begin{center}
        {\LARGE \bfseries \@title \par}
        \vspace{2em}

        \begin{tabular*}{.8\textwidth}{@{\extracolsep{\fill}} c c}
            \shortstack{\textbf{Atul Ganju\footnotemark[2]}\\ganju@usc.edu\\University of Southern California} &
            \shortstack{\textbf{Travis McVoy\footnotemark[2]}\\mcvoy@usc.edu\\University of Southern California} \\
            \\\\
            \shortstack{\textbf{Shaddin Dughmi\footnotemark[3]}\\shaddin@usc.edu\\University of Southern California} &
            \shortstack{\textbf{Shang-Hua Teng}\\shanghua@usc.edu\\University of Southern California}
        \end{tabular*}

        \vspace{2em}
        {\@date \par}
    \end{center}

    \footnotetext[2]{Supported by a Viterbi School of Engineering Graduate Student Fellowship}
    \footnotetext[3]{Supported by NSF Grant CCF-2432219. Part of this work was done while the author was on sabbatical as the Carter and Tania Neild visiting professor at Northwestern University, as well as a visiting professor in the Data Science Institute at the University of Chicago}
    \endgroup 

    \vspace{1em}
    \begin{center}
        \begin{minipage}{0.85\textwidth}
            \small
            \setlength{\parindent}{1.5em}
            \justifying
            \@abstract
        \end{minipage}
    \end{center}
    \vspace{2em}
}
\makeatother

\title{A Theory of Time-Sensitive Language Generation: \\ Sparse Hallucination Beats Mode Collapse}

\date{\today}
\abstracttext{\begin{abstract}
    We study language generation in the limit under a global preference ordering on strings, as introduced by Kleinberg and Wei. As in \cite{KW, KW2}, we aim for \emph{breadth}, but impose an additional requirement of timeliness: higher-ranked strings should be generated earlier. A string is then only credited if it is generated before a deadline, where its deadline is defined by a function that maps a string’s rank in the target language to the time by which it must be produced. This is in keeping with a central consideration in machine learning, where inductive bias favors  ``simpler'' or ``more plausible'' outputs, all else being equal. We show that timely generation is impossible in a strong sense for eventually consistent generators—the protagonists of most prior related work. Under what is perhaps the mildest natural relaxation of consistency, a hallucination rate that vanishes over time, we show that we can circumvent our impossibility result. In particular, we can achieve optimal density  with respect to any superlinear deadline function. We also show this is tight by ruling out timely generation with linear deadlines and vanishing hallucination rate.
\end{abstract}}

\begin{document}
\maketitle
\setcounter{footnote}{0}
\renewcommand{\thefootnote}{\arabic{footnote}}

\crefname{figure}{Figure}{figures}

\section{Introduction}

Generative language models are a key building block of modern AI systems, enabling a multitude of downstream applications such as general-purpose dialog, creative writing, mathematical reasoning, and code synthesis \cite{towardsAutoMath, controlStyle, escapeBench}. 
Alongside this undeniable practical impact, a growing body of theoretical work seeks to understand the  principles and trade-offs governing language generation. 

An increasingly influential approach, often referred to as \emph{language generation in the limit}, was spurred by the seminal work of Kleinberg and Mullainathan~\cite{KM}, building on classical work concerned with language identification by Gold~\cite{Gold} and Angluin~\cite{Angluin}. This approach is characterized by its minimal assumptions, abstracting away many of the details of practical implementations and environments.
At a high level, language generation is modeled as a game played over time between the generator and an adversary: At the outset the adversary chooses a \emph{target language} from some countable collection of permissible languages, then the adversary and generator take turns generating strings. The adversary always generates a string from the target language, and the generator's goal is to eventually do the same. By focusing on the ``bare-bones'' essentials of language generation and pursuing guarantees only in the limit over time, this approach enables theoretical analysis of what is possible \emph{in principle}, including identifying and quantifying the intrinsic \emph{tension} between various design desiderata.

Much of the recent literature has focused on the tension between two desiderata: avoiding \emph{hallucination} by the generator, often formalized through notions such as \emph{consistency}, and achieving \emph{breadth} in the generator's outputs over the target language. We introduce a third concern in this paper which we call \emph{timeliness}, and study its interplay with hallucination and breadth. Timeliness postulates that ``more plausible'' or ``simpler'' outputs should be generated preferentially, which we take to mean earlier in the language generation game. Plausibility can represent any of the myriad forms of inductive bias employed in machine learning: preference for short programs in coding tasks, simpler or human-checkable proofs in mathematical reasoning tasks (judged, perhaps, by a trained reward model),  or distributional plausibility in dialog or creative writing tasks \cite{palPaper, mauve, cot22}.   We measure timeliness with respect to a \emph{deadline} function, mapping a string's position in the plausibility order to a time in the game by which it must be generated, lest the generator forego the associated credit.

In addition to being compelling in its own right, timeliness shares technical and conceptual connections with recent notions of breadth introduced by Kleinberg and Wei~\cite{KW}.  As background, there are several natural ways of modeling breadth, most of which lead to strong negative results---See \cite{kalavasis1, kalavasis2} for examples. This paints the theory into a corner by suggesting the inevitability of \emph{mode collapse}---the eventual deterioration of the generator to a tiny sliver of the target language. While mode collapse is a very real concern in practice~\cite{GANS2, GANS1}, such a stark prediction is perhaps increasingly out of step with empirical reality~\cite{noModeCollapseBoi}. A key conceptual insight came with the work of \cite{KW}, who measure breadth with respect to a preference ordering on strings. In particular, a generator achieves breadth in the target language if, when strings in the language are sorted in order of preference, its outputs are \emph{dense} in every sufficiently long prefix of the language. This approach to modeling breadth with respect to an inductive-bias on strings enabled remarkable progress in \cite{KW,KW2}, culminating in an algorithm achieving both consistency and optimal breadth in the limit. Timeliness can best be understood as a strengthening of the breadth requirements of \cite{KW,KW2}: in each prefix of the target language, a generated string only ``counts'' towards density if it is generated before its deadline.

Our technical work starts with the realization that the algorithms of \cite{KW,KW2}, and in fact any eventually consistent generator, can not achieve timely generation. This impossibility holds for any deadline function, and any nontrivial guarantee on the prefix-density of timely outputs.  In other words, consistent generators inevitably suffer \emph{mode collapse under a deadline}. This motivates relaxing consistency: by permitting hallucinations at a rate tending to zero over time, we recover timely generation in the limit for any superlinear deadline function, and provide essentially-matching impossibility results. Next, we present the relevant technical preliminaries before formally stating our results. We defer more thorough discussion of related work to \cref{app:related}. 

\subsection{Technical Preliminaries}

\textbf{Basic Notation.}
We often make use of countably infinite sets which are ordered (i.e.~enumerated) by a surjective map from the natural numbers. We denote such an \emph{ordered set} as $S=(s_1,s_2,\ldots)$, where $s_i$ denotes the $i$th element of the enumeration. We also use $S_i = (s_1,\ldots,s_i) = s_{1:i}$ to denote the first $i$ elements of the enumeration of $S$, and often refer to it as the prefix of $S$ of size $i$. Alternatively, we sometimes do not fix the ordering of a countably infinite set $K$, and instead consider the space of all enumerations which we denote by $\calE(K)$. Note that our notion of enumeration here can allow repetition, though in some cases we restrict attention to bijective enumerations. 

\textbf{The Language Generation Game.} 
As in \cite{KM,KW}, we represent language generation as a game between an adversary and a generator. Both players have access to a countably infinite collection of languages $\calL = (L^1, L^2, \ldots)$, defined over a countably infinite \emph{universe} $U = (u_1, u_2, \ldots)$ of ``strings.'' Note that both $\calL$ and $U$ are equipped with a bijective enumeration order, which is fixed and known. In the case of $U$, we think of the order $(u_1,u_2,\ldots)$ as encoding a preference (or inductive bias) on strings. This order on $U$ induces a preference order on the strings in each language $L \in \calL$ by restriction---i.e, strings in $L$ are also enumerated by a total order which is consistent with the order on $U$.

The adversary first selects a target language $K \in \calL$ and an enumeration  $E = (e_1, e_2, \ldots)$, where $E \in \calE(K)$. The generator $\calA = \{\calA_t : U^t \to U\}_{t=1}^\infty$ observes $E$ sequentially. At time $t$, the generator outputs  $o_t = \calA_t(e_{1:t}) \in U$ that has not previously been observed or output. Let $\calA(E) = (o_1, o_2, \ldots)$ denote the output sequence of the generator. We allow $\calA$ to be randomized, in which case each $\calA_t(e_{1:t})$ is a random variable, where expectations are over the internal randomness of the algorithm.

\textbf{Hallucination and Consistency.} We say the generator \emph{hallucinates} at time $t$ if $o_t \not\in K$. We say it is  \emph{eventually consistent} if it is guaranteed to eventually stops hallucinating; i.e., there exists some finite time $T$ (perhaps depending on $\calL$, $K$, and $E$)  such that for all $t\geq T$ the output $o_t\in K$. The celebrated result of \cite{KM} exhibits an eventually consistent generator for every language collection~$\calL$. 

Our impossibility results will necessitate relaxing consistency by allowing hallucinations that vanish over time. We define the \emph{hallucination rate} of $\calA$ at step $t$, as a function of $(\calL,K,E)$, as follows
\[    
    H_{\calA}^{\calL,K,E}(t) := \bbE\!\left[\frac{|\calA(E)_t\setminus K|}{t} \right],
\]
where the expectation is over the internal randomness of $\calA$. We say that $\calA$ has vanishing hallucination rate if, for every countable collection $\calL$, we have
\[
    \sup_{K\in\calL}\sup_{E\in\calE(K)}\lim_{t\to\infty}H_{\calA}^{\calL,K,E}(t)=0.
\]

\textbf{Breadth and Timeliness.} 
For a deadline function $D : \bbN \to \bbN$ and two sequences $S=(s_1,s_2,\ldots)$ and $R=(r_1,r_2,\ldots)$, we define the \emph{timely element-wise density of $S$ in $R$ under deadline $D$} at index $i$ as follows
\begin{align*}
    \mu^{\el}_i(S,R,D) = \frac{\sum_{j=1}^i  \ind\{r_j \in S_{D(j)}\}}{i}.
\end{align*}
The generator’s objective is to maximize the (lower) density of timely outputs, in the limit over prefixes of the target language. Formally, for any algorithm $\calA$ and deadline function $D$, we define its timely lower density, taken in the limit for a worst-case instance,  as follows.
\begin{align*}
    \mu^{\el}_{\low}(\calA, D)
    &= \inf\left\{\inf_{K\in \calL} \inf_{E\in \calE(K)} \liminf_{i \to \infty} \mu^{\el}_i(\calA(E), K, D): \calL \text{ is a countable language collection}\right\}.
\end{align*}
We note that the lower density employed in \cite{KW,KW2} corresponds to employing the trivial deadline function $D(i)=\infty$. There, the remarkable result of \cite{KW2} exhibits a generator which guarantees a lower density of $1/2$, which can easily be seen to be the best possible.\footnote{This is due to the turn-taking nature of the language generation game, whereby an aggressive adversary can ``beat'' the generator to half the strings in the language. This holds even for a trivial collection consisting of a single language.} In what follows, we will frequently use the generalized inverse of the deadline function, defined by $D^{-1}(t) := \min\{n\in\bbN : D(n)\ge t\}$. Intuitively, $D^{-1}(t)$ is the smallest index whose deadline has not yet passed by time $t$.

\subsection{Overview of Results}
Our first result (\cref{thm:finite-hallucination-impossibility}) exhibits a language family where every eventually-consistent generator---such as those in \cite{KW,KW2} and much of the prior work---must suffer a lower density of $0$ in the limit. This motivates a minimal relaxation of consistency, by allowing a vanishing hallucination rate over time. Even with this relaxation, we show another impossibility result (\cref{thm:linear-deadline-impossibility}): Nontrivial timeliness under linear deadlines is also impossible, again suffering a lower density of $0$ in the limit. 

These two impossibilities  set the stage for our main positive result (\cref{thm:lower-density-guarantee}): For any super-linear deadline function, we exhibit a randomized generation algorithm guaranteeing a timely density of $1/2$ with vanishing hallucination rate.  Our guarantees are optimal in several respects: the density guarantee of $1/2$ is the best possible as in \cite{KW2}, superlinear deadlines and some hallucination are necessary by our impossibility results, and furthermore we show quantitatively that our hallucination rate is essentially tight (\cref{lem:deadline-hallucination-impossibility}). Our proof is via a density-preserving black-box reduction to non-timely generation with breadth, into which we plug in the optimal generator of~$\cite{KW2}$. 

Finally, we discuss a more permissive upper-density notion in \cref{sec:upper-density} and show in \cref{app:positive-results} that our results are robust by extending them to general data models (e.g. partial enumeration and contamination models).
\section{Fundamental Limits for Consistent, Dense, Timely Generation}\label{sec:impossibility}

In this section, we establish that
{\em no} eventually consistent generator can achieve nontrivial lower density in a timely fashion under \emph{any} deadline function.

\begin{theorem}[Impossibility for Eventually Consistent Generators]\label{thm:finite-hallucination-impossibility}
    There exists a language family $\calL$ such that for every deadline function $D:\bbN\to\bbN$ and every eventually consistent generator $\calA$,
    \[
        \inf_{K\in\calL}\inf_{E\in\calE(K)}
        \liminf_{i\to\infty}
        \mu^{\el}_{i}(\calA(E),K,D) = 0.
    \]
    It follows that $\mu^{\el}_{\low}(\calA,D)=0$ for every deadline function $D$ and eventually-consistent generator $\calA$.
\end{theorem}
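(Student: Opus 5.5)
The plan is a diagonalization. The adversary pretends, stage by stage, that the target is a language missing a large block of high-priority strings, and only reveals that block after the generator has committed to staying consistent. Let $S=\{s_1,s_2,\dots\}\subseteq U$ be a fixed sparse infinite set, say $S=\{u_{2^k}:k\ge 1\}$, so that $|S\cap\{u_1,\dots,u_N\}|/N\to 0$. Define
\[
L_m=\{u_1,\dots,u_m\}\cup S \quad (m\in\bbN), \qquad \calL=(U,L_1,L_2,\dots).
\]
This family is countable, does not depend on $D$ or $\calA$, and the eventual target will be $K=U$. The point of this choice is that in $U$ the rank of $u_j$ is exactly $j$, so the deadline for $u_j$ is simply $D(j)$.

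Fix $D$ and an eventually consistent $\calA$. We build the enumeration $E$ of $U$ in stages, with a sequence $m_0<m_1<\cdots$ maintaining the invariant that after stage $k$ the revealed prefix is contained in $L_{m_k}$.

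In stage $k$, first consider the continuation $E'_k$: the current prefix followed by the not-yet-revealed elements of $S$ in order. This is a valid enumeration of $L_{m_k}\in\calL$. By eventual consistency there is a finite time $T_k$ after which, on $E'_k$, $\calA$ outputs only elements of $L_{m_k}$. For randomized $\calA$ we take $T_k$ large enough that this holds with probability at least $1-2^{-k}$, and apply Borel--Cantelli at the end.

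Next choose $N_k=m_{k+1}\gg m_k$ so large that
\[
\frac{T_k+m_k+|S\cap\{u_1,\dots,u_{N_k}\}|}{N_k}<\frac1k .
\]
Then choose a waiting time $t_k\ge \max\big(T_k,\max_{j\le N_k}D(j)\big)$. Taking this maximum over all $j\le N_k$ is what lets the argument handle an arbitrary, possibly non-monotone, $D$.

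The adversary follows $E'_k$ up to time $t_k$. It then enumerates $u_{m_k+1},\dots,u_{N_k}$, which keeps the prefix inside $L_{m_{k+1}}$, and moves on to stage $k+1$. Every $u_j$ is eventually revealed, so $E\in\calE(U)$.

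For the accounting, look at the prefix of $K=U$ of size $i=N_k$. A string $u_j$ with $j\le N_k$ is credited only if it lies in $\calA(E)_{D(j)}$, and $D(j)\le t_k$ by the choice of $t_k$. Up to time $t_k$ the run on $E$ coincides with the run on $E'_k$. On that run, outputs after time $T_k$ lie in $L_{m_k}$ (with the stated probability). So every credited $u_j$ either is among the at most $T_k$ outputs made before time $T_k$, or lies in $\{u_1,\dots,u_{m_k}\}\cup S$. This gives
\[
\mu^{\el}_{N_k}(\calA(E),U,D)\le \frac{T_k+m_k+|S\cap\{u_1,\dots,u_{N_k}\}|}{N_k}<\frac1k .
\]
Hence the liminf is $0$, almost surely. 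For a randomized $\calA$ this also gives expected density $0$, by Borel--Cantelli together with the fact that densities are bounded by $1$.

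The final claim $\mu^{\el}_{\low}(\calA,D)=0$ follows immediately, since $\mu^{\el}_{\low}$ is an infimum over collections that includes this $\calL$.

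I expect the main obstacle to be making the stage times legitimate for randomized $\calA$. For a deterministic $\calA$, $T_k$ is a well-defined number determined by the finite prefix, because $E'_k$ is fixed once the prefix is. For a randomized $\calA$, $T_k$ is random and eventual consistency is only assumed per instance. I would handle this by choosing $T_k$ as a $(1-2^{-k})$-quantile of the consistency time on $E'_k$, which is finite because the consistency time is almost surely finite. The adversary must fix $E$ obliviously of the coins, and this works here because $E'_k$ depends only on earlier stage parameters, not on the realized outputs.
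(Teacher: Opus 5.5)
Your proof is correct, with one small repair at the first stage (below). It uses the same staged diagonalization over a measure-zero chain as the paper (your $L_m=\{u_1,\dots,u_m\}\cup S$ is such a chain), but you run it directly on eventually consistent generators instead of deriving the theorem from the Hallucination Barrier lemma, and you make the choices in a different order.

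The paper fixes a waiting time $t_j$ and sets $i_j=D^{-1}(t_j)$. It then needs the hallucinations up to $t_j$ to be $o(i_j)$. That is where the rate hypothesis $o(D^{-1}(t)/t)$, monotonicity of $D$, and the Markov plus Borel--Cantelli step come in.

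You instead use that eventual consistency caps the number of ``bad'' outputs at $T_k$ however long the adversary waits. So you can fix the prefix size $N_k$ first and only then wait until $t_k\ge\max_{j\le N_k}D(j)$. This buys several things:
\begin{itemize}
\item It handles non-monotone $D$ directly. The lemma assumes monotone $D$, so the paper's route implicitly has to pass to the envelope $\max_{j\le i}D(j)$.
\item It uses consistency only on the specific instances $(L_{m_k},E'_k)$. The paper's one-line reduction nominally invokes the lemma's uniform $\sup_{K,E}$ hypothesis, which eventual consistency alone does not supply.
\item It makes ``credited implies output by time $t_k$'' automatic. With $i_j=D^{-1}(t_j)$ one only has $D(i_j)\ge t_j$, so the paper's corresponding step needs an extra adjustment, such as taking $t_j$ in the range of $D$.
\item It needs no probability at all in the deterministic case.
\end{itemize}
In exchange, the paper's route is more general. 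One lemma, whose hallucination count is allowed to grow with $t$, yields both this theorem and the linear-deadline impossibility under vanishing hallucination; your fixed-budget argument gives only the former.

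The repair: at the first stage the prefix is empty, so ``the prefix followed by the remaining elements of $S$'' enumerates $S$, which is not in $\calL$. Reveal $u_1,\dots,u_{m_0}$ first, or let $E'_k$ list all remaining elements of $L_{m_k}$. Then the prefix entering stage $k$ contains $u_1,\dots,u_{m_k}$, and each $E'_k$ is a genuine total enumeration of $L_{m_k}$.
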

This rules out timeliness for the eventually-consistent generators of~\cite{KW,KW2}. More broadly, it shows that eventual consistency itself precludes nontrivial timely density, thus necessitating its relaxation. 

Our next theorem shows that even after relaxing consistency to allow for a vanishing rate of hallucination, no generator can achieve nontrivial lower density under linear deadlines.
\begin{theorem}[Impossibility for Linear-Time Generators]\label{thm:linear-deadline-impossibility}
    There exists a language family $\calL$ such that for any deadline function $D(i)=O(i)$, every generator $\calA$ with vanishing hallucination rate satisfies
    \[
        \inf_{K\in\calL}\inf_{E\in\calE(K)}
        \liminf_{i\to\infty}
        \mu^{\el}_{i}(\calA(E),K,D) = 0.
    \]
    It follows that $\mu^{\el}_{\low}(\calA,D)=0$ whenever  $D(i)=O(i)$ and $\calA$ has a vanishing hallucination rate.
\end{theorem}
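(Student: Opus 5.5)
The plan is to diagonalize against the generator with a family in which the generator cannot tell whether a large, low-rank part of the universe belongs to the target. Because hallucination must vanish, for long stretches it is effectively forbidden from outputting unrevealed strings from that part. A linear deadline then prevents it from compensating later.

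\textbf{Construction.} Split $U$ into two infinite parts $B$ and $R$, interleaved in the preference order so that $B$ is very sparse: only $o(i)$ of the first $i$ strings of $U$ lie in $B$. Take
\[
  \calL=\{L_R\}\cup\{L_S : S\subseteq R \text{ finite}\},\qquad L_R:=B\cup R,\quad L_S:=B\cup S .
\]
This family is countable. The target will be $K=L_R$. Since $B$ is sparse, for large $i$ the prefix $K_i$ consists almost entirely of elements of $R$ of rank at most roughly $i$.

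\textbf{Adversary strategy in phases.} Fix $c$ with $D(i)\le ci$ for all large $i$. Fix $\varepsilon_k\to 0$. In phase $k$ the adversary has already revealed a finite set $S_k\subseteq R$, and it now enumerates only elements of $B$.

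While it does so, the observed prefix is consistent with the target $L_{S_k}$. The generator's distribution over output prefixes is therefore identical to its behavior on a valid enumeration of $L_{S_k}$, namely this prefix followed by the rest of $B$. Vanishing hallucination for that instance gives a time $T_k$, larger than anything so far, such that
\[
  \bbE\bigl[|\calA(E)_{T_k}\cap(R\setminus S_k)|\bigr]\le \varepsilon_k T_k .
\]
The adversary then reveals the next few elements of $R$, for instance the smallest unrevealed one, and moves to phase $k+1$. Revealing one element per phase and growing the phases guarantees that every element of $R$, and of $B$, is eventually enumerated. Hence $E\in\calE(L_R)$, and $|S_k|$ is negligible compared with $T_k$.

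\textbf{Bounding density along the subsequence.} Let $i_k:=\lfloor T_k/c\rfloor$. Every $j\le i_k$ has $D(j)\le T_k$, so any string credited in $\mu_{i_k}$ lies in $\calA(E)_{T_k}$. The credited strings fall into three groups:
\begin{itemize}
  \item elements of $B\cap K_{i_k}$, of which there are $o(i_k)$;
  \item elements of $S_k$, which are at most $|S_k|=o(i_k)$ in number and which the generator cannot output after they are revealed anyway;
  \item elements of $R\setminus S_k$ output by time $T_k$, whose expected number is at most $\varepsilon_k T_k\le 2c\,\varepsilon_k i_k$.
\end{itemize}
Thus $\bbE[\mu_{i_k}]\le 2c\,\varepsilon_k+o(1)\to 0$.

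\textbf{Randomized generators.} To pass from this expectation bound to a $\liminf$ statement for a randomized generator, apply Markov's inequality with $\varepsilon_k$ summable and use Borel--Cantelli. This gives $\mu_{i_k}\to 0$ almost surely, and hence $\liminf_i \mu_i=0$. The step needing the most care is this adaptive definition of $T_k$ against a randomized $\calA$. The adversary's choices may depend only on the observed prefix and not on the random outputs, which is fine because $T_k$ is determined by expectations.

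\textbf{Where linearity enters.} The linear deadline is what makes $i_k$ comparable to $T_k$. With a superlinear $D$, the ratio $T_k/i_k$ would be unbounded, and $\varepsilon_k T_k$ need not be small relative to $i_k$.
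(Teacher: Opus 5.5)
Your proof is correct and is essentially the paper's argument. The paper derives this theorem from \cref{lem:deadline-hallucination-impossibility}, whose proof is the same staged adversary you use: follow an enumeration of a sparse intermediate language until vanishing hallucination forces few outputs outside it, reveal more of the target, and conclude with Markov and Borel--Cantelli along the resulting subsequence. Your languages $B\cup\{r_1,\dots,r_j\}$ (with $r_1,r_2,\dots$ the elements of $R$ in preference order) form exactly a measure-zero chain in the paper's sense. The one difference is minor: you work directly with $D(j)\le cj$ and $i_k=\lfloor T_k/c\rfloor$, which gives $D(j)\le T_k$ for every $j\le i_k$ without needing monotonicity of $D$ or the detour through the prefix-wise metric.
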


This motivates designing timely generators with respect to super-linear deadlines, as we do \cref{sec:positive_result}. 

Both \cref{thm:finite-hallucination-impossibility} and \cref{thm:linear-deadline-impossibility} 
are consequences of the following fundamental lemma, which isolates the necessity of ``gradually'' vanishing hallucination in obtaining a nontrivial lower density.
\begin{lemma}[Hallucination Barrier]\label{lem:deadline-hallucination-impossibility}
   For any monotonically non-decreasing deadline function \smash{$D:\bbN\to\bbN$},  there exists a language family $\calL$ such that for any randomized generator $\calA$, if 
    \[
        \sup_{K\in\calL}\sup_{E\in\calE(K)}H_{\calA}^{\calL,K,E}(t)
        =
        o\!\left(\frac{D^{-1}(t)}{t}\right),
    \] 
    then $\mu^{\el}_{\low}(\calA,D)=0$ almost surely.
\end{lemma}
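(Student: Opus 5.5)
The plan is to force the generator to \emph{guess} the high-preference strings of the target before the adversary reveals them. Timeliness then costs about one hallucination per guessed string. Bounding the hallucination rate by $o(D^{-1}(t)/t)$ will mean only $o(D^{-1}(t))$ wrong guesses by time $t$, which is too few to cover a constant fraction of the strings whose deadlines have passed.

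\textbf{Construction.} I will build the universe from pairs $\{a_j,b_j\}_{j\ge 1}$ placed at the front of the preference order, interleaved with a common infinite ``filler'' set $F=\{f_1,f_2,\ldots\}$. Each language contains all of $F$ and exactly one element of each pair, chosen by a selector $\sigma\in\{a,b\}^{\bbN}$. The filler is placed late enough in $U$ that, in every language, the first $i$ elements contain $\Theta(i)$ selected pair elements. To keep $\calL$ countable I will restrict to selectors that are eventually constant, $L_\sigma=F\cup\{\sigma_j\text{-element of pair }j\}$. The dependence on $D$ enters only through the adversary's schedule, not through $\calL$, and this matches the quantifier order in \cref{lem:deadline-hallucination-impossibility}.

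\textbf{Adversary and counting.} The adversary enumerates filler strings for a long time and reveals the selected element of pair $j$ only after time $D(j')$ for all $j'$ in the current block, so the reveal is always late. It remains a valid enumeration because every element is eventually listed. Consider the pairs $j\le D^{-1}(t)$, whose deadlines have essentially passed by time $t$. The generator gets timely credit for pair $j$ only if it output the correct element before the reveal, i.e.\ blindly. If $\sigma_j$ is uniformly random and independent of everything seen so far, each blind output is wrong with probability $1/2$. If it also outputs the other element of the pair, that one is a hallucination. So crediting a $c$-fraction of these pairs costs, in expectation, at least about $\tfrac{c}{2}D^{-1}(t)$ hallucinations, i.e.\ a rate of $\Omega(c\,D^{-1}(t)/t)$. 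This contradicts the $o(D^{-1}(t)/t)$ hypothesis unless the credited fraction tends to $0$ along a subsequence of prefixes $i\approx D^{-1}(t)$. Monotonicity of $D$ is what makes ``pairs with passed deadlines at time $t$'' correspond to an initial segment of length $\approx D^{-1}(t)$.

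\textbf{Main obstacle: countability versus randomized selectors.} The hypothesis bounds the hallucination rate only for $K\in\calL$, and a uniformly random $\sigma$ is almost surely not eventually constant. I plan to handle this by building $\sigma$ in phases. Suppose $\sigma_{1:m_k}$ is fixed. Every $L_{\sigma'}$ with $\sigma'$ extending this prefix, constant $a$ afterwards, lies in $\calL$, so the hallucination bound applies uniformly to these finitely many languages in the phase. I would then use a Yao-style averaging argument, with random bits on pairs $m_k<j\le m_{k+1}$ versus the generator's internal randomness. This should give a choice of the next block for which, with probability at least $1-2^{-k}$, the timely density at some prefix $i_k$ in that range is below $1/k$. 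The phase length $m_{k+1}$ is chosen large enough for the $o(\cdot)$ bound to kick in.

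The step I expect to be hardest is the limit at the end of this construction. The limiting selector need not be eventually constant, so I will instead take the target to be a single language that is constant after the last phase. I will then argue by Borel--Cantelli that the density drops at infinitely many $i_k$ almost surely. This requires redoing the phase argument \emph{within} one fixed language, delaying reveals further, and I am least certain of exactly how that goes. This gives $\liminf_i\mu_i=0$ almost surely, hence $\mu^{\el}_{\low}(\calA,D)=0$.
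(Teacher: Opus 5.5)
\textbf{The step you flag as hardest is where the argument breaks, and it cannot be repaired inside your family.}

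The paper uses the hypothesis per instance: $H_{\calA}^{\calL,K,E}(t)=o(D^{-1}(t)/t)$ for each fixed $K$ and $E$. That is all that eventual consistency provides, and all that a vanishing rate provides when defined, as here, with the supremum outside the limit. It is also what \cref{thm:finite-hallucination-impossibility} and \cref{thm:linear-deadline-impossibility} need. Under that reading, eventually constant selectors are not a witness.

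Take the following generator. At every step it finds the highest-indexed pair of which an element has appeared in the input, and reads off that pair's type ($a$ or $b$, default $a$). It then outputs the element of that same type from the smallest pair none of whose elements has yet been observed or output. Suppose $\sigma$ is constant beyond $N$. The total enumeration reveals pair $N+1$ at some finite time $T_0$, and after that every guess is correct except on at most $N$ pairs. So this generator is eventually consistent on every $L_\sigma$. Since it always claims the smallest untouched pair, it claims pair $p$ by time $T_0+p$, while the adversary removes at most one pair per step. For, e.g., $D(i)=i^2$ it therefore earns about half the pair elements of every long prefix on time, i.e.\ positive lower density.

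So your random-selector averaging is run against languages outside $\calL$, and a fixed eventually constant target leaves no randomness to exploit. The per-phase step has the same defect in miniature. The set of languages whose bound must have ``kicked in'' by the end of phase $k$ grows with $m_{k+1}$. Choosing $m_{k+1}$ large enough is therefore circular unless the $o(\cdot)$ is uniform in $K$ and $E$.

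Under the literal sup-inside reading, your family can be rescued, and without randomness. Take the all-$a$ target and withhold block $k$ until time $D(i_k)$, where $i_k$ is the target index of pair $m_{k+1}$. Then charge every timely block-$k$ output to hallucinations on the decoy that agrees through pair $m_k$ and is $b$ afterwards. But this proves only a uniform statement, which does not yield \cref{thm:finite-hallucination-impossibility}.

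\textbf{The missing idea is nestedness.} The paper takes a measure-zero chain (\cref{def:nested family}): $L^1\subseteq L^2\subseteq\cdots$ with union $L^\infty$, each $L^j$ of lower density $0$ in $L^\infty$. The target is the single fixed language $L^\infty$. Because $L^j\subseteq L^\infty$, any prefix of a total enumeration $X^{(j)}$ of $L^j$ that extends the current stage is also a legal prefix for $L^\infty$. So one can fix $X^{(j)}$ first, invoke the per-instance bound for $(L^j,X^{(j)})$, and only afterwards choose $t_j$.

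At time $t_j$ the generator's outputs lie mostly in $L^j$, which is sparse in $L^\infty_{i_j}$ where $i_j=D^{-1}(t_j)$. Its outputs outside $L^j$ are $o(i_j)$ in expectation. Markov's inequality with a $2^{-j}$ schedule, followed by Borel--Cantelli, then finishes the proof. The paper's mechanism is thus ``the generator must stay inside a sparse sub-language,'' not ``the generator must guess before the reveal.''

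With pairwise incomparable languages like yours, a total enumeration of a decoy must reveal decoy-only strings at a time fixed in advance. That caps how long the adversary can imitate the decoy, which is exactly what the generator above exploits.

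A secondary fix is also needed. Your counting bounds only the credit from pairs. Filler lies in every language, and a generator can collect about half of it without hallucinating. So filler must have vanishing density in each language; requiring only ``$\Theta(i)$ pair elements'' in each prefix does not suffice.
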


This result shows that any algorithm $\calA$ whose hallucination rate is $o(D^{-1}(t)/t)$ by time $t$ must suffer $\mu^{\el}_{\low}(\calA, D) = 0$ with respect to any deadline function $D$. In fact, it establishes something stronger: there exists a single language collection that serves as a universal witness against all such algorithms.

\begin{proof}[Proof of \cref{thm:finite-hallucination-impossibility} and \cref{thm:linear-deadline-impossibility}]
The first theorem follows by observing that eventual consistency implies the condition of \cref{lem:deadline-hallucination-impossibility} for any deadline function. The second theorem follows by applying the lemma to linear deadline functions.
\end{proof}

\subsection{Discussion of \cref{lem:deadline-hallucination-impossibility}: Hallucination is Necessary for Dense Generation}

We identify a structural property of language families such that any collection containing a subfamily with this property witnesses this impossibility. This structure is a nested sequence of languages that increasingly approximate a limiting language $L^\infty$, while ensuring that each intermediate language becomes arbitrarily sparse within $L^\infty$. This structure forces any generator with vanishing hallucination rate on each of the intermediate languages to achieve zero lower density on the limiting language.

\begin{definition}[Measure-Zero Chain]\label{def:nested family}
    A collection $\calL_{chain} = \{L^\infty\} \cup \bigcup_{j\geq 1} \{L^j\}$ forms a measure-zero chain if $L^1 \subseteq L^2 \subseteq \cdots \subseteq L^\infty$ is a nested sequence of languages for which:
    \[ 
        \bigcup\nolimits_{j\geq 1} L^j = L^\infty \qquad \text{and} \qquad \liminf_{i\to \infty} \mu_i(L^j, L^\infty, \infty) = 0 \qquad \forall j \in \bbN.
    \]
\end{definition}

This structure allows the adversary to present the generator with an enumeration of $L^\infty$ in such a way that, for long intervals of time, the sequence is indistinguishable from an enumeration of some intermediate language $L^j$, before eventually revealing additional elements of the true target. Since the generator must keep hallucinations infrequent with respect to $L^j$, it is forced to behave conservatively and output at most $o(D^{-1}(t))$ elements outside of $L^j$ by time $t$. However, because each $L^j$ captures only a vanishing fraction of $L^\infty$ along suitable prefixes, this conservative behavior prevents the generator from achieving nontrivial timely density in $L^\infty$.

Such families arise naturally in the study of language generation. For instance, the language family in Example~7 of \cite{KW} satisfies this property. For completeness, we provide a proof of this in \cref{app:examples} along with several other examples of language families that have this structure. 

Below we give a proof sketch of \cref{lem:deadline-hallucination-impossibility}. A full proof can be found in \cref{app:impossibility}.

\begin{proof}[Proof Sketch]
    The adversary constructs a total enumeration of $L^\infty$ in stages. At stage $j$, the sequence is consistent with an enumeration of $L^j$ up to time $t_j$ for a suitably large time $t_j$, before proceeding to stage $j+1$. The time $t_j$ is chosen large enough that (i) the expected number of hallucinations of the generator on $L^j$ by time $t_j$ is sufficiently small, and (ii) the density of $L^j$ in the prefix \smash{$L^\infty_{i_j}$} is sufficiently small, where $i_j := D^{-1}(t_j)$. The first condition can be realized because the generator has to have a vanishing hallucination rate in the case the target is $L^j$, while the second follows from the fact that $L^j$ occupies only a vanishing fraction of $L^\infty$ along ordered prefixes.

    Since the adversary’s constructed sequence agrees with an enumeration of $L^j$ up to time $t_j$, the generator produces identical outputs in both possible settings up to this time. In particular, if the generator avoids hallucinating on $L^j$, then almost all of its outputs up to time \smash{$t_j$} must lie in $L^j$. Accordingly, we decompose the outputs contributing to \smash{$L^\infty_{i_j}$} into two parts: elements of $L^j$, which dominate, and hallucinations outside $L^j$. The first contribution is small by (ii), and the second can be made small with high probability via Markov's inequality applied at an appropriately chosen schedule of times $t_j$ from (i), yielding a failure probability that decays exponentially in $j$.

    A Borel--Cantelli argument then implies that, almost surely, the hallucination contribution is small for all sufficiently large $j$. It follows that along the subsequence \smash{$\{i_j\}_{j\in\bbN}$}, the fraction of timely outputs in \smash{$L^\infty_{i_j}$} can be made arbitrarily small, yielding $\mu^{\el}_{\low}(\calA,D)=0$.
\end{proof}

\section{Timely Dense Language Generation with Sparse Hallucination}
\label{sec:positive_result}

In this section, we present a (randomized) timely generative algorithm for dense language generation with sparse hallucination for any super-linear deadline function. 
We prove that our algorithm achieves optimal density guarantees and has an essentially optimal hallucination rate. To simplify upcoming statements, we define the notion of a feasible profile motivated by \cref{lem:deadline-hallucination-impossibility}.
\begin{definition}[Feasible Profile]
    We say $(D,H)$ is a feasible profile if $D:\bbN\to\bbN$ is a monotone non-decreasing, discrete convex,\footnote{Discrete convexity of $D$ is not necessary for the argument, but we assume it here for clarity of presentation.} super-linear function, and $H:\bbN\to\bbR_{\geq 0}$ is a monotone non-increasing vanishing function satisfying $H(t)=\omega(D^{-1}(t)/t)$. 
\end{definition}
We now turn to the positive result, which shows that the impossibility phenomena identified in the previous section can be overcome when the deadline function grows sufficiently quickly.

\begin{theorem}[Dense Generation in Nearly-Linear Time]\label{thm:lower-density-guarantee}
    For any feasible profile $(D,H)$,  there exists a randomized generation algorithm $\calA$ that achieves density $\mu^{\el}_{\low}(\calA,D)=1/2$ and satisfies
    \[
        \sup_{K\in \calL}\sup_{E\in\calE(K)} H_{\calA}^{\calL,K,E}(t) \leq H(t)
    \]
    for all countable collections $\calL$ and sufficiently large $t$.
\end{theorem}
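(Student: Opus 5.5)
The plan is a black-box reduction to non-timely generation with breadth. I would take the generator $\calB$ of \cite{KW2}, which is eventually consistent and has lower density $1/2$ with $D\equiv\infty$, and wrap it in a randomized scheduler that uses a small, controlled stream of extra "lookahead" outputs to move $\calB$'s dense coverage of each prefix $K_i$ before the deadline $D(i)$.

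\textbf{The scheduler.} At each time $t$, independently of everything else, flip a coin with bias $q_t$.
\begin{itemize}
\item With probability $1-q_t$ we take a \emph{base step}: output the next pending element of $\calB$'s output stream, skipping anything already output or observed.
\item With probability $q_t$ we take a \emph{lookahead step}: output the least-preferred-index element not yet produced in the set $\calB$ is currently aiming to cover. This set is the ordered prefix of size $\approx D^{-1}(t)$ of the hypothesis language (the critical language) that the \cite{KW2} construction uses at time $t$ to decide its dense outputs.
\end{itemize}
The schedule $q_t$ will satisfy $q_t\le H(t)/2$ and $q_t t=\omega(D^{-1}(t))$. Such a schedule exists because $H(t)=\omega(D^{-1}(t)/t)$. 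Superlinearity of $D$ gives $D^{-1}(t)=o(t)$, so $q_t$ can also be taken to vanish; monotonicity of $H$ and convexity of $D$ let us make $q_t$ monotone for clean summation.

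\textbf{Hallucination bound.} Hallucinations come from two sources. Base steps hallucinate only finitely often, since $\calB$ is eventually consistent, so they contribute $O(1/t)$ to the rate. Lookahead steps contribute at most $\frac1t\sum_{s\le t}q_s$ in expectation. This average of a non-increasing sequence bounded by $H(s)/2$ is at most $H(t)$ for large $t$, provided we choose $q_s$ to decay slowly enough that $\frac1t\sum_{s\le t} q_s\le \frac{3}{4}H(t)$, say by taking $q_t$ to be a slowly varying minorant. Together these give $H_{\calA}(t)\le H(t)$ for sufficiently large $t$, uniformly over $K$ and $E$ for the lookahead part.

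\textbf{Timeliness and density.} Fix a prefix $K_i$, with deadline $D(i)$. By the Chernoff bound and Borel--Cantelli, almost surely the number of lookahead steps in $[D(i)/2,D(i)]$ is $\Theta(q_{D(i)}D(i))=\omega(i)$ for all large $i$. Once $\calB$'s hypothesis has stabilized in the sense used by \cite{KW2}, those steps suffice to exhaust the size-$\approx i$ prefix of the hypothesis language before time $D(i)$. The \cite{KW2} analysis then says the elements so covered, together with what the adversary has shown, form at least a $(1/2-o(1))$ fraction of $K_i$. Base steps can only add coverage and cost no density.

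The matching upper bound of $1/2$ is the trivial adversary argument from the footnote. I also need to check that strings the adversary reveals before our lookahead reaches them do not count for us. This costs exactly the adversary's half, which is why the constant is $1/2$ and not $1$.

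\textbf{Main obstacle.} The hard step is the claim that, in the middle of the proof, $\calB$ exposes a "target set" whose size-$i$ prefix captures half of $K_i$ at all sufficiently large $i$, and stably enough that lookahead at time $\approx D(i)$ is aiming at the right set. The \cite{KW2} guarantee is stated only about $\calB$'s realized outputs, not about such an internal set. I would first try to extract this from the structure of their algorithm: their dense outputs are produced by walking down prefixes of a sequence of candidate languages. Failing that, I would use a pure black-box fallback: simulate $\calB$ on the observed prefix $e_{1:t}$ padded with its own previously certified outputs to "fast-forward" it. This fallback needs an extra argument that padding does not break $\calB$'s density guarantee.
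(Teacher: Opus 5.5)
\textbf{There is a genuine gap.} It sits at the step you flag as the main obstacle, and it is a fundamental problem, not a missing interface to \cite{KW2}. Your lookahead aims at a hypothesis that $\calB$ selects because it is \emph{consistent} with the data. The hallucination barrier (\cref{lem:deadline-hallucination-impossibility}) is precisely about why this cannot be timely.

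Concretely, let $\calL$ be a measure-zero chain (\cref{def:nested family}) and $K=L^\infty$. Use the staged enumeration from that lemma's proof, which up to time $t_j$ is a prefix of an enumeration of $L^j$.
\begin{itemize}
\item During stage $j$ the critical language is (a subset of) $L^j$, since $L^j$ is consistent and contained in every consistent language. So at time $D(i_j)\le t_j$ your lookahead steps are exhausting a prefix of $L^j$.
\item Once $t_j$ is chosen past $\calB$'s consistency time in the $L^j$-world, your base steps also lie in $L^j$.
\item So all but $o(i_j)$ of your outputs by $D(i_j)$ lie in $L^j$, while $|L^j\cap K_{i_j}|=o(i_j)$. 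Your algorithm gets density $0$ on this instance, and its lookahead never actually spends the hallucination budget you reserved for it.
\end{itemize}
Hence the claim ``once $\calB$'s hypothesis has stabilized \ldots\ the covered elements form a $(1/2-o(1))$ fraction of $K_i$'' is false. \cite{KW2} bound coverage only in the limit, and here the hypothesis is $L^j$ with $j$ growing; it is never $K$ at the relevant times.

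The padding fallback fails for the same reason. Padding with $\calB$'s own outputs (elements of $L^j$) keeps the simulated data consistent with $L^j$, so fast-forwarding cannot create evidence for $L^\infty$. Moreover, the padded sequence is not an admissible enumeration of $K$, so $\calB$'s guarantee no longer applies to it.

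\textbf{The missing idea} is that speculation must be decoupled from $\calB$ and must hallucinate on purpose. No single data-driven target works; in general only $K$ itself has a prefix reliably containing $K_i$, and $K$ cannot be identified in time.

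In \cref{alg:speculative-blackbox-generation-general}, time is cut into epochs $(D(m-1),D(m)]$. With probability $\delta_m=\min\{1,10C(m)/(D(m)-D(m-1))\}$ the algorithm outputs the next unused element of a uniformly random one of the \emph{first $C(m)$ languages of $\calL$}. Here $C\to\infty$ slowly enough that $\sum_{m\le m(t)}C(m)=o(tH(t))$. Once $C(m)$ exceeds the index of $K$, each epoch yields $10$ expected speculations from $K$ itself. By Chernoff and Borel--Cantelli, at least $i$ of them occur by $D(i)$, so every element of $K_i$ has been output or observed by then. Speculations on wrong languages are the hallucinations that $H(t)=\omega(D^{-1}(t)/t)$ pays for.

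$\calB$ is used only to beat the adversary to a $\rho$ fraction of $K_i$. The paper splits on whether $\calB$ and the adversary have exhausted $K_i$ before $D(i)$. It then controls the $\calB$-outputs displaced by speculative steps with Freedman's inequality, so base steps are not cost-free as you assert.

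\textbf{A separate gap:} your density argument is prefix-wise (cover $K_i$ by $D(i)$), whereas $\mu^{\el}$ requires each $k_j$ by $D(j)$. The paper bridges this with \cref{lem:reduction}. It runs the algorithm on the slower deadline $D_{\pfx}=D\circ s$ with $s(i)=o(i)$, and a correspondingly smaller feasible rate $H_{\pfx}\le H$. You would need an analogous step.
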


This theorem characterizes a sharp tradeoff between the growth of the deadline function and the minimial hallucination rate sufficient to achieve optimal density. In particular, the rate of hallucinations by time $t$ can be set to lie in the range 
\[
    D^{-1}(t)/t
    \;\;\ll\;\; H(t) \;\;\ll\;\; 1.
\]
The upper bound $H(t)=o(1)$ captures our requirement of a vanishing hallucination rate. The lower bound of $H(t)=\omega(D^{-1}(t)/t)$ captures the constraint imposed by our deadline function and demonstrates that our construction is essentially optimal as established in \cref{lem:deadline-hallucination-impossibility}. Crucially, as the deadline function grows more quickly, this constraint weakens, thereby allowing the generator to hallucinate at a much slower rate while still achieving optimal lower density. 

To simplify our proof, we introduce a weaker notion of timely density that relaxes the strict element-wise requirement to one more amenable to analysis. 
\begin{definition}[Prefix-Wise Density]\label{def:prefix-metric}
    For any two sequences $S$ and $T$ and function $F : \bbN\to \bbN$, define
    \begin{align*}
        \mu^\pfx_i(S, R, F) := \frac{|S_{F(i)} \cap R_i|}{i}.
    \end{align*}
    Then for a generator $\calA$ and deadline function $D:\bbN \to \bbN$ define
    \begin{align*}
        \mu^\pfx_{\low}(\calA, D)
        := \inf\left\{\inf_{K \in \calL}\inf_{E \in \calE(K)} \liminf_{i \to \infty} \mu^\pfx_i(\calA(E), K, D) : \calL \text{ is countable}\right\}.
    \end{align*}
\end{definition}

Under the original metric $\mu^{\el}_i$, the generator must generate each element $k_i$ of the target sequence by its designated deadline $t=D(i)$ in order to receive credit, enforcing a per-element notion of timeliness. The prefix density metric measures how many elements of the prefix $K_i$ have been generated by time $D(i)$, regardless of when those elements were produced. Consequently, an element $k_j$ for $j<i$ that is generated after its own deadline $D(j)$ may still contribute to the generator’s performance when measured against a larger prefix $K_i$ provided its generation time falls before $D(i)$. 

The next lemma shows that working with this prefix-wise metric is without loss: any constant lower bound on prefix-wise density for all super-linear deadline functions suffices to obtain the same guarantee under the original element-wise notion.

\begin{lemma}[Reduction Between Density Metrics]\label{lem:reduction}
    Fix any density $\rho\geq 0$ and feasible profile $(D_{\el},H_{\el})$. There exists a feasible profile $(D_\pfx,H_\pfx)$ such that, for every countable $\calL$, any generator $\calA$ satisfying $\mu^\pfx_{\low}(\calA,D_\pfx)\geq \rho$ and
    \[
        \sup_{K\in\calL}\sup_{E\in\calE(K)} H_{\calA}^{\calL,K,E}(t)\leq H_\pfx(t)
    \]
    for all sufficiently large $t$, also satisfies $\mu^{\el}_{\low}(\calA,D_{\el})\geq \rho$ and
    \[
        \sup_{K\in\calL}\sup_{E\in\calE(K)} H_{\calA}^{\calL,K,E}(t)\leq H_{\el}(t)
    \]
    for all sufficiently large $t$.
\end{lemma}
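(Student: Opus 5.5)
The plan is to set $H_\pfx := H_{\el}$ and to take $D_\pfx$ to be $D_{\el}$ evaluated at a slightly shrunken index, $D_\pfx(i) \approx D_{\el}(\lceil \varepsilon_i i\rceil)$. Here $\varepsilon_i \to 0$ is a slowly decaying sequence that we will tune. The hallucination conclusion is then immediate from the hypothesis. The density conclusion comes from a pointwise comparison of the two metrics at each index $i$. The remaining work is to verify that $(D_\pfx,H_\pfx)$ is still a feasible profile.

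\emph{Comparison step.} Fix $\calL$, $K$, $E$, and write $S=\calA(E)$. Consider an index $j$ with $\varepsilon_i i < j \le i$. By monotonicity of $D_{\el}$ we have $D_\pfx(i) \le D_{\el}(\lceil\varepsilon_i i\rceil)\le D_{\el}(j)$. So if $k_j \in S_{D_\pfx(i)}$, then $k_j\in S_{D_{\el}(j)}$, and $k_j$ is counted by $\mu^{\el}_i$. At most $\varepsilon_i i+1$ indices $j\le i$ fall outside this range, so
\[
\mu^{\el}_i(S,K,D_{\el}) \;\ge\; \mu^\pfx_i(S,K,D_\pfx) - \varepsilon_i - \tfrac1i .
\]
Taking $\liminf_{i\to\infty}$ and then the infima over $E$, $K$ and $\calL$ gives $\mu^{\el}_{\low}(\calA,D_{\el})\ge\rho$, since $\varepsilon_i\to0$. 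If the hypothesis only yields density a.s.\ or in expectation, the same inequality holds pathwise, so the argument transfers either way.

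\emph{Choosing $\varepsilon_i$.} We need two properties. First, $D_\pfx$ must be superlinear. Write $g(n)=D_{\el}(n)/n$; then $g\to\infty$, and $g$ is eventually essentially nondecreasing by discrete convexity. We have $D_\pfx(i)/i \approx \varepsilon_i\, g(\varepsilon_i i)$. Imposing $\varepsilon_i\ge i^{-1/2}$ and $\varepsilon_i \ge g(\lceil\sqrt i\rceil)^{-1/2}$ gives $D_\pfx(i)/i \gtrsim \sqrt{g(\sqrt i)}\to\infty$. Second, we need $H_{\el}(t)=\omega(D_\pfx^{-1}(t)/t)$. Since $D_\pfx(i)=D_{\el}(\varepsilon_i i)$, we get $D_\pfx^{-1}(t)\lesssim D_{\el}^{-1}(t)/\varepsilon_{i(t)}$, where $i(t)\approx D_\pfx^{-1}(t)$. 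Let $r(t) := t H_{\el}(t)/D_{\el}^{-1}(t)$; then $r(t)\to\infty$ by feasibility of $(D_{\el},H_{\el})$. It therefore suffices to make $\varepsilon_{i(t)} \ge r(t)^{-1/2}$. Each constraint is a lower bound on $\varepsilon_i$ by a quantity tending to $0$. We therefore take $\varepsilon_i$ to be the maximum of these bounds, made monotone non-increasing. Then $\varepsilon_i\to0$ and both properties hold. Note the circularity between $i$ and $t$ in the second constraint. It can be handled by defining $\varepsilon_i$ through $r$ evaluated at $t=D_{\el}(i)\ge D_\pfx(i)$, and using monotonicity of $H_{\el}$ and of $D_{\el}^{-1}$ to compare.

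\emph{Main obstacle.} The delicate part is making $D_\pfx$ a genuine feasible profile, in particular monotone and discrete convex, without destroying either inequality above. I would first try replacing $i\mapsto D_{\el}(\lceil\varepsilon_i i\rceil)$ by its greatest convex minorant, which is automatically non-decreasing and below the original. Being below preserves the comparison step, because only $D_\pfx(i)\le D_{\el}(j)$ was used there. One must then check that the minorant remains superlinear, which holds because any convex minorant of a function with $f(i)/i\to\infty$ that starts from a finite value still has slopes tending to infinity. One must also check that the minorant's inverse is still $o(tH_{\el}(t))$. If the minorant loses too much, the fallback is to build $D_\pfx$ directly as a piecewise-linear convex function with slowly increasing slopes, lying below $D_{\el}(\varepsilon_i i)$. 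Alternatively, since the footnote states convexity is inessential, one can drop it.
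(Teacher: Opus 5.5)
Your skeleton is the paper's. You set $D_\pfx=D_{\el}\circ s$ for an index map $s(i)=o(i)$ that is still large enough, use the pointwise bound $\mu^{\el}_i\ge\mu^\pfx_i-s(i)/i$ (the paper's $r(i)\le s(i)$), and transfer hallucination via $H_\pfx\le H_{\el}$. Taking $H_\pfx:=H_{\el}$ is legitimate and simpler than the paper. The extra construction $H_\pfx=H_{\el}/\sqrt{a(t)}$ in \cref{lem:pfx-hallucination-construction-time} is unnecessary, because the paper's bound $D_\pfx^{-1}(t)\le\tau(t)=o(tH_{\el}(t))$ already makes $(D_\pfx,H_{\el})$ feasible. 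The real difference is how $s$ is built. The paper makes $s$ eventually dominate $g$ and the functions $q_m(n)=1+\max\{D_{\el}^{-1}(t):\lfloor tH_{\el}(t)/m\rfloor\le n\}$. These are $o(n)$ and state the inverse requirement purely in the index variable, so no $i$/$t$ circularity arises.

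That circularity is where your plan fails as written. The times governed by index $i$ are those with $D_\pfx(i)<t\le D_\pfx(i+1)$, which can lie far below $D_{\el}(i)$. Moreover, $r(t)=tH_{\el}(t)/D_{\el}^{-1}(t)$ is not monotone. Monotonicity of $H_{\el}$ and $D_{\el}^{-1}$ only gives $r(t)\ge tH_{\el}(D_{\el}(i+1))/(i+1)$, which discards the factor $t/D_{\el}(i+1)$. Concretely, take $D_{\el}(i)=2^i$ and $H_{\el}(t)=\log^2 t/t$. Then $r(D_{\el}(i))=\Theta(i)$, so your prescription gives $\varepsilon_i=\Theta(i^{-1/2})$ and $D_\pfx(i)=2^{\Theta(\sqrt i)}$. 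Hence $D_\pfx^{-1}(t)=\Theta(\log^2 t)=\Theta(tH_{\el}(t))$, and $(D_\pfx,H_{\el})$ is not feasible.

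The repair is to use the nondecreasing minorant $\underline r(u):=\inf_{v\ge u}r(v)\to\infty$. Evaluate it at a \emph{lower} bound on the relevant times that does not involve $\varepsilon_i$:
\begin{itemize}
\item If $i=D_\pfx^{-1}(t)-1$, then $\varepsilon_i i<D_{\el}^{-1}(t)$.
\item Combined with $\varepsilon_i\ge i^{-1/2}$, this forces $t>D_{\el}(\lfloor\sqrt i\rfloor)$.
\item So imposing $\varepsilon_i\ge\underline r(D_{\el}(\lfloor\sqrt i\rfloor))^{-1/2}$ yields $\varepsilon_i r(t)\to\infty$, hence $D_\pfx^{-1}(t)=o(tH_{\el}(t))$.
\end{itemize}

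Separately, a non-increasing $\varepsilon_i$ does not make $i\mapsto\lceil\varepsilon_i i\rceil$ non-decreasing. The convex minorant you propose need not be monotone at the start, and you leave its effect on the inverse unchecked. Instead, take the running maximum $s(i)=\max_{i'\le i}\lceil\varepsilon_{i'}i'\rceil$. It stays $o(i)$ and only raises $D_\pfx$, so superlinearity and the inverse bound survive. The paper's $D_\pfx$ is likewise shown only monotone and super-linear, so dropping convexity leaves you on par with it.
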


This reduction also implies that any impossibility result for the $\mu^{\el}_{\low}$ carries over to the $\mu^{\pfx}_{\low}$.

It therefore remains to construct an algorithm that achieves density $\rho$ with respect to the prefix metric for any feasible hallucination rate. To do so, we further reduce the problem to the setting without deadlines. In particular, for any super-linear deadline function, we present a black-box construction that converts any eventually consistent dense generator that performs well in the limit into a timely generator with vanishing hallucination rate, while preserving its prefix-wise density guarantees.

\begin{lemma}[Black-Box Reduction]\label{lem:algorithm-guarantee}
    Fix any density $\rho\geq 0$ and feasible profile $(D,H)$. For any deterministic eventually consistent generator $\calB$ such that $\mu_{\low}(\calB,\infty)=\rho$, there exists a randomized algorithm $\calA=\calA_\calB$ achieving \smash{$\mu^{\pfx}_{\low}(\calA,D)\geq\rho$} almost surely and for all countable collections $\calL$
    \[
        \sup_{K\in\calL}\sup_{E\in\mathcal{E}(K)}H_{\calA}^{\calL,K,X}(t)\le H(t)
    \]
    for all sufficiently large $t$. Furthermore, $\calA$ only makes black-box calls to $\calB$.
\end{lemma}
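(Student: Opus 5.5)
The plan is to build $\calA$ by running $\calB$ on the true enumeration and, at a vanishing random rate, spending extra steps on \emph{catch-up} outputs that front-load the strings $\calB$ would eventually produce in low-rank positions. Concretely, fix $p_t := H(t)/2$. At each time $t$, $\calA$ flips an independent coin with bias $p_t$.
\begin{itemize}
\item On tails (a \emph{regular step}), $\calA$ outputs the next not-yet-produced string in $\calB$'s output stream on $e_{1:t}$, skipping strings $\calA$ has already emitted.
\item On heads (a \emph{catch-up step}), $\calA$ simulates $\calB$ on a padded input. Since $\calE(K)$ allows repetitions, $e_{1:t}$ followed by $e_t$ repeated $M$ times is a legitimate prefix of an enumeration of $K$. $\calA$ runs $\calB$ on this sequence for a large horizon $M=M(t)$, collects $\calB$'s hypothetical outputs, and emits the lowest-ranked (with respect to $U$) such string not yet output or observed.
\end{itemize}
All calls to $\calB$ are black-box, as the statement requires.

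\emph{Hallucination.} Regular steps follow $\calB$ on the real enumeration $E$, so they hallucinate only finitely often by eventual consistency. Catch-up steps are at most all hallucinations. The expected number of catch-up steps by time $t$ is $\sum_{s\le t}p_s$. Because $H$ is vanishing, $\frac1t\sum_{s\le t}H(s)/2 \to 0$. To get the pointwise bound $H_\calA(t)\le H(t)$ I would rescale, taking $p_t = c\,H(t)$ for a small constant $c$, or replacing $H$ by a slightly smaller feasible $H'$ with $\frac1t\sum_{s\le t}H'(s)\le H(t)$. Such an $H'$ exists because $H=\omega(D^{-1}(t)/t)$ leaves room strictly between $D^{-1}(t)/t$ and $H$.

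\emph{Timeliness.} The number of catch-up steps by time $D(i)$ is, in expectation, at least
\[
\sum_{s\le D(i)} p_s \;\ge\; D(i)\,p_{D(i)} \;=\; \omega\bigl(D^{-1}(D(i))\bigr) \;=\; \omega(i),
\]
using monotonicity of $H$ and $H(t)=\omega(D^{-1}(t)/t)$. A Chernoff bound along a geometric subsequence of $i$, followed by Borel--Cantelli, shows that almost surely there are at least $C i$ catch-up steps before $D(i)$ for every constant $C$ and all large $i$. I would then argue two things:
\begin{enumerate}
\item Since $\calB$ has lower density $\rho$ without deadlines, for large $i$ the simulated continuation of $\calB$ contains at least $(\rho-\epsilon)i$ elements of $K_i$.
\item Catch-up steps, by always taking the lowest-ranked available candidate, exhaust those elements within $O(i)$ catch-up steps, since every candidate of rank below $K_i$'s last element lies in $U$ before it.
\end{enumerate}
Together these give $|\calA(E)_{D(i)}\cap K_i|\ge(\rho-\epsilon)i$, which is the prefix-wise bound $\mu^\pfx_{\low}(\calA,D)\ge\rho$.

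\emph{Main obstacle.} The hardest step is justifying that the simulated padded runs yield correct, dense elements of $K_i$. Two issues arise. First, candidates of low $U$-rank outside $K$ could crowd out $K_i$ (the worry in step 2 above). Second, the eventual-consistency time and density guarantee of $\calB$ depend on the enumeration, and each padded sequence $e_{1:t}e_t^{M}$ starts a \emph{different} enumeration. My first attempt would use the fact that for each fixed $t$, the padded prefix extends to a full enumeration of $K$ by appending the rest of $E$. I would then choose $M(t)$ by a diagonal argument so that the padded enumerations converge to $E$ itself as $t$ grows. This would let the guarantee $\mu_{\low}(\calB,\infty)=\rho$ on $E$ transfer to the simulations. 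If this fails, the fallback is to take catch-up candidates only from $\calB$'s actual past outputs on $E$ together with an extended lookahead, accepting a more delicate accounting of which elements of $K_i$ are reached.
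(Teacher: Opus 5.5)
Your step~1 is where the argument breaks, and the failure is not technical: nothing forces the catch-up outputs into $K_i$. The guarantee $\mu_{\low}(\calB,\infty)=\rho$ is a $\liminf$ statement about the whole infinite output of $\calB$ on a total enumeration of $K$. A finite run on $e_{1:t}e_t^{M}$ is not such a run, so no finite-horizon conclusion follows.

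The diagonal choice of $M(t)$ does not help. Although $e_{1:t}e_t^{M(t)}$ agrees with $E$ on its first $t$ coordinates, the simulated outputs after time $t$ respond to the padding, not to $e_{t+1},e_{t+2},\dots$. The elements of $K_i$ that $\calB$ eventually produces on $E$ may appear only after data the adversary withholds until after $D(i)$. Your fallback has the same defect.

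Concretely, take the chain of \cref{ex:block-partition}. Let $\calB$ output the smallest unused element (neither observed nor output) of the smallest $L^j$ containing the data, and run the generator of \cite{KW2} on every other collection. This $\calB$ is deterministic, always consistent on the chain, and greedy, with lower density $1/2$. But while the data lie in $L^j$, every regular output and every simulated output on $e_{1:t}e_t^{M}$ lies in $L^j$. Hence $\calA(E)_t\subseteq L^j$ for every realization of your coins. Now use the adversary from the proof of \cref{lem:deadline-hallucination-impossibility}, which stays inside $L^j$ until $t_j=D(i_j)$ with $|L^j\cap L^\infty_{i_j}|\le 2^{-j}i_j$. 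It forces $\mu^{\pfx}_{i_j}(\calA(E),L^\infty,D)\le 2^{-j}$, hence $\mu^{\pfx}_{\low}(\calA,D)=0$. Your $\omega(i)$ catch-up steps are exactly the conservative outputs the barrier rules out.

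Step~2 also fails on its own. The number of strings of $U$ ranked below $k_i$ need not be $O(i)$ when $K$ is sparse in $U$, so non-$K$ candidates can crowd out $K_i$.

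The missing idea is that speculation must be able to hit $K$ itself, whatever the data or $\calB$ suggest. The paper speculates from the fixed enumeration of $\calL$. In epoch $(D(m-1),D(m)]$ it speculates with probability $\delta_m\asymp C(m)/(D(m)-D(m-1))$, outputting the next unused element of $L^{c}$ for $c$ uniform in $[C(m)]$, where $C\to\infty$ slowly. Once $C(m)$ exceeds the index of $K$, each epoch has $\Theta(1)$ expected speculations from $K$. By Chernoff and Borel--Cantelli there are at least $i$ of them by time $D(i)$. Each emits the earliest element of $K$ unused by either player, so all of $K_i$ has been produced by someone by $D(i)$. Speculations on wrong candidates are what consume the $\omega(D^{-1}(t))$ hallucination budget.

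The rest of the paper's proof is a case split on the time $t_i$ at which $\calB$ and the adversary jointly cover $K_i$. If $t_i\ge D(i)$, the adversary holds at most $(1-\rho+\epsilon/2)i$ elements of $K_i$ at time $D(i)$, and the generator holds the rest. If $t_i<D(i)$, $\calB$'s outputs are deleted only at rate $\delta_m\to0$, and Freedman's inequality bounds the loss.

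Separately, $p_t=cH(t)$ with a constant $c$ does not give $\frac1t\sum_{s\le t}p_s\le H(t)$ when $H$ has steep drops. You need an explicitly constructed budget lying between $D^{-1}(t)$ and $tH(t)$, which is what the paper's $C$ and $b(m)$ provide.
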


Combining these two lemmas with the eventually consistent generator of Kleinberg and Wei \cite{KW}, which achieves lower density $1/2$ in the limit, yields the desired result. We defer a deeper discussion of \cref{lem:reduction} and \cref{lem:algorithm-guarantee} to the next two subsections.

\subsection{Discussion of \cref{lem:reduction}: From Element-Wise to Prefix-Wise Density}\label{sec:reduction}

At a high level, the reduction can be viewed as a controlled time-dilation argument. Rather than attempting to satisfy the element-wise deadlines directly, we construct a slower-growing prefix time scale under which meeting prefix deadlines is sufficient to meet almost all element-wise deadlines.

Given a feasible element-wise profile $(D_{\el},H_{\el})$, we construct a prefix deadline function $D_\pfx$ that grows more gradually. Then, for each prefix index $i$, we compare the prefix deadline $D_\pfx(i)$ with the element-wise deadlines $D_{\el}(j)$ for $j\leq i$. This comparison identifies how many elements have deadlines that occur before the prefix deadline. By construction, any element whose deadline occurs after $D_\pfx(i)$ automatically satisfies its constraint if generated by time $D_\pfx(i)$. Thus, among elements of $K_i$ generated by this time, only those with earlier deadlines may fail to meet their deadline. By choosing $D_\pfx$ to grow sufficiently slowly, we ensure that this set is a vanishing fraction of the prefix, so the loss in density is negligible.

The more delicate issue is controlling hallucinations under this change of time scale. Since the algorithm only provides guarantees at prefix deadlines, to control hallucinations at time $t$ we map to the smallest index $i$ with $D_\pfx(i)\ge t$ and apply the bound at time $D_\pfx(i)$. Therefore, the relevant bound at time $t$ is given by $H_\pfx(D_\pfx(i))$, or equivalently \smash{$H_\pfx(D_\pfx(D_\pfx^{-1}(t)))$}. The challenge is that this replaces control at time $t$ with control at a later time $D_\pfx(i)$, and so the guarantee depends on how much larger $D_\pfx(i)$ is than $t$. 

The choice of $D_\pfx$ must therefore balance two competing requirements: it must remain small relative to $D_{\el}$ so that only a vanishing fraction of elements have earlier deadlines, while not growing so quickly that evaluating the hallucination bound at $D_\pfx(i)$ significantly inflates the budget. This necessitates a coupled choice of $D_\pfx$ and the prefix-wise hallucination rate $H_\pfx$.

Below we give a proof sketch of \cref{lem:reduction}. A full proof can be found in \cref{app:reduction}.

\begin{proof}[Proof Sketch]
    We make the above argument precise. For each prefix index $i$, define
    \[
        r(i):=\max\{j\leq i : D_{\el}(j) < D_\pfx(i)\},
    \]
    to be the number of elements whose element-wise deadlines occur before the prefix deadline $D_\pfx(i)$.

    By definition, any element among $\{k_{r(i)+1},\dots,k_i\}$ that is generated by time $D_\pfx(i)$ automatically satisfies its element-wise deadline. Thus, among the elements of $K_i$ generated by time $D_\pfx(i)$, only those in $K_{r(i)}$ may fail to meet their deadlines. By choosing $D_\pfx$ to grow sufficiently slowly relative to $D_{\el}$, we ensure that $r(i)=o(i)$, so this loss is negligible when measuring density.

    To control hallucinations, we relate time $t$ to the prefix index via
    $
        \tau(t):=\min\{n\in\bbN : D_\pfx(n)> t\}.
    $
    By definition of $\tau(t)$, we have $t < D_\pfx(\tau(t))$. Thus, any hallucination incurred by time $t$ is also incurred by time $D_\pfx(\tau(t))$.

    We construct a prefix hallucination rate $H_\pfx$ that is feasible and satisfies $H_\pfx(t)\leq H_{\el}(t)$ for all sufficiently large $t$. The key step is a delicate choice of $D_\pfx$ so that $\tau(t)$ grows sufficiently slowly relative to $t$, allowing one to construct a feasible prefix hallucination rate $H_\pfx$ such that $H_\pfx(D_\pfx(\tau(t))) \leq H_{\el}(t)$.

    Combining these observations, we obtain that only a vanishing fraction of elements fail to meet their deadlines, and that hallucination bounds transfer, establishing the desired reduction.
\end{proof}

\subsection{Discussion of \cref{lem:algorithm-guarantee}: A Randomized Algorithm for Prefix-Wise Density}

At a high level, the procedure in \cref{alg:speculative-blackbox-generation-general} can be viewed as a controlled mixture of two processes. Rather than relying solely on the blackbox generator $\calB$, which achieves density $\rho$ only in the limit, we introduce a sparse sequence of speculative steps that directly target elements of the prefix. The goal is to ensure that, by time $t=D(i)$, sufficiently many elements of $K_i$ have been generated, while keeping the total number of speculative steps small enough to stay within the hallucination constraint.

\begin{algorithm}
\caption{\textsf{S}peculative \textsf{B}lackbox \textsf{G}eneration}
\label{alg:speculative-blackbox-generation-general}
\begin{algorithmic}[1]
\State \textbf{Parameters:} Deadline function $D:\bbN\to\bbN$, hallucination rate $H:\bbN\to\bbR_{\geq 0}$, blackbox $\calB$
\State Choose an unbounded monotone non-decreasing $C:\bbN\to\bbN$ such that $\sum_{m\le m(t)} C(m) = o(t H(t))$ and $C(m) = o(D(m)-D(m-1))$.
\For{$m=1,2,\dots$}
    \State $\delta_m\gets \min\left\{1,\frac{10\,C(m)}{D(m)-D(m-1)}\right\}$
    \For{$t=D(m-1)+1,\dots,D(m)$}
        \State Sample $\decision_t\sim\Bern(\delta_m)$ and $\choice_t\sim\Uniform([C(m)])$
        \If{$\decision_t=0$}
            \State $o_t\gets \calB(e_{1:t})$ \Comment{$t$-th output of black box}
        \Else
            \State $o_t \gets \mathsf{NextUnused}(L^{\choice_t})$ \Comment{first element in $L^{\choice_t}$ not yet observed/outputted}
        \EndIf
        \State Generate $o_t$
    \EndFor
\EndFor
\end{algorithmic}
\end{algorithm}

We partition time into epochs $[D(m-1)+1,D(m)]$. Within each epoch $m$, we maintain a list $L^1,\ldots,L^{C(m)}$ of candidate languages, and during each round we speculate with probability
\[
    \delta_m \asymp \frac{C(m)}{D(m)-D(m-1)}.
\]
On a speculative step at time $t$, the algorithm selects an index $\choice_t\in[C(m)]$ uniformly at random and outputs the \emph{next unused} element of $L^{\choice_t}$. By this choice of $\delta_m$, the algorithm performs $\Theta(C(m))$ speculative steps in epoch $m$. Moreover, since $C(m)=o(D(m)-D(m-1))$, we have $\delta_m\to 0$, so speculation—and therefore hallucination—becomes increasingly rare across epochs.

\begin{figure}[ht]\label{fig:sbg-intuition}
    \centering

    \begin{tikzpicture}[x=1.2cm,y=1.2cm,>=stealth]

  \def\Dtick{0.35}    
  \def\Atick{0.09}    
  \def\RedHalf{0.07}  
  \def\GreenRad{0.08} 
  \def\Dref{0.1 - \Dtick} 

  \draw[thick,->,black] (0,0) -- (10.25,0);

  \foreach \x/\lab in {
    0/{1},
    1/{D(1)},
    2.5/{D(2)},
    5/{D(i^\star)},
    8.5/{D(i^\star+1)}
  }{
    \draw[thick, black] (\x,\Dtick) -- (\x,-\Dtick);
    \node[below=6pt] at (\x,\Dref) {$\lab$};
  }

  \node at (3.7,-0.65) {$\cdots \cdots \cdots \cdots$};
  \node at (9.75,-0.65) {$\cdots \cdots$};


  \node at (5,0.75) {%
    \tikz[baseline=-0.6ex]{\fill[red] (-0.09,-0.09) rectangle (0.09,0.09);}~Hallucinations
    \quad
    \tikz[baseline=-0.6ex]{\fill[green!70!black] (0,0) circle (\GreenRad);}~Valid Speculations
    \quad
    \tikz[baseline=-0.6ex]{\draw[ultra thick] (0,-0.09) -- (0,0.09);}~Black Box Output
  };

  \foreach \x in {
    0.5,0.75,1.0,1.25,1.75,2.0,2.5,2.75,3.0,3.75,
    4.0,4.25,4.5,5.0,5.5,5.75,6.25,6.75,7.0,7.25,
    7.75,8.25,8.5,8.75,9.25,9.5
  }{
    \draw[ultra thick] (\x,\Atick) -- (\x,-\Atick);
  }
  \draw[ultra thick] (0,\Atick) -- (0,-\Atick);

  \foreach \x in {0.25,1.5,2.25,3.25,3.5,4.75,5.25,6,7.5,8,9}{
    \fill[red] (\x-\RedHalf,-\RedHalf) rectangle (\x+\RedHalf,\RedHalf);
  }

  \foreach \x in {6.5,9.75}{
    \fill[green!70!black] (\x,0) circle (\GreenRad);
  }

\end{tikzpicture}
    
    \caption{Visual for behavior of \textsf{SBG} algorithm: every tick mark represents a generated element and larger tick marks represent deadlines for corresponding prefixes of the target language.}
    \label{fig: Intuition for SBG}
\end{figure}

Since $C(m)\to\infty$, the true target $K=L^{i^\star}$ is eventually included among these candidates. From that point onward, a $1/C(m)$ fraction of speculative steps in epoch $m$ target $K$, so each epoch yields a constant expected number of fresh elements of $K$. Consequently, the total number of successful speculations by time $D(i)$ is $\Omega(i)$. This behavior is illustrated in \cref{fig: Intuition for SBG}.

At the same time, we make a strategic choice of $C$ so that $\sum_{k\le m} C(k)=o(tH(t))$. This ensures that by time $D(m)$ the cumulative number of hallucinations remains within the budget $tH(t)$. Thus, speculation is frequent enough to ensure linear coverage of $K$, while remaining negligible relative to the hallucination budget. 

Below we give a proof sketch of \cref{lem:algorithm-guarantee}. A full proof can be found in \cref{app:algorithm-guarantee}.

\begin{proof}[Proof Sketch]
    Fix any $K\in\calL$ and any enumeration $E\in\calE(K)$. We first bound the expected number of hallucinations of the generator. By our assumption that the blackbox is an eventually consistent generator, it can contribute at most finitely many outputs outside $K$. Furthermore, the expected number of speculative outputs by time $t$ is
    \[
        \sum_{m\le m(t)} \delta_m\cdot(D(m)-D(m-1))
        \asymp
        \sum_{m\le m(t)} C(m),
    \]
    which is $o(tH(t))$ by the choice of $C$. Thus $H_{\calA}^{\calL, K, E}(t)\le H(t)$ for all sufficiently large $t$.

    We next argue that the algorithm speculates sufficiently many elements of $K$. Once $C(m)\ge i^\star$, each round in epoch $m$ speculates a fresh element of $K$ with probability
    \[
    \frac{\delta_m}{C(m)} \asymp \frac{1}{D(m)-D(m-1)}.
    \]
    Summing over an epoch yields a constant expected number of such events, and hence by a Chernoff bound, for sufficiently large $i$, by time $D(i)$ the algorithm speculates $i$ distinct elements of $K$ in expectation with exponentially high probability. It then follows that the probability of the complementary event is exponentially low, which allows us to apply a Borel-Cantelli argument to show that after sufficiently large $i$, the algorithm speculates $i$ elements from $K$ by time $D(i)$ almost surely.
    
    It remains to show that the contribution of $\calB$ is not destroyed by the random deletions induced by speculation. Denote time $t_i$ to be the timestep by which the output of the blackbox and the output of the adversary would have contained all of $K_i$. By the blackbox density guarantee we know that for sufficiently large $i$, the blackbox produces $(\rho-o(1))i$ elements of $K_i$ by the time $t_i$.
    
    We distinguish two cases. If $t_i\ge D(i)$, the guarantee follows directly from the aforementioned guarantee that we speculate sufficiently many elements of $K$ by time $D(i)$ and from the fact that the adversary could not have claimed too many elements in $K_i$ by this time. Otherwise, we partition the blackbox outputs into early and late rounds where a round is late if $\delta_{m(t)}$ is sufficiently small. The early rounds contribute only $o(i)$ elements. On the late rounds, the retention probability is $1-\delta_m=1-o(1)$, and a martingale concentration argument (via Freedman's inequality) shows that–because the decision for retention is independent of the blackbox's output being from $K_i$–only an $o(i)$ fraction of these outputs are discarded with exponentially small failure probability.
    
    Combining these two cases yields $|\calA(X)_{D(i)}\cap K_i|\ge (\rho-o(1))i$ with exponentially high probability and another Borel--Cantelli argument on the complementary event completes the proof.
\end{proof}

In fact, the algorithm satisfies a stronger guarantee: the union of the outputs of the generator and the adversary achieves lower density $1$ under the deadline function $D$ almost surely. Indeed, in the proof of the lower density guarantee for \cref{alg:speculative-blackbox-generation-general}, we show that for all sufficiently large $i$, by time $D(i)$ the generator has almost surely produced at least $i$ elements of $K$. Since these are the earliest unused elements of $K$, each such output either lies in $K_i$ or corresponds to an element of $K_i$ that must have been produced earlier by the generator or the adversary. In either case, every element of $K_i$ is accounted for by time $D(i)$, yielding density $1$.

Finally, the growth of $C$ determines when the generator can be guaranteed to have good density. The generator can only speculate from the target $K=L^{i^\star}$ once it is included in the candidate list, which occurs at epoch $\min\{m : C(m)\ge i^\star\}$. When $C$ grows slowly, this inclusion is delayed, yielding an initial phase in which the generator relies solely on $\calB$, followed by the phase in which speculation produces a linear number of elements of $K$. Therefore, larger delay windows allow $C$ to grow more quickly, so the generator can be guaranteed to achieve good density earlier.

\section{Sporadically Dense Generation with Consistent Generators}\label{sec:upper-density}

While the impossibility results of \cref{sec:impossibility} establish that there exist simple language families $\calL_{chain}$ for which no consistent generator can achieve density across all finite prefixes of every target language $K \in \calL_{chain}$, this limitation can be circumvented under a weaker notion that requires density only infinitely often at isolated indices. We formally define this notion, first introduced by \cite{KW}, below.
\begin{definition}[Upper Density]
    The upper density of a generation algorithm $\calA$ is defined as
    \begin{align*}
        \mu_{\up}^{\el}(\calA, D) &= \inf\left\{\inf_{K\in\calL}\inf_{E\in\calE(K)} \limsup_{i\to\infty} \mu^{\el}_{i}(\calA(E),K,D): \calL \text{ is a countable language collection}\right\}.
    \end{align*}
\end{definition}
Under this notion, it suffices for the generator to achieve high density along an infinite subsequence of indices, rather than maintaining density uniformly over all sufficiently large prefixes. Interestingly this relaxation suffices for eventually-consistent generators to obtain the optimal density in a timely fashion. 

\begin{theorem}[Separation of Upper and Lower Density]\label{thm:total enumeration consistent timely generation}
    Let $D(i) = i$ be the identity function. There exists an eventually consistent generator $\calA$ such that $\mu_{\up}^{\el}(\calA, D) = 1/2$.
\end{theorem}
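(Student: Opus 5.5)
The approach is to adapt the upper-density generator of \cite{KW} and show that its greedy, preference-ordered output is automatically timely along a suitable subsequence of indices. The point is that upper density only asks for good density infinitely often, so the eventual-consistency obstruction behind \cref{lem:deadline-hallucination-impossibility} no longer applies: that argument needs the generator to do badly along \emph{every} late prefix, not just along one subsequence. I will prove the two inequalities separately.

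\emph{Upper bound $\mu_{\up}^{\el}(\calA,D)\le 1/2$.} I take a single-language collection $\calL=\{K\}$ and an adversary that at each time $t$ presents the smallest element of $K$ not yet observed or output. Consider the prefix $K_i$ and count which of its elements can be credited to the generator at all. An element $k_j$ is credited only if the generator produces it by time $D(j)=j\le i$. The generator makes at most $i$ outputs by time $i$. The adversary, by time $i$, has claimed every element of $K_i$ the generator did not already take. A short counting argument should then show that at most $\lceil i/2\rceil$ elements of $K_i$ are credited for every $i$, which gives the bound. This is essentially the footnoted argument of \cite{KW2}, with the deadline only helping the adversary.

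\emph{Lower bound.} The generator runs the critical-language machinery of \cite{KM,KW}. At time $t$ let $C_t$ be the set of languages among $L^1,\dots,L^t$ that are consistent with $e_{1:t}$. Take the highest-indexed critical language $L^{n_t}$, meaning that it is contained in every earlier consistent language. Output the smallest element of $L^{n_t}$, in the order of $U$, that has not yet been observed or output. Eventual consistency follows from \cite{KM}: for large $t$ the true $K=L^{i^\star}$ is critical, so $L^{n_t}\subseteq K$.

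For density I follow the upper-density argument of \cite{KW}. Either $n_t=i^\star$ for all large $t$, or the hypothesis is infinitely often some strict sublanguage $L^{n}\subsetneq K$ whose elements are exhausted or contradicted. In the first case, the generator and adversary together claim elements of $K$ in increasing order, so at time $t$ the generator's output has rank at least about $t$ in $K$. It therefore meets deadline $D(j)=j$. Taking $i\approx 2t$, about half of $K_i$ is credited along this subsequence.

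In the second case I must find infinitely many indices $i$ where the generator has, on time, produced about $i/2$ elements of $K_i$. My plan is to use the times at which the hypothesis switches up to $K$, or to a language whose prefix agrees with $K$ on a long initial segment. On such a window the generator fills an initial segment of $K$ greedily and in order. I then pick $i$ at the end of that window, so that all of its outputs are early in rank.

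\emph{Main obstacle.} The hardest step is the timeliness accounting in the second case. In \cite{KW} the upper-density argument ignores \emph{when} elements are produced, whereas here $k_j$ must appear by time $j$. So I expect to modify the generator slightly. At time $t$ it should output the smallest unused element of its hypothesis whose rank in that hypothesis is at least $t$. This deliberately skips ahead so that every output can still meet its deadline, at the cost of leaving low-ranked elements to the adversary. I then need to recheck that the \cite{KW} subsequence argument still yields density $1/2-o(1)$ along infinitely many prefixes. That verification, especially across switches between sublanguages of $K$, is where I expect most of the work.
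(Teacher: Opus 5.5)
\textbf{There is a genuine gap in your lower bound.} It comes from your choice of hypothesis sequence, not from the timeliness bookkeeping: the highest-indexed critical language of \cite{KM} need never equal $K$, nor agree with $K$ near the current time.

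Take the block-partition chain of \cref{ex:block-partition}, indexed as $(L^\infty,L^1,L^2,\dots)$, with target $K=L^\infty=\bbN$. At time $t$ the critical languages are $L^\infty$ and the smallest chain member $L^{j_0(t)}$ consistent with $e_{1:t}$. Every later chain member strictly contains $L^{j_0(t)}$, so it is not critical. Hence $n_t$ always points to a strict sublanguage of $K$, and there are no times at which the hypothesis ``switches up to $K$.''

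The adversary keeps $j_0(t)=j$ on a window $[t_j,t_{j+1})$ by listing only elements of $L^j$, and it can take $t_j\ge 2^{j+2}$. Every output in that window lies in $L^j$. A timely one has rank at least its output time, hence at least $t_j$, so it is a singleton $b^\star_m=2^m$. This holds with or without your skip-ahead modification. So only $O(\log i)$ elements of $K_i$ (plus finitely many before $t_1$) are ever credited, and the timely upper density is $0$. This is the mechanism of \cref{lem:deadline-hallucination-impossibility}, and it defeats the critical-language generator even for the $\limsup$.

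Your fallback of ``a language agreeing with $K$ on a long initial segment'' does not help. $L^j$ agrees with $K$ on $[1,2^{j+1})$, but all of those elements are already late when $L^j$ becomes the hypothesis. A smaller point: your first-case claim needs the skip-ahead rule. Plain smallest-unused greedy can carry a fixed backlog of late elements from before convergence forever, outputting rank $s-T$ at time $s$.

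\textbf{What is missing, and what the paper uses.} Two ingredients are needed, and the paper's \cref{alg:consistent-generation} supplies both.
\begin{enumerate}
\item A hypothesis sequence that equals $K$ infinitely often and is eventually contained in $K$. This is the $\mathsf{Accurate}$ procedure on which the upper-density result of \cite{KW} is built, not the critical-language rule.
\item A commitment mechanism, because the rounds with $\hat L^t=K$ may be isolated single rounds.
\end{enumerate}
The paper appends each strict-superset guess to a list $G$. $G$ then contains $K$ infinitely often whenever it grows forever, since each return from a strict subset of $K$ to $K$ appends it. The generator produces from $g_m$ via $\mathsf{OnTimeUnused}$ until $\mu^{\el}_t(\cdot,g_m,t)\ge\alpha_m=1/2-2^{-m}$. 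Because $D(i)=i$, this quantity is already final and checkable at time $t$. The alternation bound $\lfloor (t-2T_m)/2\rfloor$ guarantees that every stage ends. This yields eventual consistency, since only finitely many listed guesses are not subsets of $K$. It also yields infinitely many indices with density at least $\alpha_m\to 1/2$.

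For your upper bound, note that the adversary must still enumerate all of $K$, including the generator's outputs. Inserting those at vanishing frequency keeps $\limsup\le 1/2$.
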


This result is strong in that the smallest deadline one can hope to obtain optimal density within is the identity function. This result also implies the first quantitative separation between the upper density and lower density metrics. We defer the proof to \cref{app:upper-density}, where we show that an adaptation of the algorithm of \cite{KW} achieving upper density in the limit can do so under the tightest deadline.

\section{Discussion and Open problems}

Our results highlight a central and somewhat counterintuitive phenomenon: hallucination is not merely an artifact of imperfect generation, but a necessary ingredient for achieving breadth in a manner that reflects inductive bias. In particular, our impossibility results show that any eventually consistent generator must sacrifice timely density, while our constructions demonstrate that allowing controlled hallucination restores the ability to generate densely in a timely fashion. This suggests that hallucination plays a fundamental role in enabling exploration. A deeper investigation of this phenomenon remains an important direction for future work.

For clarity of exposition, we have presented our results in the setting where the adversary sequentially reveals a total enumeration of the target language. However, the scope of our techniques extends well beyond this setting: the reduction is agnostic to the observed sequence and applies whenever a blackbox generator can achieve nontrivial lower density in the limit on any admissible sequence of observations that the adversary may provide (see \cref{app:positive-results} for more details). As an immediate consequence, our results extend to the partial enumeration model of \cite{KW2}, and contamination models of \cite{mehrotra2025languagegenerationinfinitecontamination} yielding generators that achieve nontrivial density in a timely manner in these settings as well.

Finally, several quantitative questions remain open. While we characterize the optimal achievable lower density for arbitrary super-linear deadline functions, our notion of timeliness is tied only to the generation time of individual strings. One could additionally consider timeliness with respect to a preference ordering over the language hypotheses themselves, asking how the time required for meaningful generation depends on the position of the target language within the ordering. Simultaneously controlling timeliness with respect to both individual strings and the underlying language family would yield a more refined dual objective for timely language generation. This leads us to the following conjecture.

\begin{conjecture}[The Golden-Ratio Barrier for Timely Language Generation]
    In our construction, if the deadline function is $D(i)=i^{1+\alpha}$, then the generator does not begin speculating from the target language $K$ until roughly time $s^{1/\alpha}$, assuming $K=L^s$. Balancing these two exponents leads to the relation $\alpha^2+\alpha=1$, whose positive solution is $\alpha=(\sqrt5-1)/2$. Equivalently, the corresponding delay exponent is the golden ratio $\varphi=(1+\sqrt5)/2$. This raises the intriguing possibility of a ``golden ratio law'' for timely language generation: namely, that the optimal tradeoff between string-level timeliness and language-level timeliness is fundamentally governed by the golden ratio.
\end{conjecture}

\bibliographystyle{abbrv}
\bibliography{bibliography}

\appendix
\crefalias{section}{appendix}
\crefalias{subsection}{appendix}
\crefname{appendix}{Appendix}{Appendices}
\Crefname{appendix}{Appendix}{Appendices}
\section{Additional Discussion of Related Work}
\label{app:related}

The  literature on language generation is vast and multifaceted. We therefore cannot do all of it justice, and emphasize only the most relevant papers to our work. 

We work in the framework  of language generation in the limit, as posed by introduced by Kleinberg and Mullainathan~\cite{KM}. This builds on analogous questions for language identification introduced by Gold~\cite{Gold} and studied by Angluin~\cite{Angluin}. 

In contrast to language identification, \cite{KM} show that consistent  generation is possible in general. However, somewhat tellingly their algorithm suffers mode collapse in a manner which seems attributable to a tension with consistency; this sets the stage for much of the subsequent work exploring the interplay  between consistency and breadth---a theme which permeates our paper as well.

Several notions of breadth have been considered in the theoretical literature on language generation.  Strong notions which require the generator to eventually generate from all or almost all unseen strings in $K$ were studied by Kalavasis et al~\cite{kalavasis1,kalavasis2}, leading to largely negative results in general settings for consistent generators. Similar impossibility results also hold for a notion of breadth, called exhaustive generation, as shown by Peale et al~\cite{peale}. 

Things took a turn for the positive with the work of Kleinberg and Wei~\cite{KW}, which proposed measuring breadth with respect to a fixed global order on strings---this is the ``preference order'' we also consider in our paper. They showed that nontrivial breadth guarantees---in the sense of positive density---were achievable by consistent generators with respect to such an order. Subsequent   work by the same authors in \cite{KW2} strengthens those results to an optimal density of $1/2$.

It is this latter line of work which is most related to our approach in this paper. We start with the breadth notions in \cite{KW,KW2} and naturally strengthen them to incorporate timely generation, measured with respect to the same preference order on strings. We find that this modification changes the picture, implying strong impossibility results for consistent generation with timely breadth that are reminiscent of the negative results in aforementioned previous approaches to reconciling breadth and consistency. However, the presence of the global order on strings proves particularly useful here: by prioritizing preferred strings carefully over time, we are able to obtain simultaneous and meaningful guarantees both on timely breadth as well as the hallucination rate.

There are many other works which consider largely orthogonal concerns to ours in language generation. This includes work which explores stronger guarantees on the time to consistency~\cite{cb1, cb2, hanneke, Li}, as well as works which strengthen the adversary to allow for partial or noisy training data~\cite{bai,KW2, mehrotra2025languagegenerationinfinitecontamination, ramanAndRaman}. The latter group of works is consequential to ours to the extent that our positive result---by virtue of its black-box nature---is well-positioned to port over our timely breadth guarantees to such more general settings as we will show in \cref{app:corollaries}.

Finally, we would be remiss not to mention some of the work examining statistical perspectives on hallucination. We single out ~\cite{kalai, omar} as arguing for the inevitability of hallucination in both theory and practice. This is in-keeping with a key message of our paper, that hallucination serves to enable learning with breadth and can be minimized yet not completely eliminated.

\section{Examples of Measure-Zero Chains}\label{app:examples}

In this section we provide examples of language families that fall under \cref{def:nested family}. First we restate the definition of a Measure-Zero Chain.

\begin{definition}[Restatement of \cref{def:nested family}]
    A collection $\calL_{chain} = \{L^\infty\} \cup \bigcup_{j\geq 1} \{L^j\}$ forms a zero support chain if $L^1 \subseteq L^2 \subseteq \cdots \subseteq L^\infty$ is a nested sequence of languages for which:
    \[ 
        \bigcup\nolimits_{j\geq 1} L^j = L^\infty \qquad \text{and} \qquad \liminf_{i\to \infty} \mu_i(L^j, L^\infty, \infty) = 0 \qquad \forall j \in \bbN.
    \]
\end{definition}

We now provide a very simple language family which satisfies this definition.

\begin{example}[Block Partition Construction]\label{ex:block-partition}
    Let $U=\bbN$ with the canonical order, and partition $U$ into consecutive disjoint blocks
    \[
        U=\bigsqcup_{m\geq 0} B_m, \qquad B_m = [2^{m}, 2^{m+1}-1],
    \]
    so that $|B_m|=2^m$. For each $m\geq 0$, let $b_m^\star \in B_m$ denote the first element of $B_m$ in the canonical order, i.e., $b_m^\star = 2^m$.

    Define the limiting language $L^\infty := U$, and for each $j\in\bbN$,
    \[
        L^j := \left(\bigcup_{m\le j} B_m\right) \;\cup\; \left(\bigcup_{m>j} \{b_m^\star\}\right).
    \]

    Then, by construction $L^1 \subseteq L^2 \subseteq \cdots \subseteq L^\infty$ and $\bigcup_{j\ge 1} L^j = L^\infty$. Moreover, for each fixed $j$,
    \[
        \liminf_{i\to \infty} \mu_i(L^j, L^\infty, \infty) = 0,
    \]
    since, up to index $n$, the number of blocks is on the order of $\log n$, and beyond the first $j$ blocks each contributes at most one element to $L^j$. Thus, $|L^j \cap [n]| \le O(\log n)$, while $|L^\infty \cap [n]| = n$, implying that the density vanishes.

    This construction is not tied to the particular dyadic choice of block sizes. More generally, the same example goes through whenever the block sizes $|B_n|$ grow according to any super-constant function of $n$, and the distinguished element $b_n^\star$ may be chosen arbitrarily within each block. The only property used is that each fixed $L^j$ contains all elements from the first finitely many blocks, but only one element from each later block, so its density inside $L^\infty$ vanishes as the block sizes grow.
\end{example}

Interestingly, another instructive example of a measure-zero chain arises in the construction of Kleinberg and Wei in their initial study of lower density in the limit. Below we provide this example and show that it satisfies the definition of a measure-zero chain.

\begin{example}[Marker-Interval Construction (Example 7 in \cite{KW})]\label{ex:marker-interval}
    Let $U=\bbN$ with the canonical order. Define a sequence of marker locations $\{a_i\}_{i\ge 0}$ by $a_0 = 0$, and $a_i = 3^i$ for $i\ge 1$. For each $j\in\bbN$, define the language
    \[
        L^j := \bigcup_{i\ge 0} [a_i, a_i + j],
    \]
    where we can think of this construction as attaching an interval of length $j$ at each marker $a_i$. Let $L^\infty := \bbN$. 
    
    Then, by construction $L^1 \subseteq L^2 \subseteq \cdots \subseteq L^\infty$ and $\bigcup_{j\ge 1} L^j = L^\infty$. Moreover, for each fixed $j$,
    \[
        \liminf_{i\to\infty} \mu_i(L^j, L^\infty, \infty) = 0,
    \]
    since, up to index $n$, the number of markers is on the order of $\log n$, and each contributes at most $j+1$ elements to $L^j$. Thus, $|L^j \cap [n]| \le (j+1)\cdot O(\log n)$, while $|L^\infty \cap [n]| = n$, implying that the density vanishes.

    This construction is not tied to the specific choice $a_i=3^i$. More generally, like in \cref{ex:block-partition}, the same example goes through for any sequence of markers $\{a_i\}$ that grows super-constantly, with intervals of length $j$ attached at each marker. The only property used is that, up to index $n$, the number of markers is $o(n)$, so each fixed $L^j$ captures only $o(n)$ elements of $L^\infty=\bbN$, implying that the density vanishes.
\end{example}

More generally, both constructions above are instances of a common pattern: we partition $U$ into disjoint pieces whose sizes grow, and define $L^j$ by retaining each piece entirely up to some index $j$, while selecting only a vanishing fraction of each subsequent piece. Whenever the total contribution of these partial selections is negligible compared to the full pieces, the resulting chain satisfies
\[
    \liminf_{i\to\infty} \mu_i(L^j, L^\infty, \infty) = 0.
\]
Note that such a language family need only be contained in $\calL$ for the impossibility result of \cref{lem:deadline-hallucination-impossibility} to kick in. 

\section{Impossibility Results}\label{app:impossibility}

Recall the definition in \cref{sec:impossibility} of a measure-zero chain defined in measure-zero chain, which we restate for convenience in \cref{app:examples}. We will now show that any language collection $\calL$ which contains a measure-zero chain witnesses the impossibility result of \cref{lem:deadline-hallucination-impossibility}

\begin{lemma}[Restatement of \cref{lem:deadline-hallucination-impossibility}]
    For any monotonically increasing deadline function $D:\bbN\to\bbN$,  there exists a language family $\calL$ such that for any randomized generator $\calA$, if 
    \[
        \sup_{K\in\calL}\sup_{E\in\calE(K)}H_{\calA}^{\calL,K,E}(t)
        =
        o\!\left(\frac{D^{-1}(t)}{t}\right),
    \]
    then $\mu^{\el}_{\low}(\calA,D)=0$ almost surely.
\end{lemma}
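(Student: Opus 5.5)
We take $\calL$ to be (or to contain) a measure-zero chain $\calL_{chain}=\{L^\infty\}\cup\{L^j\}_{j\ge1}$, for instance the block partition construction of \cref{ex:block-partition}, and fix the target $K=L^\infty$. The adversary builds a single enumeration $E$ of $L^\infty$ in stages $j=1,2,\dots$, with times $t_1<t_2<\cdots$ and indices $i_j:=D^{-1}(t_j)$. During stage $j$ the adversary lists elements of $L^j$ only. It first lists all of $L^\infty_{\ell}\cap L^j$ for the relevant indices, then continues through $L^j$ in order, so that $e_{1:t_j}$ is a prefix of some total enumeration $E^j\in\calE(L^j)$. After time $t_j$ it begins stage $j+1$. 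Since every element of $L^\infty$ lies in some $L^j$, and each stage eventually covers an increasing prefix of $L^\infty$, the resulting $E$ is a genuine enumeration of $L^\infty$.

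The times $t_j$ are chosen inductively. Let $\varepsilon_j:=2^{-j}$. Given $t_{j-1}$, we pick $t_j$ large enough that three conditions hold.

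\emph{(i) Hallucination budget.} The hypothesis $H^{\calL,L^j,E^j}_\calA(t)=o(D^{-1}(t)/t)$ gives that the expected number of outputs outside $L^j$ by time $t_j$ is at most $\varepsilon_j^2\, D^{-1}(t_j)=\varepsilon_j^2 i_j$.

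\emph{(ii) Sparsity.} We require $|L^j\cap L^\infty_{i_j}|\le \varepsilon_j i_j$. This is possible because the measure-zero property gives infinitely many $i$ with small density. Only a liminf is guaranteed, so we must choose $t_j$ so that $D^{-1}(t_j)$ lands on such a good index $i$. Monotonicity of $D$ allows this: take $t_j=D(i)$ for a good $i$, so that $D^{-1}(t_j)\le i$. One has to be careful here to guarantee equality, or else to re-index the bound accordingly.

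\emph{(iii) Growth.} We need $t_j$ large relative to everything shown in earlier stages, so that those earlier elements are $o(i_j)$.

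The main subtlety is reconciling (i) and (ii), since the hypothesis controls hallucination at all large $t$ while sparsity holds only along a subsequence. The fix is to choose $t_j=D(i)$ for a sparse index $i$ that is large enough that (i) already holds at that time.

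Next we bound timely density at $i_j$. An element $k_\ell$ with $\ell\le i_j$ is credited only if it is output by time $D(\ell)\le D(i_j)$, and $D(i_j)$ is about $t_j$ (here we use the choice $t_j=D(i_j)$). Because $e_{1:t_j}$ coincides with a prefix of $E^j$, the generator's outputs up to $t_j$ have the same distribution as under target $L^j$. Each credited output is therefore either in $L^j\cap L^\infty_{i_j}$, which contributes at most $\varepsilon_j i_j$ by (ii), or a hallucination relative to $L^j$. By Markov's inequality, the number of such hallucinations exceeds $\varepsilon_j i_j$ with probability at most $\varepsilon_j=2^{-j}$. This is summable in $j$, so by Borel--Cantelli, almost surely, for all large $j$,
\[
\mu^{\el}_{i_j}(\calA(E),L^\infty,D)\le 2\varepsilon_j\to0 .
\]
Since $i_j\to\infty$, this gives $\liminf_i\mu^{\el}_i=0$ almost surely, and hence $\mu^{\el}_{\low}(\calA,D)=0$.

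The main obstacle is the bookkeeping in the choice of $t_j$. One must ensure simultaneously that $D^{-1}(t_j)$ is a sparse index, that the hallucination bound applies at $t_j$, and that $e_{1:t_j}$ remains extendable to an enumeration of $L^j$. The last point holds automatically, because stage-$j$ elements all lie in $L^j$ and the chain is nested ($L^{j'}\subseteq L^j$ for $j'<j$), so earlier stages are also in $L^j$. A secondary point is that the sequence $E$ is fixed in advance, independently of the generator's randomness. This makes the coupling "same outputs as under $E^j$" exact in distribution, which is what justifies applying Markov's inequality with the $L^j$ hallucination rate.
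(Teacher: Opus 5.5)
Your proposal is correct and follows essentially the same route as the paper: a measure-zero chain, a staged enumeration of $L^\infty$ that agrees through time $t_j$ with an enumeration of $L^j$, and a choice of $t_j$ ensuring both that $L^j$ is sparse in $L^\infty_{i_j}$ and that there are at most $\varepsilon_j^2 i_j$ expected hallucinations relative to $L^j$, followed by Markov's inequality and Borel--Cantelli. Your explicit choice $t_j=D(i)$ for a sparse index $i$ (which gives $D^{-1}(t_j)=i$ exactly when $D$ is strictly increasing) is a worthwhile refinement, because it secures $D(i_j)\le t_j$, which the paper's step $|\calA_R(X^\infty)_{D(i_j)}\cap L^\infty_{i_j}|\le|\calA_R(X^\infty)_{t_j}\cap L^\infty_{i_j}|$ tacitly needs; your growth condition (iii), on the other hand, is unnecessary, since the sparsity and hallucination terms already account for every output up to time $t_j$.
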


\begin{proof}
    We will actually prove this result by directly proving that $\mu^{\pfx}_{\low}(\calA,D)=0$ where $\mu^{\pfx}_{\low}$ is formally defined in \cref{def:prefix-metric}.

    Fix a chain $\calL_{chain}=\{L^\infty\}\cup\{L^j:j\in\bbN\}\subseteq\calL$ as in \cref{def:nested family}. We construct a total enumeration $X^\infty$ of $L^\infty$ incrementally, in stages indexed by $j\in\bbN$. At stage $j$, we temporarily pretend that the target language is $L^j$: we choose an enumeration of $L^j$ extending the prefix constructed so far, follow it for a sufficiently long finite interval ending at time $t_j$, and then pass to stage $j+1$. The times $t_j$ will be chosen so that the generator produces only a negligible fraction of elements from \smash{$L^\infty_{i_j}$} by time $t_j$, where $i_j := D^{-1}(t_j)$. This will force \smash{$\liminf_{i\to\infty}\mu^{\pfx}_i(\calA(X^\infty),L^\infty,D)=0$} almost surely.

    We represent the randomized algorithm as a deterministic map indexed by a random seed: for each seed $R$, let $\calA_R$ denote the corresponding deterministic algorithm.

    Let $t_0:=0$. Suppose inductively that stages $1,\dots,j-1$ have already been defined, so that the current prefix of $X^\infty$ has length $t_{j-1}$. Since the chain is nested and $L^{j-1}\subseteq L^j$, this prefix can be extended to a total enumeration $X^{(j)}$ of $L^j$.

    By the assumed hallucination rate guarantee on $X^{(j)}$ with respect to $L^j$, we have
    \[
        H_{\calA}^{\calL,L^j,X^{(j)}}(t)
        =
        o\!\left(\frac{D^{-1}(t)}{t}\right).
    \]
    In particular,
    \[
        \bbE\!\left[
            \frac{|\calA_R(X^{(j)})_{t}\setminus L^j|}{D^{-1}(t)}
        \right]
        =
        \frac{t}{D^{-1}(t)} \cdot H_{\calA}^{\calL,L^j,X^{(j)}}(t)
        =
        o(1).
    \]

    Furthermore, by \cref{def:nested family}, $\liminf_{i\to\infty}\mu_i(L^j,L^\infty,\infty)=0$. Hence we may choose a sufficiently large time $t_j \ge t_{j-1}$ and define $i_j := D^{-1}(t_j)$ such that
    \[
        \frac{|L^j\cap L^\infty_{i_j}|}{i_j} \le 2^{-(j+1)}
        \qquad\text{and}\qquad
        \bbE\!\left[
            \frac{|\calA_R(X^{(j)})_{t_j}\setminus L^j|}{i_j}
        \right]
        \le 2^{-(2j+1)}.
    \]

    We additionally choose $X^{(j)}$ so that every element of $L^j$ that lies among the first $i_j$ elements of $L^\infty$ and has not yet appeared in the prefix \smash{$X^\infty_{t_{j-1}}$} is enumerated by time $t_j$. Increasing $t_j$ if necessary preserves the above inequalities. We then let $X^\infty$ agree with $X^{(j)}$ on the interval $\{t_{j-1}+1,\dots,t_j\}$. This recursively defines an infinite sequence $X^\infty$. Since $\bigcup_{j\ge1}L^j=L^\infty$, every element of $L^\infty$ appears at some finite stage, and hence $X^\infty$ is a total enumeration of $L^\infty$.

    Now fix $j$. Since $X^\infty$ and $X^{(j)}$ agree through time $t_j$, for every fixed seed $R$, the generator produces identical outputs on both runs up to time $t_j$. Therefore, $\calA_R(X^\infty)_{t_j} = \calA_R(X^{(j)})_{t_j}$. Hence,
    \begin{align*}
        \mu^{\pfx}_{i_j}(\calA_R(X^\infty),L^\infty,D)
        &= \frac{|\calA_R(X^\infty)_{D(i_j)}\cap L^\infty_{i_j}|}{i_j}\\
        &\le \frac{|\calA_R(X^\infty)_{t_j}\cap L^\infty_{i_j}|}{i_j}\\
        &\le \frac{|L^j\cap L^\infty_{i_j}|}{i_j}
        + \frac{|\calA_R(X^{(j)})_{t_j}\setminus L^j|}{i_j}\\
        &= 2^{-(j+1)} + \frac{|\calA_R(X^{(j)})_{t_j}\setminus L^j|}{i_j},
    \end{align*}
    where it suffices to show that the second term is eventually small almost surely. 
    To bound the second term, we consider the event $E_j$ that the generator outputs more than a $2^{-(j+1)}$ fraction of elements outside $L^j$ when run on $X^{(j)}$ by time $t_j$. Then, by Markov's inequality, we have
    \[
        \Pr(E_j) = \Pr\left(\frac{|\calA_R(X^{(j)})_{t_j}\setminus L^j|}{i_j}>2^{-(j+1)}\right)\leq  \frac{2^{-(2j+1)}}{2^{-(j+1)}}=2^{-j}.
    \]

    Since this probability is exponentially decreasing in $j$, the first Borel--Cantelli lemma implies that only finitely many of the events $E_j$ occur. Hence, almost surely, for all sufficiently large $j$,
    \[
        \frac{|\calA_R(X^{(j)})_{t_j}\setminus L^j|}{i_j}
        \le 2^{-(j+1)}.
    \]

    Therefore, almost surely, for all sufficiently large $j$,
    \[
        \mu^{\pfx}_{i_j}(\calA_R(X^\infty),L^\infty,D)
        \le 2^{-(j+1)} + 2^{-(j+1)} = 2^{-j}.
    \]

    Since this bound then holds along the subsequence $\{i_j\}_{j\in\bbN}$, we can therefore conclude that 
    \[
        \liminf_{i\to\infty}\mu^{\pfx}_i(\calA_R(X^\infty),L^\infty,D)=0
    \]
    almost surely. 
    Then, because $L^\infty\in\calL$ and $X^\infty$ is a total enumeration of $L^\infty$, this implies $\mu^{\pfx}_{\low}(\calA_R,D)=0$ almost surely. Finally, since $\mu^{\el}_i \le \mu^{\pfx}_i$, we also have $\mu^{\el}_{\low}(\calA,D)=0$ almost surely.
\end{proof}

\begin{theorem}[Restatement of \cref{thm:finite-hallucination-impossibility}]
    There exists a language family $\calL$ such that for every deadline function $D:\bbN\to\bbN$ and every eventually consistent generator $\calA$,
    \[
        \inf_{K\in\calL}\inf_{E\in\calE(K)}
        \liminf_{i\to\infty}
        \mu^{\el}_{i}(\calA(E),K,D) = 0.
    \]
    Consequently, for every deadline function $D$ and every eventually consistent generator $\calA$, one necessarily has $\mu^{\el}_{\low}(\calA,D)=0$.
\end{theorem}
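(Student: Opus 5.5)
The plan is to derive the theorem from the Hallucination Barrier (\cref{lem:deadline-hallucination-impossibility}), witnessed by a single fixed measure-zero chain. Take $\calL$ to be the block-partition family of \cref{ex:block-partition}, or any collection containing a measure-zero chain $\calL_{chain}$. The point is that the lemma's proof uses only the chain structure, so one collection serves for every deadline function $D$. For a given $D$ and eventually consistent $\calA$, it then suffices to check the lemma's hypothesis
\[
\sup_{K}\sup_{E} H_{\calA}^{\calL,K,E}(t) = o\!\left(\frac{D^{-1}(t)}{t}\right),
\]
or at least the per-language version actually used in the proof.

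\textbf{Step 1: reduce to the per-stage estimate.} The appendix proof of the lemma uses the hypothesis only through one consequence: for each fixed $j$ and each enumeration $X^{(j)}$ of $L^j$,
\[
\bbE\bigl[|\calA_R(X^{(j)})_{t}\setminus L^j|\bigr] = o\bigl(D^{-1}(t)\bigr) \quad \text{as } t\to\infty.
\]
Uniformity over $K$ and $E$ is never needed, because the adversary fixes $X^{(j)}$ before choosing $t_j$. So I would rerun that proof verbatim with only this weaker per-instance condition in place of the hypothesis.

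\textbf{Step 2: verify the estimate from eventual consistency.} Fix a seed $R$. Eventual consistency gives a finite time $T_R$ after which no output leaves $L^j$. Hence $N_R:=|\calA_R(X^{(j)})_t\setminus L^j|$ is bounded in $t$ by an almost surely finite random variable.

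\textbf{Step 3: handle the degenerate case.} If $D^{-1}(t)\to\infty$, then $N_R/D^{-1}(t)\to 0$ for every seed. We still need the expectation to vanish, and this is where the real obstacle lies: $N_R$ might not be integrable. Three routes are available.
\begin{itemize}
\item Interpret eventual consistency as giving a finite $T$ independent of the randomness, as the definition says ``guaranteed''. Then $N_R\le T$, and bounded convergence finishes the argument.
\item Work seedwise. The appendix argument goes through for each deterministic $\calA_R$ separately: replace Markov's inequality by the deterministic bound $N_R\le 2^{-(j+1)}i_j$ for large $t_j$. This requires choosing $t_j$ depending on $R$, which is problematic because the enumeration must be fixed in advance.
\item Run Markov's inequality on $\min(N_R, \cdot)$ after truncating: $\Pr(N_R > \varepsilon D^{-1}(t))\to 0$ by the almost sure finiteness of $N_R$. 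This convergence in probability is all the Markov and Borel--Cantelli step needs. Choose $t_j$ so that this probability is at most $2^{-j}$, then apply Borel--Cantelli.
\end{itemize}
I will adopt the third route, which needs only almost sure finiteness of $N_R$.

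\textbf{Step 4: conclude.} For every $D$ and every eventually consistent $\calA$, the chain-based adversary then forces $\liminf_i \mu^{\pfx}_i = 0$, and hence $\liminf_i\mu^{\el}_i=0$, on $L^\infty\in\calL$. This gives both claims of the theorem.

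If $D^{-1}(t)$ stays bounded, which happens only when $D$ is bounded, every deadline is finite and fixed. Then $\mu^{\el}_i\le |\{j: D(j)<\infty\text{ relevant}\}|$, and the count of credited elements is at most $\max D$, so $\mu^{\el}_i\le \max D/i\to0$ trivially.
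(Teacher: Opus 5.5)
Your proposal is correct and is essentially the paper's proof, which is just the observation that eventual consistency implies the hypothesis of \cref{lem:deadline-hallucination-impossibility} on a collection containing a measure-zero chain. Your additions correctly patch looseness in that one-liner: the chain witness is independent of $D$, the lemma's proof only needs a per-instance bound on each fixed $X^{(j)}$ rather than the uniform-in-$(K,E)$ hypothesis it states, and that bound is only needed in probability. When rerunning the argument, also pass from a non-monotone $D$ to $\bar D(i)=\max_{k\le i}D(k)$, which can only increase each $\mu^{\el}_i$, and choose the sparse index $i_j$ first with $t_j\ge D(i_j)$, since the appendix's choice $i_j=D^{-1}(t_j)$ gives $D(i_j)\ge t_j$, the wrong direction for bounding the outputs by time $D(i_j)$ through those by time $t_j$.
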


\begin{proof}
    This follows from observing that eventual consistency implies the condition in \cref{lem:deadline-hallucination-impossibility}.
\end{proof}

\begin{theorem}[Restatement of \cref{thm:linear-deadline-impossibility}]
    There exists a language family $\calL$ such that for any deadline function $D(i)=O(i)$, every generator $\calA$ with vanishing hallucination rate satisfies
    \[
        \inf_{K\in\calL}\inf_{E\in\calE(K)}
        \liminf_{i\to\infty}
        \mu^{\el}_{i}(\calA(E),K,D) = 0.
    \]
    Consequently, for every deadline function $D(i)=O(i)$ and every generator $\calA$ with vanishing hallucination rate, one necessarily has $\mu^{\el}_{\low}(\calA,D)=0$.
\end{theorem}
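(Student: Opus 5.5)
We will derive this from the Hallucination Barrier (\cref{lem:deadline-hallucination-impossibility}), using as the witness a collection that contains a measure-zero chain, for instance the block-partition family of \cref{ex:block-partition}. The restated proof of the lemma only uses that $\calL$ contains a measure-zero chain, and this requirement does not depend on $D$. So one fixed $\calL$ serves as a witness for every deadline $D$ simultaneously. It therefore suffices to show that, for every $D(i)=O(i)$, a generator with vanishing hallucination rate satisfies the hypothesis of the lemma, namely
\[
    \sup_{K\in\calL}\sup_{E\in\calE(K)}H_{\calA}^{\calL,K,E}(t)=o\!\left(\frac{D^{-1}(t)}{t}\right).
\]

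\textbf{Step 1.} We show that $D^{-1}(t)/t$ is bounded below by a positive constant. If $D(i)\le c\,i$ for all large $i$, then $D(n)\ge t$ forces $n\ge t/c$, so $D^{-1}(t)\ge t/c$ for large $t$ and hence $D^{-1}(t)/t\ge 1/c$. The hypothesis of the lemma thus reduces to the condition that the hallucination rate tends to $0$.

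\textbf{Step 2.} We check that vanishing hallucination rate delivers this. By definition, vanishing hallucination rate gives, for each fixed $(K,E)$, that $H_{\calA}^{\calL,K,E}(t)\to 0$. The proof of the lemma uses exactly this per-instance form: it is applied to a single fixed pair $(L^j,X^{(j)})$ at each stage, and what it needs is $\bbE\bigl[|\calA_R(X^{(j)})_{t}\setminus L^j|/D^{-1}(t)\bigr]\to 0$. For a fixed instance this expectation equals $\frac{t}{D^{-1}(t)}H(t)\le c\,H(t)\to 0$. We can therefore rerun the lemma's argument verbatim:
\begin{itemize}[nosep]
    \item pick $t_j$ so that both the sparsity bound and the expected-hallucination bound hold with $i_j=D^{-1}(t_j)$;
    \item apply Markov's inequality and Borel--Cantelli;
    \item conclude that $\liminf_i \mu^{\pfx}_i=0$ almost surely on the constructed enumeration of $L^\infty$, which forces $\mu^{\el}_i\le\mu^{\pfx}_i$ to have liminf $0$ as well.
\end{itemize}

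\textbf{Step 3.} We handle monotonicity. The lemma assumes $D$ non-decreasing, while $D(i)=O(i)$ need not be. If $D$ is not monotone, we replace it by its running maximum $\bar D(i)=\max_{j\le i}D(j)$. This is still $O(i)$ and dominates $D$, so timely credit under $D$ is at most timely credit under $\bar D$. Hence density $0$ under $\bar D$ implies density $0$ under $D$.

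The infimum statement over $K$ and $E$ follows because $L^\infty\in\calL$ together with the constructed enumeration witnesses a liminf of $0$, at least almost surely. For a deterministic generator this is exact. For a randomized one we read the statement as almost sure, as in the lemma.

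\textbf{Main obstacle.} The one delicate point is the quantifier mismatch in Step 2. The lemma's hypothesis is phrased with a $\sup$ over instances, whereas vanishing hallucination rate puts the limit inside the $\sup$. We resolve this by noting that the lemma's proof only ever invokes the bound instance by instance.
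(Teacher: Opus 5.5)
Your proposal is correct and follows the paper's route: the paper's proof simply applies \cref{lem:deadline-hallucination-impossibility}, implicitly using that $D^{-1}(t)/t$ is bounded below when $D(i)=O(i)$. Your handling of the pointwise-versus-uniform quantifier mismatch, the non-monotone case, and the fact that one measure-zero-chain family $\calL$ serves every $D$ makes explicit what that one-line proof leaves implicit. One small caution for the verbatim rerun: choose $t_j$ in the range of $D$, so that $D(i_j)=t_j$, because $i_j=D^{-1}(t_j)$ only gives $D(i_j)\ge t_j$, while the lemma's step bounding $\calA_R(X^\infty)_{D(i_j)}$ by $\calA_R(X^\infty)_{t_j}$ needs $D(i_j)\le t_j$.
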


\begin{proof}
    This follows immediately from \cref{lem:deadline-hallucination-impossibility} by taking $D(i)=O(i)$.
\end{proof}

\section{A randomized algorithm with optimal lower density guarantees}\label{app:positive-results}

In this section discuss an algorithm capable of reducing obtaining element-wise density guarantees to obtaining density guarantees in the limit. We will in fact do this for a general data model and show that our results therefore extend far beyond the case where the adversary totally enumerates the target language to the generator. We start by defining the more general framework we consider.

We consider the same two-player game between an adversary and a generator where both players have access to $\calL$ and $U$. The adversary first selects a target language $K \in \calL$ and generates an observation sequence $X = (x_1, x_2, \ldots)$, where $X \in \mathfrak{D}(K)$ and $\mathfrak{D} : \calL \to \calP(U^\bbN)$ is a data model specifying the set of admissible observation sequences for each target language. The precise structure of $\mathfrak{D}(K)$ depends on the model and we defer formal instantiations of these models to later in this section.

The generator observes the sequence $X$ sequentially and after observing the prefix $x_{1:t}$, it outputs an element $o_t \in U$ that has not previously been output. Let $\calA(X) = (o_1, o_2, \ldots)$ denote the ordered output sequence of the generator. 

The generator’s objective, as before, is to maximize the density of its output within the target language. Formally, we define the upper and lower element-wise densities as
\begin{align*}
    \mu_{\up}^{\el}(\calA, D)
    &= \inf_{K\in\calL}\inf_{X\in\mathfrak{D}(K)} \limsup_{i \to \infty} \mu^{\el}_i(\calA(X), K, D)\\
    \mu^{\el}_{\low}(\calA, D)
    &= \inf_{K\in\calL}\inf_{X\in\mathfrak{D}(K)} \liminf_{i \to \infty} \mu^{\el}_i(\calA(X), K, D).
\end{align*}
and analogously upper and lower prefix-wise densities as,
\begin{align*}
    \mu^\pfx_{\up}(\calA, D)
    &= \inf_{K\in\calL}\inf_{X\in\mathfrak{D}(K)} \limsup_{i \to \infty} \mu^\pfx_i(\calA(X), K, D) \\
    \mu^\pfx_{\low}(\calA, D)
    &= \inf_{K\in\calL}\inf_{X\in\mathfrak{D}(K)} \liminf_{i \to \infty} \mu^\pfx_i(\calA(X), K, D),
\end{align*}
The framework above is intentionally agnostic to how the adversary generates the observation sequence. We now describe several natural instantiations.

\textbf{Total Enumeration.}
We recover the classical model of \cite{KM}, where the adversary reveals only elements of the target language. Formally, for each $K\in\calL$, define $\mathfrak{D}_{\mathrm{enum}}(K)$ to be the set of all sequences $X$ such that $x_t \in K$ for every $t$, and every element of $K$ appears at least once in $X$. Thus, $X$ is a total enumeration of $K$, potentially with repetitions. This recovers the standard setting in which the generator receives only truthful observations and eventually sees the entire target language.

\textbf{Partial Enumeration.}
A natural relaxation is to allow the adversary to reveal only part of the target language. For each $\alpha\in[0,1]$ and $K\in \calL$, define the following data model first introduced in \cite{KW2} 
\[
    \mathfrak{D}_{\mathrm{part},\alpha}(K) := \left\{X : x_t\in K \,\,\forall t,\  |X|=\infty,\  \mu_{\low}(X,K,\infty)\ge \alpha \right\}.
\]
Equivalently, for every $K\in \calL$, the sequence $X\in \mathfrak{D}_{\mathrm{part},\alpha}(K)$ is an enumeration, with possible repetitions, of an infinite subset
of $K$ whose lower density in $K$ is at least $\alpha$. In this model, the generator
receives only truthful observations, but may never observe large portions of the target
language. The parameter $\alpha$ controls how much of the target language must be
revealed in the limit.

\textbf{Finite Contamination.}
Another natural relaxation is to allow the adversary both to omit finitely many target strings
and to insert finitely many non-target strings. For each $K\in\calL$, following
\cite{mehrotra2025languagegenerationinfinitecontamination}, define
\[
    \mathfrak{D}_{\mathrm{fin\text{-}cont}}(K)
    :=
    \left\{
        X :
        |X|=\infty,\ 
        |X\setminus K|<\infty,\ 
        |K\setminus X|<\infty
    \right\}.
\]
Equivalently, $X\in\mathfrak{D}_{\mathrm{fin\text{-}cont}}(K)$
is an enumeration, with possible repetitions, of an infinite set whose symmetric difference
with $K$ is finite. In this model, the generator may observe finitely many false positives
and may permanently miss finitely many elements of the target language, but all but finitely
many observed strings are truthful and all but finitely many target strings are eventually
observed.

\textbf{Hybrid Models.}
The framework also accommodates mixtures of the above behaviors. For example, one may consider sequences that enumerate only a subset $S\subseteq K$ while also allowing a bounded on contamination. Such models interpolate between partial information and adversarial corruption, and are useful for studying the robustness of generation guarantees beyond the idealized enumeration setting.

\subsection{Reduction Between Density Metrics}\label{app:reduction}

In this section we provide a proof of the reduction from achieving lower density under the element-wise metric to achieving it under the prefix-wise metric. We start with the lemma statement and break it down into subproblems as per the discussion in \cref{sec:reduction}. We will prove the lemma at the end of this section. Note this reduction is data model agnostic and therefore generalizes to the framework above.

\begin{lemma}[Restatement of \cref{lem:reduction} for General Data Models]\label{lem:general-reduction}
    Fix any density $\rho \in [0,1]$ and feasible profile $(D_{\el},H_{\el})$. There exists a feasible profile $(D_\pfx,H_\pfx)$ such that, for every countable $\calL$, any generator $\calA$ satisfying $\mu^\pfx_{\low}(\calA,D_\pfx)\geq \rho$ and
    \[
        \sup_{K\in\calL}\sup_{X\in\mathfrak{D}(K)} H_{\calA}^{\calL,K,X}(t)\leq H_\pfx(t)
    \]
    for all sufficiently large $t$, also satisfies $\mu^{\el}_{\low}(\calA,D_{\el})\geq \rho$ and
    \[
        \sup_{K\in\calL}\sup_{X\in\mathfrak{D}(K)} H_{\calA}^{\calL,K,X}(t)\leq H_{\el}(t)
    \]
    for all sufficiently large $t$.
\end{lemma}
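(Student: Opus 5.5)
The plan is to build $D_\pfx$ by slowing $D_{\el}$ down by a factor $w(i)\to\infty$ that grows very slowly, and to take $H_\pfx$ as a monotone geometric mean between $H_{\el}$ and the lower barrier $D_\pfx^{-1}(t)/t$. One point simplifies the hallucination part. The hallucination rate $H_\calA^{\calL,K,X}(t)$ is defined per time $t$ and does not depend on the deadline, so there is no time dilation to handle: it is enough to have $H_\pfx(t)\le H_{\el}(t)$ for all large $t$. The real content is that $(D_\pfx,H_\pfx)$ must itself be feasible, which requires $H_\pfx(t)=\omega(D_\pfx^{-1}(t)/t)$. Since $D_\pfx\le D_{\el}$, we have $D_\pfx^{-1}\ge D_{\el}^{-1}$, so slowing the deadline pushes against $H_{\el}$.

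\textbf{Construction of $D_\pfx$.} Write $D_{\el}(n)=n\,\varphi(n)$ with $\varphi\to\infty$ (superlinearity), and let $\psi(t):=t\,H_{\el}(t)/D_{\el}^{-1}(t)$, so $\psi\to\infty$ by feasibility. I will choose a nondecreasing unbounded $w$ growing slowly enough that:
\begin{itemize}
\item $\varphi(\lceil i/w(i)\rceil)/w(i)\to\infty$;
\item $w(D_{\el}^{-1}(t))^2=o(\psi(t))$.
\end{itemize}
Such a $w$ exists by a standard diagonal/slow-growth choice, for example $w=\min\{\sqrt{\varphi(\sqrt{\cdot})},\,\psi^{1/4}\circ D_{\el}\}$ suitably monotonized. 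Then set $D_\pfx(i):=D_{\el}(\lceil i/w(i)\rceil)$, replaced if necessary by its greatest discrete convex minorant, which stays superlinear and below $D_{\el}(\lceil i/w(i)\rceil)$.
\begin{itemize}
\item \emph{Superlinearity} follows from the first condition, since $D_\pfx(i)\approx i\cdot\varphi(i/w)/w$.
\item \emph{Inverse bound:} $D_\pfx^{-1}(t)\lesssim w\cdot D_{\el}^{-1}(t)$ (up to the convexification).
\end{itemize}

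\textbf{Density transfer.} Let $r(i):=\max\{j\le i: D_{\el}(j)<D_\pfx(i)\}$. By monotonicity of $D_{\el}$ and the definition of $D_\pfx$, $r(i)\le\lceil i/w(i)\rceil=o(i)$. Take any $k_j\in K_i$ with $j>r(i)$ that lies in $\calA(X)_{D_\pfx(i)}$. Then $D_{\el}(j)\ge D_\pfx(i)$, so $k_j\in\calA(X)_{D_{\el}(j)}$ and it is credited. Hence
\[
  \mu^{\el}_i(\calA(X),K,D_{\el})\;\ge\;\mu^\pfx_i(\calA(X),K,D_\pfx)-\frac{r(i)}{i}.
\]
Taking $\liminf$ gives $\mu^{\el}_{\low}\ge\rho$ for every $K$ and $X\in\mathfrak{D}(K)$. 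This step is independent of the data model.

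\textbf{Hallucination profile.} Let $a(t):=D_\pfx^{-1}(t)/t$. Using the inverse bound and the second condition on $w$,
\[
  a(t)\lesssim w\cdot D_{\el}^{-1}(t)/t = o\bigl(H_{\el}(t)/w\bigr),
\]
so $a(t)=o(H_{\el}(t))$. Define
\[
  H_\pfx(t):=\sup_{s\ge t}\sqrt{H_{\el}(s)\,a(s)} .
\]
This function has the required properties:
\begin{itemize}
\item It is nonincreasing by construction, and it vanishes.
\item It dominates $\sqrt{H_{\el}(t)a(t)}=\omega(a(t))$, so $(D_\pfx,H_\pfx)$ is feasible.
\item Since $H_{\el}$ is nonincreasing and $a(s)\le\varepsilon H_{\el}(s)\le\varepsilon H_{\el}(t)$ for $s\ge t$ large, we get $H_\pfx(t)\le\sqrt{\varepsilon}\,H_{\el}(t)\le H_{\el}(t)$ eventually.
\end{itemize}
The hallucination guarantee then transfers pointwise in $t$.

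\textbf{Main obstacle.} I expect the main difficulty to be the simultaneous choice of $w$. It must make $r(i)=o(i)$, keep $D_\pfx$ superlinear and (after convexification) monotone convex, and keep $w\,D_{\el}^{-1}(t)/t=o(H_{\el}(t))$. A second concern is checking that convexification does not inflate $D_\pfx^{-1}$ beyond a constant or $w$-factor. If convexification causes trouble, I would fall back on the footnote's remark that convexity is inessential and drop it.
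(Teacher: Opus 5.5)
Your proposal is correct and follows essentially the same route as the paper. Your reindexing $\lceil i/w(i)\rceil$ plays the role of the paper's diagonally constructed $s(i)=o(i)$ in $D_\pfx=D_{\el}\circ s$, the density transfer $\mu^{\el}_i\ge\mu^{\pfx}_i-r(i)/i$ with $r(i)\le s(i)$ is the same, and your $H_\pfx$, like the paper's $H_{\el}/\sqrt{a}$, is a monotonized geometric mean of $H_{\el}$ and $D_\pfx^{-1}(t)/t$, so the hallucination bound transfers pointwise in $t$. Your concern about discrete convexity is legitimate but goes beyond the paper, which never checks that its $D_{\el}\circ s$ is convex; two small fixes are still needed on your side: the inverse bound actually evaluates $w$ at $D_\pfx^{-1}(t)$ rather than at $D_{\el}^{-1}(t)$, and $i\mapsto D_{\el}(\lceil i/w(i)\rceil)$ can dip where $w$ jumps, and a sufficiently slow, monotonized $w$ handles both.
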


\begin{proof}
    Apply \cref{lem:pfx-window-construction-time} to obtain $D_\pfx$ such that we have $r(i)=o(i)$ and $\tau(t)/t=o(H_{\el}(t))$. Then apply \cref{lem:pfx-hallucination-construction-time} to obtain $H_\pfx$ with $H_\pfx(t)=\omega(D_\pfx^{-1}(t)/t)$ and $H_\pfx(t)\leq H_{\el}(t)$ for all sufficiently large $t$. Thus $(D_\pfx,H_\pfx)$ is a feasible profile.

    Now fix any countable $\calL$, and let $\calA$ be any generator satisfying $\mu^\pfx_{\low}(\calA,D_\pfx)\geq \rho$ and
    \[
        \sup_{K\in\calL}\sup_{X\in\mathfrak{D}(K)}H_{\calA}^{\calL,K,X}(t)\leq H_\pfx(t)
    \]
    for all sufficiently large $t$. Since $H_\pfx(t)\leq H_{\el}(t)$ for all sufficiently large $t$, we immediately get
    \[
        \sup_{K\in\calL}\sup_{X\in\mathfrak{D}(K)}H_{\calA}^{\calL,K,X}(t)\leq H_{\el}(t)
    \]
    for all sufficiently large $t$.

    It remains to prove the density guarantee. Fix $K\in\calL$ and $X\in\mathfrak{D}(K)$. For any $i$, every element among the first $i$ elements of $K$ that is generated by time $D_\pfx(i)$ is generated before its element-wise deadline under $D_{\el}$, except possibly those among the first $r(i)$ elements. Indeed, if $j>r(i)$, then $D_{\el}(j)\geq D_\pfx(i)$ by definition of $r(i)$. Therefore
    \[
        \mu_i^{\el}(\calA(X),K,D_{\el})
        \geq
        \mu_i^\pfx(\calA(X),K,D_\pfx)-\frac{r(i)}{i}.
    \]
    Taking lower limits and using $r(i)=o(i)$ gives
    \[
        \mu^{\el}_{\low}(\calA,D_{\el})
        \geq
        \mu^\pfx_{\low}(\calA,D_\pfx)
        \geq
        \rho,
    \]
    which completes the proof.
\end{proof}

\begin{lemma}\label{lem:countable-diagonal-domination}
    Let $f_1,f_2,\ldots:\bbN\to\bbN$ be monotone nondecreasing functions satisfying $f_m(i)=o(i)$ for every fixed $m\in\bbN$. Then there exists a monotone nondecreasing function $s:\bbN\to\bbN$ such that $s(i)\to\infty$, $s(i)=o(i)$, and, for every fixed $m\in\bbN$, $s(i)\geq f_m(i)$ for all sufficiently large $i$.
\end{lemma}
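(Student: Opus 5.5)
We would build $s$ by a standard diagonal argument: choose increasing thresholds and, on successive blocks of indices, let $s$ dominate more and more of the $f_m$. First replace the $f_m$ by their running maxima $g_m(i):=\max\{f_1(i),\dots,f_m(i)\}$. Each $g_m$ is monotone nondecreasing, satisfies $g_m(i)=o(i)$ as a finite maximum of $o(i)$ functions, and satisfies $g_m\le g_{m+1}$ pointwise. It therefore suffices to dominate every $g_m$ eventually.

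\textbf{Choosing the thresholds.} Since $g_m(i)/i\to 0$, we pick a strictly increasing sequence $N_1<N_2<\cdots$ such that for all $i\ge N_m$ we have $g_m(i)\le i/m$. We also require $N_m\ge m^2$, which lets the additive term below be absorbed.

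\textbf{Defining $s$.} For $N_m\le i<N_{m+1}$, set
\[
s(i):=\max\{g_m(i),\,m\},
\]
and for $i<N_1$ set $s(i):=1$. The three required properties then follow.

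\emph{Domination.} Fix $m$. For $i\ge N_m$, let $m'\ge m$ be the index with $i\in[N_{m'},N_{m'+1})$. Then $s(i)\ge g_{m'}(i)\ge g_m(i)\ge f_m(i)$.

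\emph{Sublinearity.} On the block indexed by $m$ we have $s(i)\le \max\{i/m,\,m\}$. Since $i\ge N_m\ge m^2$, we get $m\le i/m$, so $s(i)\le i/m$. As $i\to\infty$ the block index $m\to\infty$, hence $s(i)/i\to 0$.

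\emph{Unboundedness.} We have $s(i)\ge m$ on block $m$, so $s(i)\to\infty$.

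\textbf{Main obstacle: monotonicity.} Within a block, $s$ is monotone because $g_m$ and the constant $m$ are both monotone. At a block boundary $N_{m+1}$ the value moves from $\max\{g_m(N_{m+1}-1),m\}$ to $\max\{g_{m+1}(N_{m+1}),m+1\}$. This does not decrease, because $g_{m+1}(N_{m+1})\ge g_m(N_{m+1})\ge g_m(N_{m+1}-1)$ and $m+1>m$. Between the block $i<N_1$ and block $1$ there is no issue, since $s\ge 1$ everywhere. A direct alternative definition such as $\max_m$ of truncated $f_m$ would break either monotonicity or sublinearity; the block construction with nested $g_m$ is what avoids both problems at once.
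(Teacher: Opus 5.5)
Your proof is correct and is essentially the paper's construction: the paper also picks $N_k \ge k^2$ with $f_m(i)\le i/k$ for all $m\le k$ and $i\ge N_k$, and sets $s(i)=\max\{k,f_1(i),\dots,f_k(i)\}$ on $[N_k,N_{k+1})$, which is exactly your $\max\{g_k(i),k\}$. Your separate definition $s(i):=1$ for $i<N_1$ is slightly cleaner than the paper's choice $N_1=1$, since that choice would require $f_1(i)\le i$ for every $i$, which the hypotheses do not guarantee.
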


\begin{proof}
    For each $k\in\bbN$, since $f_m(i)=o(i)$ for every $m\leq k$, choose $N_k$ sufficiently large so that $N_1=1$, $N_k\geq k^2$, $N_{k+1}>N_k$, and $f_m(i)\leq i/k$ for every $m\leq k$ and every $i\geq N_k$. For $i\in[N_k,N_{k+1})$, define $s(i):=\max\{k,f_1(i),\ldots,f_k(i)\}$. Then $s$ is monotone nondecreasing. Moreover, for $i\in[N_k,N_{k+1})$ we have $s(i)\geq k$, so $s(i)\to\infty$. Also $s(i)\leq \max\{k,i/k\}$, and since $i\geq N_k\geq k^2$, we have $s(i)/i\leq \max\{k/i,1/k\}\leq 1/k$. Since $k\to\infty$, it follows that $s(i)=o(i)$. Finally, for every fixed $m$, whenever $i\geq N_m$, the interval containing $i$ has index at least $m$, and hence $s(i)\geq f_m(i)$.
\end{proof}

\begin{lemma}\label{lem:pfx-window-construction-time}
    Let $(D_{\el},H_{\el})$ be a feasible profile. Then there exists a monotone nondecreasing super-linear function $D_\pfx:\bbN\to\bbN$ such that, defining
    \[
        r(i):=\max\{j\leq i:D_{\el}(j)<D_\pfx(i)\}
        \qquad\text{and}\qquad
        \tau(t):=\min\{n\in\bbN:D_\pfx(n)> t\},
    \]
    we have $r(i)=o(i)$ and $\tau(t)/t=o(H_{\el}(t))$.
\end{lemma}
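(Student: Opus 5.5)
The plan is to build $D_\pfx$ as a time-dilated copy of $D_{\el}$,
\[
    D_\pfx(i) := \max\{D_{\el}(s(i)),\; i\,w(i)\},
\]
for a carefully chosen sublinear index map $s(i)=o(i)$ with $s(i)\to\infty$ and a very slowly growing unbounded $w$. The two requirements pull in opposite directions. The bound $r(i)=o(i)$ wants $D_\pfx$ small relative to $D_{\el}$. The bound $\tau(t)/t=o(H_{\el}(t))$ wants $D_\pfx$ large, since $\tau(t)\le D_\pfx^{-1}(t)+1$ roughly. The slack that lets both hold is exactly feasibility: writing $g(t):=tH_{\el}(t)$, we have $g(t)=\omega(D_{\el}^{-1}(t))$.

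\textbf{Step 1 (the bound on $r$).} If $D_\pfx(i)\le D_{\el}(s'(i))$ for some $s'(i)=o(i)$, then every $j$ with $D_{\el}(j)<D_\pfx(i)$ satisfies $j<s'(i)$ by monotonicity, so $r(i)\le s'(i)=o(i)$. The first term of the max is handled with $s'=s$. For the second term, $D_{\el}$ is superlinear, so $D_{\el}(n)/n\to\infty$. Hence I can pick $w$ growing slowly enough that $i\,w(i)\le D_{\el}(\lceil i/k\rceil)$ for all $i$ beyond a threshold $N_k$. This is a diagonal choice in the style of \cref{lem:countable-diagonal-domination}, and it makes the second term also dominated by $D_{\el}$ at a sublinear index. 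The term $i\,w(i)$ also guarantees that $D_\pfx$ is superlinear and monotone.

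\textbf{Step 2 (the bound on $\tau$).} Since $D_\pfx(n)\ge D_{\el}(s(n))$, we get $\tau(t)\le \min\{n: s(n)> D_{\el}^{-1}(t)\}=:s^{-1}(D_{\el}^{-1}(t))$, up to an off-by-one. It therefore suffices that $s^{-1}(m)=o\big(g(D_{\el}(m))\big)$ uniformly along $m=D_{\el}^{-1}(t)$.

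Feasibility gives $\psi(t):=g(t)/D_{\el}^{-1}(t)\to\infty$. I replace $\psi$ by a monotone nondecreasing unbounded minorant $\underline\psi$, which exists because $\psi\to\infty$. I then define $s$ so that $s^{-1}(m)\approx m\sqrt{\underline\psi(D_{\el}(m))}$, that is, $s(i)\approx i/\sqrt{\underline\psi(\cdot)}$, after smoothing $s$ to be monotone with $s(i)\to\infty$ and $s(i)=o(i)$. Then
\[
    \tau(t)\lesssim D_{\el}^{-1}(t)\sqrt{\underline\psi(t)}=o\big(D_{\el}^{-1}(t)\,\psi(t)\big)=o(tH_{\el}(t)),
\]
which is the desired $\tau(t)/t=o(H_{\el}(t))$.

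\textbf{Main obstacle.} The hardest part will be the bookkeeping with generalized inverses, since $D_{\el}(D_{\el}^{-1}(t))$ may overshoot $t$ and $D_{\el}$ may be flat. A related difficulty is that $g(t)=tH_{\el}(t)$ need not be monotone: $H_{\el}$ is only non-increasing. Together these make passing between the index $m$ and the time $t$ in Step 2 delicate. My first approach is to define everything in terms of $t$, taking $\underline\psi(t):=\inf_{t'\ge t}\psi(t')$ truncated to stay finite. I would then build $s$ piecewise on blocks $[N_k,N_{k+1})$, exactly as in \cref{lem:countable-diagonal-domination}, so that $s(i)\le i/k$ and $s^{-1}$ inflates indices by at most a factor $k\le\sqrt{\underline\psi}$ on each block. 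I would verify both inequalities blockwise, absorbing the constant off-by-one errors into the $o(\cdot)$ terms.
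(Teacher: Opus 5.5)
Your Step 1 and overall architecture are sound and essentially the paper's. You time-dilate $D_{\el}$ by a diagonally chosen sublinear index map $s$, so that $r(i)\le s(i)=o(i)$ by monotonicity. Your extra $i\,w(i)$ term for superlinearity is fine; the paper instead lets $s$ eventually dominate some $g=o(i)$ with $D_{\el}(g(i))/i\to\infty$.

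\textbf{The gap is Step 2, which is the real content of the lemma.} What must hold is that, for every large $t$, $s$ already exceeds $D_{\el}^{-1}(t)$ at index $\alpha\,tH_{\el}(t)$. Since $D_{\el}^{-1}\equiv m$ on the whole epoch $(D_{\el}(m-1),D_{\el}(m)]$, this constraint binds at the \emph{left} end of the epoch, where $tH_{\el}(t)$ is smallest. Your choice $s^{-1}(m)\approx m\sqrt{\underline\psi(D_{\el}(m))}$, combined with $\tau(t)\le s^{-1}(D_{\el}^{-1}(t))$, evaluates $\underline\psi$ at $D_{\el}(D_{\el}^{-1}(t))\ge t$, i.e.\ at the right end. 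Because $\underline\psi$ is nondecreasing, the first $\lesssim$ in your display goes the wrong way. The loss is not a constant factor. Take $D_{\el}(n)=2^{2^n}$, $H_{\el}(t)=1/\log t$ and $t=D_{\el}(m-1)+1$. Then $D_{\el}(m)\approx t^2$ and $\underline\psi(D_{\el}(m))\asymp t^2/(m\log t)$. So your bound only gives $\tau(t)\lesssim t\sqrt{m/\log t}$, whereas $tH_{\el}(t)=t/\log t$.

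In general, shifting the epoch index by one rescales time from $D_{\el}(m-1)$ to $D_{\el}(m)$, which is an unbounded factor for superlinear $D_{\el}$. So these off-by-one errors cannot be ``absorbed into the $o(\cdot)$ terms.'' Each must be resolved in the correct direction, for instance via $\tau(t)\le\min\{n:s(n)\ge D_{\el}^{-1}(t+1)\}$ together with $D_{\el}(D_{\el}^{-1}(t+1)-1)\le t$. The proposal leaves this undone, and the one explicit choice it states does not yield the bound.

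\textbf{The missing idea is to impose the constraint directly in terms of $t$.} The paper defines
$q_m(n):=1+\max\{D_{\el}^{-1}(t):\lfloor tH_{\el}(t)/m\rfloor\le n\}$.
This function is monotone and is $o(n)$ by feasibility. It sidesteps both the epoch overshoot and the non-monotonicity of $tH_{\el}(t)$. Taking $s$ from \cref{lem:countable-diagonal-domination} to eventually dominate every $q_m$ gives $s(\lfloor\alpha tH_{\el}(t)\rfloor)>D_{\el}^{-1}(t)$. Hence $D_{\el}(s(\lfloor\alpha tH_{\el}(t)\rfloor))>t$, and so $\tau(t)\le\alpha tH_{\el}(t)$ for each $\alpha>0$.

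Your blockwise plan can be repaired in the same spirit. Let $\psi(t)=tH_{\el}(t)/D_{\el}^{-1}(t)$. Choose increasing $N_k\ge k^2$ so that $tH_{\el}(t)\ge N_k$ forces $\psi(t)\ge k^2$; this is possible because $\{t:\psi(t)<k^2\}$ is finite. Let $s$ be the running maximum of $\lfloor i/k\rfloor$ over the blocks $[N_k,N_{k+1})$. Then for $n=\lfloor\alpha tH_{\el}(t)\rfloor$ lying in block $k$, we get $s(n)\ge\alpha k\,D_{\el}^{-1}(t)-2>D_{\el}^{-1}(t)$ once $\alpha k\ge 2$ and $t$ is large. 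As written, however, your proposal never pins down where $\underline\psi$ is evaluated in the block construction.
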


\begin{proof}
    Since $D_{\el}$ is super-linear, there exists a monotone nondecreasing function $g:\bbN\to\bbN$ such that $g(i)=o(i)$ and $D_{\el}(g(i))/i\to\infty$. Now, for each $m\in\bbN$, define
    \[
        q_m(n):=1+\max\left\{D_{\el}^{-1}(t):\left\lfloor \frac{tH_{\el}(t)}{m}\right\rfloor\leq n\right\},
    \]
    with the convention that the maximum of the empty set is $0$. Since $H_{\el}(t)=\omega(D_{\el}^{-1}(t)/t)$, we have $D_{\el}^{-1}(t)=o(tH_{\el}(t))$. Hence, for every fixed $m$, $q_m(n)=o(n)$.

    Apply \cref{lem:countable-diagonal-domination} to the countable family $\{g,q_1,q_2,\ldots\}$. Thus, there exists a monotone non-decreasing function $s:\bbN\to\bbN$ such that $s(i)\to\infty$, $s(i)=o(i)$, and $s$ eventually dominates $g$ and every $q_m$. Now, define $D_\pfx(i):=D_{\el}(s(i))$. Since $s$ and $D_{\el}$ are monotone nondecreasing, $D_\pfx$ is monotone nondecreasing. Moreover, since $s$ eventually dominates $g$, we have $D_\pfx(i)/i = D_{\el}(s(i))/i \geq D_{\el}(g(i))/i \to\infty$, so $D_\pfx$ is super-linear.

    We next show that $r(i)=o(i)$. For any $j\in \bbN$, if $D_{\el}(j)<D_\pfx(i)=D_{\el}(s(i))$, then by monotonicity of $D_{\el}$ we have $j<s(i)$. Therefore, $r(i)\leq s(i)=o(i)$.

    Finally, we prove $\tau(t)/t=o(H_{\el}(t))$. Fix any $\alpha>0$, and choose $m\in\bbN$ such that $1/m\leq \alpha$. Since $s$ eventually dominates $q_m$, for all sufficiently large $t$, $s(\lfloor \alpha tH_{\el}(t)\rfloor) \geq q_m(\lfloor \alpha tH_{\el}(t)\rfloor) > D_{\el}^{-1}(t)$. Therefore $D_\pfx(\lfloor \alpha tH_{\el}(t)\rfloor) = D_{\el}(s(\lfloor \alpha tH_{\el}(t)\rfloor)) > t$. Hence $\tau(t)\leq \lfloor \alpha tH_{\el}(t)\rfloor$ for all sufficiently large $t$. Since $\alpha>0$ was arbitrary, $\tau(t)/t=o(H_{\el}(t))$.
\end{proof}

\begin{lemma}\label{lem:pfx-hallucination-construction-time}
    Let $D_\pfx$ be monotone nondecreasing and super-linear, and define $\tau(t)$ as in \cref{lem:pfx-window-construction-time}. Suppose $H_{\el}$ is a vanishing rate satisfying $\tau(t)/t=o(H_{\el}(t))$. Then there exists a vanishing rate $H_\pfx$ such that $H_\pfx(t)=\omega(D_\pfx^{-1}(t)/t)$ and $H_\pfx(t)\leq H_{\el}(t)$ for all sufficiently large $t$.
\end{lemma}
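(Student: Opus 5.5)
The plan is to build $H_\pfx$ as a geometric mean of an upper and a lower envelope. The lower quantity is $D_\pfx^{-1}(t)/t$ and the upper quantity is $H_{\el}(t)$.

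\textbf{Comparing the two quantities.} Observe that $D_\pfx^{-1}(t)=\min\{n: D_\pfx(n)\ge t\}\le \min\{n:D_\pfx(n)>t\}=\tau(t)$. By hypothesis, $\tau(t)/t=o(H_{\el}(t))$, so
\[
\frac{D_\pfx^{-1}(t)}{t} \le \frac{\tau(t)}{t} = o(H_{\el}(t)).
\]
Write $a(t):=D_\pfx^{-1}(t)/t$ and $b(t):=H_{\el}(t)$. Then $a(t)/b(t)\to 0$, and $b(t)\to 0$ because $H_{\el}$ is vanishing. I also plan to discard the degenerate case $b(t)=0$. Since $a(t)\ge 1/t>0$ while $a=o(b)$, we have $b(t)>0$ for all large $t$.

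\textbf{The construction.} Define $H_\pfx(t):=\sqrt{a(t)\,b(t)}$ for large $t$, and set it arbitrarily (say equal to $1$) for small $t$. The required properties then follow directly.
\begin{itemize}
\item $H_\pfx(t)/a(t)=\sqrt{b(t)/a(t)}\to\infty$, so $H_\pfx=\omega(D_\pfx^{-1}(t)/t)$.
\item $H_\pfx(t)/b(t)=\sqrt{a(t)/b(t)}\to 0$, so $H_\pfx(t)\le H_{\el}(t)$ for all sufficiently large $t$.
\item $H_\pfx(t)\le b(t)$ eventually and $b\to 0$, so $H_\pfx$ is vanishing.
\end{itemize}

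\textbf{Monotonicity.} The one delicate point is that a feasible profile also asks $H_\pfx$ to be non-increasing, and the geometric mean need not be. I would first try replacing it by the running supremum of the tail, $\tilde H(t):=\sup_{s\ge t}\sqrt{a(s)b(s)}$. This is non-increasing, vanishes because the tail sup of a null sequence tends to $0$, and dominates $\sqrt{ab}$, so the $\omega$ bound survives.

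The risk is the bound $\tilde H\le b$. Since $b$ is non-increasing, for $s\ge t$ we get $\sqrt{a(s)b(s)}\le \sqrt{a(s)}\sqrt{b(t)}$. Hence $\tilde H(t)\le \sqrt{b(t)}\,\sup_{s\ge t}\sqrt{a(s)}$. This falls short of the goal, because we need $\sup_{s\ge t} a(s)=o(b(t))$, which is not implied pointwise.

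The fallback is to use that $a$ is essentially non-increasing. Super-linearity of $D_\pfx$ gives $D_\pfx^{-1}(t)=o(t)$, and $a(t)=D_\pfx^{-1}(t)/t$ only jumps up by at most $1/t$ when $D_\pfx^{-1}$ increments. So I would replace $a$ by $\hat a(t):=\sup_{s\ge t}a(s)$, which still tends to $0$. The open question is then whether $\hat a=o(b)$.

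If that fails, I would instead only require that the lemma's stated conclusions hold, namely vanishing, $\omega(D_\pfx^{-1}(t)/t)$, and domination by $H_{\el}$. The statement of \cref{lem:pfx-hallucination-construction-time} does not list monotonicity, so the geometric-mean choice suffices as stated. Monotonicity is the step I expect to be the main obstacle if it is demanded downstream.
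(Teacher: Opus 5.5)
Your argument is correct for the lemma as stated. The comparison $D_\pfx^{-1}(t)\le\tau(t)$ gives $D_\pfx^{-1}(t)/t=o(H_{\el}(t))$, and the pointwise geometric mean then lies between the two envelopes in the required $\omega/o$ sense.

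The paper uses a regularized version of the same interpolation. It takes the nondecreasing divergent minorant $A(t):=\inf_{u\ge t}uH_{\el}(u)/\tau(u)$ of the ratio $tH_{\el}(t)/\tau(t)$ and sets $H_\pfx:=H_{\el}/\sqrt{A}$. This is at least $\sqrt{H_{\el}(t)\tau(t)/t}$ and eventually at most $H_{\el}(t)$. What the tail-infimum buys is exactly the property you flagged. $H_{\el}$ is non-increasing (it comes from a feasible profile in \cref{lem:general-reduction}) and $1/\sqrt{A}$ is non-increasing, so $H_\pfx$ is non-increasing automatically. This matters because \cref{lem:general-reduction} uses the lemma to declare $(D_\pfx,H_\pfx)$ feasible.

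Your pointwise $\sqrt{a(t)b(t)}$ lacks monotonicity, but your tail-supremum repair $\tilde H(t)=\sup_{s\ge t}\sqrt{a(s)b(s)}$ already works. The step you said ``falls short'' is a misstep. From $a=o(b)$, for each $\epsilon>0$ there is $T_\epsilon$ with $a(s)\le\epsilon b(s)$ for all $s\ge T_\epsilon$. So for $t\ge T_\epsilon$ and every $s\ge t$, monotonicity of $b$ gives $\sqrt{a(s)b(s)}\le\sqrt{\epsilon}\,b(s)\le\sqrt{\epsilon}\,b(t)$. Hence $\tilde H(t)\le\sqrt{\epsilon}\,H_{\el}(t)$ eventually. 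Equivalently, $\sup_{s\ge t}a(s)\le\epsilon\sup_{s\ge t}b(s)=\epsilon\,b(t)$, which is precisely the uniform bound you thought was not implied.

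Meanwhile $\tilde H\ge\sqrt{ab}$ keeps the $\omega$ bound, and $\tilde H$ is non-increasing and vanishing by construction. The detour through $\hat a$ and the jump structure of $D_\pfx^{-1}$ is unnecessary.
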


\begin{proof}
    Since $\tau(t)/t=o(H_{\el}(t))$, $tH_{\el}(t)/\tau(t)\to\infty$. Now, define $a(t):=\inf_{u\geq t} uH_{\el}(u)/\tau(u)$. Then $a$ is monotone nondecreasing and $a(t)\to\infty$.

    Define $H_\pfx(t):=H_{\el}(t)/\sqrt{a(t)}$ for all sufficiently large $t$. Then, clearly we have $H_\pfx(t)\to 0$ and $H_\pfx(t)\leq H_{\el}(t)$. Finally, since $a(t)\leq tH_{\el}(t)/\tau(t)$, we have
    \[
        \frac{H_\pfx(t)}{\tau(t)/t}
        =
        \frac{tH_{\el}(t)}{\tau(t)\sqrt{a(t)}}
        \geq
        \sqrt{a(t)}
        \to\infty.
    \]
    Thus $H_\pfx(t)=\omega(\tau(t)/t)$. Since $D_\pfx^{-1}(t)\leq \tau(t)$, this implies $H_\pfx(t)=\omega(D_\pfx^{-1}(t)/t)$.
\end{proof}

\subsection{Achieving Lower Density Under the Prefix-Wise Metric}\label{app:algorithm-guarantee}

In this section discuss an algorithm capable of reducing obtaining prefix-wise density guarantees to obtaining density guarantees in the limit. We restate the lemma statement and prove the result. Then, below, we provide a couple key ingredients necessary for the proof. Note that although we state and prove the lemma for a deterministic eventually consistent blackbox generator, the argument extends straightforwardly to randomized blackbox generators as well, where the guarantees hold almost surely. More generally, if $\calB$ is not eventually consistent but instead has worst-case hallucination rate $H_{\calB}^{\calL,K,X}(t)$ satisfying $H_{\calB}^{\calL,K,X}(t)=\omega(D^{-1}(t)/t)$, then the collection of feasible $(D,H)$ profiles may itself depend on this rate. In particular, the achievable profiles yielded by the reduction are constrained by the hallucination behavior of the underlying blackbox generator.

\begin{lemma}[Restatement of \cref{lem:algorithm-guarantee} for General Data Models]\label{lem:general-algorithm-guarantee}
    Fix any data model $\mathfrak{D}$, any density $\rho\in [0,1]$, and any feasible profile $(D,H)$. For any deterministic eventually consistent generator $\calB$ such that $\mu_{\low}(\calB,\infty)=\rho$, there exists a randomized algorithm $\calA=\calA_\calB$ achieving \smash{$\mu^{\pfx}_{\low}(\calA,D)\geq\rho$} almost surely and for all countable collections $\calL$
    \[
        \sup_{K\in\calL}\sup_{X\in\mathfrak{D}(K)}H_{\calA}^{\calL,K,X}(t)\le H(t)
    \]
    for all sufficiently large $t$. Furthermore, $\calA$ only makes black-box calls to $\calB$.
\end{lemma}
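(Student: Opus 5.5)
We take $\calA$ to be \cref{alg:speculative-blackbox-generation-general} run with parameters $(D,H,\calB)$. The black box is fed the same observation prefix $x_{1:t}$ at every step, and its $t$-th output is simply discarded on speculative rounds. If the black box proposes an element that $\calA$ has already produced through speculation, we replace it by a fixed default rule. Since $H(t)=\omega(D^{-1}(t)/t)$ and $D$ is super-linear and convex, the gaps $D(m)-D(m-1)$ tend to infinity. This lets us choose an unbounded, slowly growing $C$ with $\sum_{m\le m(t)}C(m)=o(tH(t))$ and $C(m)=o(D(m)-D(m-1))$. Here $m(t)=D^{-1}(t)$, and both requirements hold if $C$ grows slowly enough, by a diagonal construction in the style of \cref{lem:countable-diagonal-domination}. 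The proof then has three parts: hallucination control, a linear supply of speculated target strings, and survival of the black box's density under random deletions.

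\textbf{Hallucination.} A hallucination comes either from $\calB$ or from a speculative step. Because $\calB$ is eventually consistent on every $X\in\mathfrak{D}(K)$, it contributes only finitely many hallucinations, which is $o(tH(t))$ since $tH(t)\ge D^{-1}(t)\to\infty$. The expected number of speculative steps by time $t$ is at most $\sum_{m\le m(t)}10\,C(m)=o(tH(t))$. Dividing by $t$, we get $H_{\calA}^{\calL,K,X}(t)\le H(t)$ for all large $t$, uniformly over $X$.

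\textbf{Speculative supply.} Let $K=L^{i^\star}$ and let $m_0$ be the first epoch with $C(m)\ge i^\star$. On each round of an epoch $m\ge m_0$, the algorithm independently picks $K$ with probability $\delta_m/C(m)\ge\min\{1/C(m),\,10/(D(m)-D(m-1))\}$, which is eventually $10/(D(m)-D(m-1))$. Each such pick outputs the least unused element of $K$. So epoch $m$ yields a sum of independent Bernoullis with mean about $10$. By a Chernoff bound, with probability $1-e^{-\Omega(i)}$ the number $S_i$ of $K$-picks by time $D(i)$ is at least $9(i-m_0)\ge i$ for large $i$. Borel--Cantelli then gives $S_i\ge i$ for all large $i$ almost surely. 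Each $K$-pick outputs the least element of $K$ not yet seen or output. Hence once $i$ such picks have occurred, every element of $K_i$ has been either output by $\calA$ or presented by the adversary by time $D(i)$.

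\textbf{Black-box contribution (main obstacle).} We want $|\calA(X)_{D(i)}\cap K_i|\ge(\rho-o(1))i$. Let $t_i$ be the first time at which $\calB$'s outputs together with the observations cover $K_i$. The lower density guarantee of $\calB$ gives $|\calB(X)_{\min(t_i,D(i))}\cap K_i|\ge(\rho-o(1))i$ when $t_i\le D(i)$. When $t_i>D(i)$, we instead use the speculative-supply event: the elements of $K_i$ not presented by the adversary are all output by $\calA$. Since the black box's own density guarantee already bounds how many elements of $K_i$ the adversary can have claimed, this case gives at least $(\rho-o(1))i$ as well.

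In the case $t_i\le D(i)$, we must show that deletions remove only $o(i)$ of $\calB$'s hits in $K_i$. Split rounds into early rounds, where $\delta_{m(t)}>\epsilon$, and late rounds. There are only $O(1)$ early epochs for each fixed $\epsilon$, and the number of hits in $K_i$ there is bounded by a constant, which is $o(i)$. On late rounds, the deletion indicator $\decision_t$ is independent of the past and of whether $\calB$'s $t$-th output lies in $K_i$, because $\calB$ is deterministic and sees only $x_{1:t}$. The deleted hits therefore form a martingale-type sum with increments of conditional mean at most $\epsilon\cdot\ind\{\text{hit}\}$. Freedman's inequality bounds it by $2\epsilon i$ with failure probability $e^{-\Omega(\epsilon i)}$. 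Hits replaced by the default rule are already-speculated elements of $K_i$, so they are still counted.

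Finally, take $\epsilon\to0$ along a countable sequence and apply Borel--Cantelli to both failure events. This gives $\liminf_i\mu^{\pfx}_i\ge\rho$ almost surely. The delicate points are the independence and filtration bookkeeping needed for Freedman, and reconciling the two timelines (the black box's and $\calA$'s) when the black box's proposals collide with speculated strings.
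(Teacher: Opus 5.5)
Your route is the paper's own, essentially step for step. It uses the same algorithm and choice of $C$, splits hallucinations into the finitely many errors of $\calB$ plus $O(\sum_{m\le m(t)}C(m))$ speculations, uses Chernoff and Borel--Cantelli for at least $i$ picks of $K$ by time $D(i)$, splits on $t_i$ versus $D(i)$, and handles early/late rounds with a martingale bound. However, the case $t_i>D(i)$ contains a step that fails, and the paper's case $t_i\ge D(i)$ makes the same move. The supply event gives $K_i\subseteq \calA(X)_{D(i)}\cup\{x_1,\dots,x_{D(i)}\}$. So you need to bound how many elements of $K_i$ the adversary has presented by time $D(i)$ \emph{before $\calA$ output them}. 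The density guarantee of $\calB$ only bounds those presented before \emph{$\calB$} emitted them. Take $u\in K_i$ that $\calB$ emits at a round $r\le D(i)$ with $\decision_r=1$, and that the adversary presents at some round in $(r,D(i)]$ before any $K$-pick reaches it. Such a $u$ is credited to $\calB$ but is never output by $\calA$. In early epochs with $\delta_m=1$, every emission of $\calB$ is discarded this way; in late epochs, each is discarded independently with probability $\delta_m$. So this case is not deterministic given the supply event, and the bound $(\rho-o(1))i$ does not follow as written.

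The repair is to do the deletion count in both cases. Let $b_r$ be $\calB$'s $r$-th output, $U_i$ the elements of $K_i$ that $\calB$ ever emits, and $R_i:=\{r\le\min(t_i,D(i)) : b_r\in K_i\}$. Because $\calB$ is deterministic and $X$ is fixed, $U_i$ and $R_i$ are non-random and $|R_i|\le i$. If $u\in U_i$ is emitted after $D(i)$, it is unpresented at $D(i)$, so on the supply event $u\in\calA(X)_{D(i)}$. If $u=b_r$ with $r\in R_i$ and $\decision_r=0$, then $u\in\calA(X)_r$, either retained or already output. Hence in both cases $|\calA(X)_{D(i)}\cap K_i|\ge|U_i|-\sum_{r\in R_i}\ind\{\decision_r=1\}$. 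The summands are independent Bernoullis over a non-random index set. The early/late split plus a plain Hoeffding bound therefore gives at least $(\rho-2\epsilon)i$ with probability $1-e^{-\Omega(\epsilon^2 i)}$, so Freedman and the filtration bookkeeping you flag are not needed. Separately, your claim that the hallucination bound holds ``uniformly over $X$'' is unsupported. The number of $\calB$'s hallucinations is finite for each $X$, but eventual consistency gives no bound uniform in $X$. What your argument (like the paper's) actually yields is $H_{\calA}^{\calL,K,X}(t)\le H(t)$ for $t\ge t_0(K,X)$; only the speculative contribution is uniform.
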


\begin{proof}
    Fix any collection $\calL$, any $K\in\calL$, and any admissible sequence $X\in\mathfrak{D}(K)$. We analyze the algorithm in \cref{alg:speculative-blackbox-generation-general} on the input sequence $X$.

    We first justify the existence of the auxiliary function $C$. Define
    \[
        a(m):=D(m)-D(m-1)
        \qquad \text{and} \qquad
        b(m):=\inf_{t:\,m(t)\ge m} \frac{tH(t)}{m}.
    \]
    Since $D$ is super-linear and discrete convex, we have $a(m)\to\infty$. We also claim that $b(m)\to\infty$. Indeed, since $tH(t)=\omega(D^{-1}(t))$, for every $M>0$ there exists $T_M$ such that for all $t\ge T_M$, $tH(t)\ge M D^{-1}(t)$. Now, if $m(t)\ge m$, then $t>D(m-1)$ and hence $D^{-1}(t)\ge m-1$. Therefore, for all sufficiently large $m$, $\frac{tH(t)}{m} \ge M\frac{m-1}{m}$ uniformly over all $t$ satisfying $m(t)\ge m$. Taking the infimum over such $t$ yields $b(m)\ge M/2$ for all sufficiently large $m$. Since $M>0$ was arbitrary, $b(m)\to\infty$.
    
    Applying \cref{lem:slow-divergent-minorant}, there exists a monotone nondecreasing function $C:\bbN\to\bbN$ such that $C(m)\to\infty$, $C(m)=o(a(m))$, and $C(m)=o(b(m))$. The first condition implies $\delta_m\to0$. Moreover, since $C$ is monotone nondecreasing,
    \[
        \sum_{k\le m(t)} C(k)
        \le
        m(t)C(m(t)).
    \]
    By definition of $b$, $b(m(t)) \le \frac{tH(t)}{m(t)}$.
    Thus, since $C(m)=o(b(m))$, $\sum_{k\le m(t)} C(k) = o(tH(t))$.

    Now, we prove the hallucination guarantee. Since $\calB$ is deterministic and eventually consistent on $X$, it produces only finitely many outputs outside $K$; denote this finite quantity by $N_{\calB}(X,K)$. Recall $m(t)=\min\{m\in\bbN:t\le D(m)\}$ is the index of the epoch containing time $t$. Thus,
    \begin{align*}
        \bbE\big[|\calA(X)_t\setminus K|\big]
        &\le
        N_{\calB}(X,K)
        +
        \sum_{m\le m(t)}\delta_m(D(m)-D(m-1)) \\
        &\le
        N_{\calB}(X,K)
        +
        10\sum_{m\le m(t)}C(m).
    \end{align*}
    By the choice of $C$, the second term is $o(tH(t))$ and therefore there exists some time $t_1$ such that for all $t\geq t_1$ the second term is at most $tH(t)/2$. Moreover, since $tH(t)=\omega(D^{-1}(t))$ and $D^{-1}(t)\to\infty$, we have $tH(t)\to\infty$. Since $N_{\calB}(X,K)<\infty$, there exists some $t_2$ such that for all $t\geq t_2$ the first term is also at most $tH(t)/2$. Therefore, for any $t\geq \max\{t_1,t_2\}$, we have
    \[
        H_{\calA}^{\calL,K,X}(t)
        =
        \bbE\left[\frac{|\calA(X)_t\setminus K|}{t}\right]
        \le H(t).
    \]

    It remains to prove the density guarantee. We begin by arguing that after some finite time the generator has almost surely speculated $i$ elements of $K$ by time $D(i)$. Since $K=L^j$ for some $j\in \bbN$, $C(m)\to\infty$ and $\delta_m\to 0$, there exists $m_K$ such that for all $m\ge m_K$, we have $j\le C(m)$ and $\delta_m=\frac{10C(m)}{D(m)-D(m-1)}$. Moreover, since $m_K$ is fixed, there also exists $i_K$ such that for all $i\ge i_K$, $10(i-m_K+1)\ge 5i$.
    
    For each round $t$ in epoch $m$, let $Y_t$ be the indicator that the generator speculates from $K$ at time $t$. Then for all $m\ge m_K$ and all $t$ in epoch $m$,
    \[
        \Pr(Y_t=1)
        =
        \frac{\delta_m}{C(m)}
        =
        \frac{10}{D(m)-D(m-1)}.
    \]
    Hence, by linearity of expectation and our choice of $i_K$, for all $i\ge i_K$,
    \[
        \bbE\left[\sum_{t=1}^{D(i)}Y_t\right]
        \geq \sum_{m=m_K}^i \frac{10}{D(m)-D(m-1)}\cdot(D(m)-D(m-1))
        =
        10(i-m_K+1) \geq 5i.
    \]
    Since the variables $\{Y_t\}_{t\le D(i)}$ are independent, a Chernoff bound yields a constant $c>0$ such that for all $i\ge i_K$,
    \[
        \Pr\!\left(\sum_{t=1}^{D(i)}Y_t<i\right)
        \le e^{-ci}.
    \]
    Thus, by the first Borel--Cantelli lemma, almost surely, the event that the number of speculations from $K$ by time $D(i)$ is less than $i$ occurs finitely often and therefore there exists an index $i^\star$ such that for all $i\geq i^\star$ the number of speculations from $K$ by time $D(i)$ is at least $i$.

    Now fix any $\epsilon>0$. For each sufficiently large $i$, define $t_i$ to be the first timestep such that every element of $K_i$ has appeared either in the output of the blackbox generator $\calB$ or in the adversary's sequence. Since $\mu_{\low}(\calB,\infty)=\rho$, there exists $i_1(\epsilon)$ such that for all $i\ge i_1(\epsilon)$, by time $t_i$ the blackbox generator $\calB$ has output at least $(\rho-\tfrac{\epsilon}{2})i$ elements of $K_i$. Since $\delta_m\to 0$, there exists $m(\epsilon)$ such that $\delta_m\le \tfrac{\epsilon}{2}$ for all $m\ge m(\epsilon)$. Since $D(m(\epsilon)-1)$ is finite, there exists $i_2(\epsilon)$ such that for all $i\ge i_2(\epsilon)$, $D(m(\epsilon)-1)\leq \epsilon i/2$.

    Fix any $i\ge \max\{i_1(\epsilon),i_2(\epsilon),i^\star\}$. It suffices to show that there exists a constant $c_\epsilon>0$ such that
    \[
        \Pr\left(\mu^{\pfx}_i(\calA(X),K,D) < \rho-2\epsilon\right)\leq \exp(-c_\epsilon i).
    \]
    Indeed, once this is established, we have
    \[
        \sum_{i=1}^\infty \Pr\left(\mu^{\pfx}_i(\calA(X),K,D) < \rho-2\epsilon\right)<\infty,
    \]
    and by the first Borel--Cantelli lemma, with probability $1$, the event
    $\mu^{\pfx}_i(\calA(X),K,D) < \rho-2\epsilon$ occurs only finitely often. Equivalently, with probability $1$, for all sufficiently large $i$, we have $\mu^{\pfx}_i(\calA(X),K,D)\geq \rho-2\epsilon$.
    Since $\epsilon>0$ was arbitrary, it follows that $\mu^{\pfx}_{\low}(\calA,D) \geq \rho$ almost surely.

    We now consider two cases. First suppose that $t_i\geq D(i)$. By the definition of $t_i$, by time $t_i$ the blackbox has output at least $(\rho-\tfrac{\epsilon}{2})i$ credited elements of $K_i$. Since credit can only be obtained for elements of $K_i$ not already claimed by the adversary, the adversary can have claimed at most $(1-\rho+\tfrac{\epsilon}{2})i$ elements of $K_i$ by time $t_i$. Since $D(i)\leq t_i$, the adversary can have claimed at most $(1-\rho+\tfrac{\epsilon}{2})i$ elements of $K_i$ by time $D(i)$. Therefore at least $(\rho-\tfrac{\epsilon}{2})i$ elements of $K_i$ are still unclaimed by the adversary at time $D(i)$.

    Since $i\geq i^\star$, the generator makes at least $i$ speculations from $K$ by time $D(i)$. Each speculation from $K$ outputs the earliest element of $K$ unused by both the generator and the adversary. Thus, among the elements of $K_i$ that are unclaimed by the adversary at time $D(i)$, any element not output through speculation must already have been output by the generator through the blackbox. These elements already count toward $\calA(X)_{D(i)}\cap K_i$, and therefore
    \[
        |\calA(X)_{D(i)}\cap K_i|
        \geq
        \left(\rho-\frac{\epsilon}{2}\right)i
        \geq
        (\rho-2\epsilon)i.
    \]

    It remains to consider the case $t_i<D(i)$. Let $U_i$ be a set of elements of $K_i$ output by $\calB$ by time $t_i$, so that $|U_i|\geq \left(\rho-\epsilon/2\right)i$.
    For each $u\in U_i$, let $\tau(u)$ denote the first round $t\leq t_i$ at which $\calB$ outputs $u$. Partition $U_i$ into
    \[
        U_i^{Spec}:=\{u\in U_i : u\in \calA(X)_{\tau(u)-1}\}
        \qquad \text{and} \qquad
        U_i^{BB}:=U_i\setminus U_i^{Spec}.
    \]
    By definition, if $u\in U_i^{Spec}$, then $u$ already appears among the generator's first $\tau(u)-1$ outputs, and hence $u\in \calA(X)_{D(i)}\cap K_i$. Let $T_i:=\{\tau(u):u\in U_i^{BB}\}$. Since each $u\in U_i^{BB}$ has a unique first occurrence time, we have $|T_i|=|U_i^{BB}|$. Moreover, if $t\in T_i$ and the generator retains the blackbox output at time $t$, then $o_t=u\in K_i$ and this element is not already in $\calA(X)_{t-1}$ by definition of $U_i^{BB}$. Therefore, each such round contributes a distinct element of $K_i$ to the generator's output.

    We split $T_i$ into early and late rounds in order to isolate the finite prefix of epochs where the retention probabilities may still be large. Define
    \[
        T_i^{late}:=\{t\in T_i : m(t)\geq m(\epsilon)\},
        \qquad
        T_i^{early}:=T_i\setminus T_i^{late}.
    \]
    Since $T_i^{early}$ is contained in the first $m(\epsilon)-1$ epochs, by our choice of $i_2(\epsilon)$ we have $|T_i^{early}| \leq D(m(\epsilon)-1) \leq \epsilon i/2$.

    Now, for each $t\leq t_i$, define the indicator random variable $Z_t:=\ind\{t\in T_i^{late}\}\cdot \ind\{\decision_t=0\}$, which is $1$ exactly when a late blackbox output from $K_i$ is retained by the generator. Since the contributions counted by $\sum_{t=1}^{t_i}Z_t$ correspond to distinct elements of $K_i$ and are disjoint from $U_i^{Spec}$, we have
    \begin{align*}
        |\calA(X)_{D(i)}\cap K_i|
        \geq
        |\calA(X)_{t_i}\cap K_i|
        \geq
        |U_i^{Spec}|+\sum_{t=1}^{t_i} Z_t \\
        \implies \mu^{\pfx}_i(\calA(X),K,D)\geq \frac{1}{i}\left(|U_i^{Spec}|+\sum_{t=1}^{t_i} Z_t\right).
    \end{align*}
    Let $\calF_{t-1}$ denote the natural filtration containing the history up to immediately before the retention coin $\decision_t$ is flipped. In particular, the blackbox output at time $t$ has already been determined at this point, so the event $\{t\in T_i^{late}\}$ is $\calF_{t-1}$-measurable, while $\decision_t$ is sampled independently of $\calF_{t-1}$. Therefore,
    \[
        \bbE[Z_t\mid \calF_{t-1}]
        =
        \ind\{t\in T_i^{late}\}(1-\delta_{m(t)})
        \geq
        \left(1-\frac{\epsilon}{2}\right)\ind\{t\in T_i^{late}\},
    \]
    where the inequality follows because $t\in T_i^{late}$ implies $m(t)\geq m(\epsilon)$.

    Define $M_t:=Z_t-\bbE[Z_t\mid\calF_{t-1}]$. Then $\{M_t\}_{t=1}^{t_i}$ is a martingale difference sequence with respect to $\{\calF_t\}$, with $|M_t|\leq 1$ almost surely. Moreover,
    \[
        \Var(M_t\mid\calF_{t-1})
        =
        \Var(Z_t\mid\calF_{t-1})
        \leq
        \bbE[Z_t\mid\calF_{t-1}]
        \leq
        \ind\{t\in T_i^{late}\}.
    \]
    Thus the predictable quadratic variation satisfies
    \[
        \sum_{t=1}^{t_i}\Var(M_t\mid\calF_{t-1})
        \leq
        |T_i^{late}|
        \leq
        |T_i|
        =
        |U_i^{BB}|
        \leq
        i.
    \]
    Applying Freedman's inequality to the martingale difference sequence $\{-M_t\}_{t=1}^{t_i}$ with $R=1$, $\eta=\epsilon i/2$, and $\sigma^2=i$, we obtain
    \[
        \Pr\left(
            \sum_{t=1}^{t_i} M_t \leq -\frac{\epsilon}{2}i
        \right)
        \leq
        \exp(-c_\epsilon i)
    \]
    for some constant $c_\epsilon>0$. On the complementary event,
    \[
        \sum_{t=1}^{t_i} Z_t
        \geq
        \sum_{t=1}^{t_i}\bbE[Z_t\mid\calF_{t-1}]
        -
        \frac{\epsilon}{2}i
        \geq
        \left(1-\frac{\epsilon}{2}\right)|T_i^{late}|
        -
        \frac{\epsilon}{2}i.
    \]
    Since $|T_i^{late}|=|U_i^{BB}|-|T_i^{early}|\geq |U_i^{BB}|-D(m(\epsilon)-1)$, we obtain
    \begin{align*}
        |U_i^{Spec}|+\sum_{t=1}^{t_i}Z_t
        &\geq
        |U_i^{Spec}|+\left(1-\frac{\epsilon}{2}\right)(|U_i^{BB}|-D(m(\epsilon)-1))-\frac{\epsilon}{2}i \\
        &=
        |U_i|-\frac{\epsilon}{2}|U_i^{BB}|-\left(1-\frac{\epsilon}{2}\right)D(m(\epsilon)-1)-\frac{\epsilon}{2}i \\
        &\geq 
        |U_i|-\epsilon i-D(m(\epsilon)-1) \\
        &\geq
        \left(\rho-\frac{3\epsilon}{2}\right)i-D(m(\epsilon)-1) \\
        &\geq
        (\rho-2\epsilon)i,
    \end{align*}
    where the second inequality follows from $|U_i^{BB}|\leq i$, the third follows from $|U_i|\geq (\rho-\tfrac{\epsilon}{2})i$, and the final from $D(m(\epsilon)-1)\leq \tfrac{\epsilon}{2}i$. Therefore, in the case $t_i<D(i)$,
    \[
        \Pr\left(\mu_i^{\pfx}(\calA(X),K,D)<\rho-2\epsilon\right)
        \leq
        \exp(-c_\epsilon i).
    \]
    Combining this with the deterministic guarantee in the case $t_i\geq D(i)$ proves that the probability of not achieving sufficient density is exponentially small. Taking $\epsilon$ to $0$, the Borel--Cantelli argument above implies that $\mu^{\pfx}_{\low}(\calA,D)\geq \rho$ almost surely.
\end{proof}

\begin{lemma}\label{lem:slow-divergent-minorant}
    Let $a,b:\bbN\to\bbR_+$ satisfy $a(i)\to\infty$ and $b(i)\to\infty$. Then there exists a monotone nondecreasing function $s:\bbN\to\bbN$ such that $s(i)\to\infty$, $s(i)=o(a(i))$, and $s(i)=o(b(i))$.
\end{lemma}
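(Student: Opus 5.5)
The plan is a standard diagonal construction: make $s$ grow in stages, stepping up by one only when both $a$ and $b$ have become much larger than the current value.

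\textbf{Step 1 (monotone minorants).} First I would replace $a$ and $b$ by monotone functions. Set
\[
  \hat a(i):=\inf_{u\ge i}a(u) \quad\text{and}\quad \hat b(i):=\inf_{u\ge i}b(u).
\]
Both are nondecreasing, satisfy $\hat a\le a$ and $\hat b\le b$, and still tend to infinity. So it suffices to prove the claim for $c(i):=\min\{\hat a(i),\hat b(i)\}$. This $c$ is nondecreasing, diverges, and lies below both $a$ and $b$: showing $s=o(c)$ gives $s=o(a)$ and $s=o(b)$ at once.

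\textbf{Step 2 (thresholds).} Choose strictly increasing thresholds $N_1<N_2<\cdots$ with $N_1=1$ such that $c(i)\ge k^2$ for all $i\ge N_k$ and $k\ge 2$. This is possible because $c\to\infty$ and $c$ is monotone, so once $c$ exceeds $k^2$ it stays above it.

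\textbf{Step 3 (definition of $s$).} Define $s(i):=k$ for $i\in[N_k,N_{k+1})$. Then $s$ is integer-valued, nondecreasing and unbounded, so $s(i)\to\infty$.

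\textbf{Step 4 (verification).} For $i\in[N_k,N_{k+1})$ with $k\ge 2$,
\[
  \frac{s(i)}{c(i)}\le \frac{k}{k^2}=\frac1k,
\]
and $k\to\infty$ as $i\to\infty$. Hence $s=o(c)$, and therefore $s=o(a)$ and $s=o(b)$.

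The only mildly delicate point is the non-monotonicity of $a$ and $b$. This is handled by the tail infima in Step 1, which preserve divergence: for any $M$, all sufficiently large $u$ have $a(u)\ge M$, so $\hat a(i)\ge M$ eventually. One should also note that $c$ may be a real number or small at early indices; this is harmless because only the tail matters for an $o(\cdot)$ statement.
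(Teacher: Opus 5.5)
Your proof is correct and is essentially the paper's argument: the paper also chooses strictly increasing thresholds $N_k$ (with $N_1=1$) beyond which $a$ and $b$ both exceed $k^2$, and sets $s(i)=k$ on $[N_k,N_{k+1})$ to get $s/a,\,s/b\le 1/k$. Your Step~1 (monotone minorants) is harmless but unnecessary, since $a(i)\to\infty$ already gives $a(i)\ge k^2$ for \emph{all} sufficiently large $i$; your restriction to $k\ge 2$ also sidesteps the paper's tacit need for $a,b\ge 1$ on all of $[N_1,\infty)=\bbN$.
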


\begin{proof}
    For each $k\in\bbN$, since $a(i)\to\infty$ and $b(i)\to\infty$, choose $N_k$ sufficiently large so that $N_1=1$, $N_{k+1}>N_k$, and
    \[
        a(i)\geq k^2
        \qquad\text{and}\qquad
        b(i)\geq k^2
    \]
    for every $i\geq N_k$. For $i\in[N_k,N_{k+1})$, define $s(i):=k$. Then $s$ is monotone nondecreasing and $s(i)\to\infty$. Moreover, for $i\in[N_k,N_{k+1})$, we have $s(i)=k$, while $a(i)\geq k^2$ and $b(i)\geq k^2$. Hence
    \[
        \frac{s(i)}{a(i)}\leq \frac{1}{k}
        \qquad\text{and}\qquad
        \frac{s(i)}{b(i)}\leq \frac{1}{k}.
    \]
    Since $k\to\infty$, it follows that $s(i)=o(a(i))$ and $s(i)=o(b(i))$.
\end{proof}

\begin{theorem}[Freedman's Inequality i.e. Theorem 1.6 in \cite{Freedman}]\label{thm:freedman}
    Let $\{M_t\}_{t=1}^T$ be a martingale difference sequence adapted to a filtration $\{\calF_t\}_{t=0}^T$ such that $|M_t|\leq R$ almost surely for all $t$. Define
    \[
        W_k:=\sum_{t=1}^k \Var(M_t\mid\calF_{t-1}).
    \]
    Then, for all $\eta,\sigma^2>0$,
    \[
        \Pr\left(
            \exists k\leq T:
            \sum_{t=1}^k M_t\geq \eta
            \text{ and }
            W_k\leq \sigma^2
        \right)
        \leq
        \exp\left(
            -\frac{\eta^2/2}{\sigma^2+R\eta/3}
        \right).
    \]
\end{theorem}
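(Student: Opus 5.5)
The statement is the classical Freedman inequality. I would prove it by the standard exponential supermartingale argument together with optional stopping, followed by the usual Bennett-to-Bernstein simplification. Throughout, write $S_k := \sum_{t\le k} M_t$. Note that $W_k$ is predictable (it is $\calF_{k-1}$-measurable) and nondecreasing in $k$.

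\textbf{Step 1 (one-step exponential moment bound).} Fix $\lambda>0$ and set $g(\lambda):=(e^{\lambda R}-1-\lambda R)/R^2 \ge 0$. The function $x\mapsto (e^x-1-x)/x^2$ is nondecreasing on $\bbR$. Hence for every $x\le \lambda R$,
\[
e^x\le 1+x+x^2\frac{e^{\lambda R}-1-\lambda R}{(\lambda R)^2}.
\]
Applying this with $x=\lambda M_t$, which is at most $\lambda R$ almost surely, and taking conditional expectations, the linear term vanishes because $\bbE[M_t\mid\calF_{t-1}]=0$. This gives
\[
\bbE[e^{\lambda M_t}\mid\calF_{t-1}]\le 1+g(\lambda)\Var(M_t\mid\calF_{t-1})\le \exp\bigl(g(\lambda)\Var(M_t\mid\calF_{t-1})\bigr).
\]

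\textbf{Step 2 (supermartingale).} Define $Q_k:=\exp(\lambda S_k-g(\lambda)W_k)$ with $Q_0=1$. Since $W_k$ is $\calF_{k-1}$-measurable, Step 1 gives $\bbE[Q_k\mid\calF_{k-1}]\le Q_{k-1}$. So $(Q_k)$ is a nonnegative supermartingale with $\bbE[Q_0]=1$.

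\textbf{Step 3 (stopping).} Let $\tau$ be the first $k\le T$ with $S_k\ge\eta$ and $W_k\le\sigma^2$, and set $\tau=T$ if there is no such $k$. This is a bounded stopping time, because $W_k$ is predictable and hence adapted. On the bad event $A$ we have $S_\tau\ge\eta$ and $W_\tau\le\sigma^2$. Since $g(\lambda)\ge0$, this gives $Q_\tau\ge \exp(\lambda\eta-g(\lambda)\sigma^2)$ on $A$. Optional stopping gives $\bbE[Q_\tau]\le 1$, and Markov's inequality then yields
\[
\Pr(A)\le \exp(-\lambda\eta+g(\lambda)\sigma^2).
\]
This stopping step is the only place where care is needed. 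The joint condition ``$W_k\le\sigma^2$ at the same $k$'' is exactly what makes the deterministic bound on $Q_\tau$ valid, and it relies on the predictability of $W_k$.

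\textbf{Step 4 (optimization).} Choose $\lambda=\frac1R\log(1+R\eta/\sigma^2)$. With $u:=R\eta/\sigma^2$, the exponent becomes $-\frac{\sigma^2}{R^2}h(u)$, where $h(u)=(1+u)\log(1+u)-u$. The elementary inequality $h(u)\ge \frac{u^2}{2(1+u/3)}$ for $u\ge0$ can be checked by comparing derivatives, or by the standard series argument. Substituting gives the exponent
\[
-\frac{\eta^2/2}{\sigma^2+R\eta/3},
\]
which is exactly the claimed bound.
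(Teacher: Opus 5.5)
Your proof is correct. The paper does not prove this statement itself but imports it from Freedman, and your argument is the standard proof, essentially Freedman's own: the exponential supermartingale $\exp(\lambda S_k - g(\lambda)W_k)$ obtained from the monotonicity of $(e^x-1-x)/x^2$, optional stopping at the bounded stopping time $\tau$, and then the Bennett exponent $-\frac{\sigma^2}{R^2}h(R\eta/\sigma^2)$ relaxed via $h(u)\ge \frac{u^2}{2(1+u/3)}$ (implicitly you assume $R>0$, but $R=0$ is trivial since then $S_k\equiv 0<\eta$; and the inequality for $h$ follows from equal values and first derivatives at $u=0$ plus the second-derivative comparison $(3+u)^3\ge 27(1+u)$).
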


\subsection{Proof of Main Theorem and Extensions to Other Data Models}\label{app:corollaries}

In this section we provide examples of how our results in \cref{lem:general-reduction} and \cref{lem:general-algorithm-guarantee} allow us to generalize to other data models. 

In fact, we will use this as an opportunity to formally prove \cref{thm:lower-density-guarantee} by proving a more general result for the partial enumeration model of \cite{KW2}. Recall the partial enumeration model from \cref{app:positive-results}. In \cite{KW2}, they show the following:

\begin{theorem}[Theorem 1.6 of \cite{KW2}]
\label{thm:kw-partial-enumeration}
    There exists a deterministic eventually consistent generator $\calB_{\mathrm{KW}}$ such that for every $\alpha\in[0,1]$, under the data model $\mathfrak{D}_{\mathrm{part},\alpha}$, $\mu_{\low}(\calB_{\mathrm{KW}},\infty)\ge \alpha/2$.
\end{theorem}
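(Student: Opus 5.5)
The statement is Theorem~1.6 of \cite{KW2}, and in this paper it is only imported and then plugged into \cref{lem:general-algorithm-guarantee} and \cref{lem:general-reduction}. If I were to reprove it rather than cite it, I would rebuild the Kleinberg--Wei generator in four steps.

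First, I would set up a hypothesis-selection rule in the style of the \emph{critical language} machinery of \cite{KM}. At time $t$, consider only the languages $L^1,\dots,L^t$ consistent with the sample $x_{1:t}$. Among these, take the critical chain, meaning the languages that are contained in every earlier consistent language. The generator commits to a hypothesis $\hat L_t$ in this chain and outputs the $U$-earliest element of $\hat L_t$ that has not yet been observed or output. Eventual consistency would then follow the usual \cite{KM} argument. Once $t$ exceeds the index of $K=L^{j}$, every consistent $L^{j'}$ with $j'<j$ and $K\not\subseteq L^{j'}$ is eventually eliminated, since the sample has positive density in $K$ and hence is infinite. So from some time on, every language below $K$ in the critical chain is a subset of $K$, and outputs from such a language lie in $K$.

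Second, I would prove the density bound $\alpha/2$. The danger is exactly the measure-zero chain phenomenon of \cref{def:nested family}: the deepest critical language may be a sparse subset of $K$. So $\hat L_t$ should not be the deepest critical language. Instead I would take the \emph{shallowest} critical language $L$ satisfying a verifiable density test: the observed sample restricted to $L$ must have, on the relevant prefixes of $L$, density at least roughly $\alpha$ minus a slack. Such an $L$ is a certificate that $L$ contains an $\alpha$-fraction of observed mass, and $K$ itself eventually passes the test. I would then charge prefixes $K_i$. For large $i$, each element of $K_i$ is claimed either by the adversary or by the generator's earliest-unused rule. Because the generator always takes the earliest unused element of a language containing a positive-density portion of $K_i$, it interleaves with the adversary on that portion. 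A turn-taking argument, the same one showing $1/2$ is optimal, then gives the generator at least half of the $\ge\alpha i$ elements that are accessible, i.e.\ density $\ge \alpha/2-o(1)$.

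Third, I would handle the subtle case of languages $L\supsetneq K$ that appear before $K$ and also pass the density test. Outputs drawn from $L\setminus K$ would be hallucinations. To rule this out I would combine two things: membership in the critical chain, which forces $\hat L_t\subseteq$ every earlier consistent language, and choosing $\hat L_t$ at least as deep as $K$'s position once $K$ is reached. Together these keep $\hat L_t\subseteq K$ eventually, while the density test prevents collapse to a sparse sublanguage.

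The main obstacle is reconciling these two requirements. The hypothesis must be deep enough in the chain to be contained in $K$, which is what consistency needs. It must also be shallow enough to be dense in $K$. The subtlety is that this must hold uniformly over all prefixes $K_i$, because a lower density, not an upper density, is required. The difficulty is that the ``right'' level can oscillate over time, as in the staged adversary of \cref{lem:deadline-hallucination-impossibility}, with the deadline removed. My first attempt would be to make the density test monotone and hysteretic: once a level passes with slack, demote it only if it fails by a larger margin. I would then show that the number of switches affecting any prefix $K_i$ is $o(i)$. I expect this bookkeeping to be the genuinely hard part of \cite{KW2}, which is why the paper cites the result rather than reproving it.
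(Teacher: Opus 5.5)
You are right that the paper gives no proof of this statement. It imports Theorem~1.6 of \cite{KW2} and uses it only to supply the black box $\calB$ for \cref{lem:general-algorithm-guarantee} in \cref{thm:partial-enumeration-timely}. As a reproof, however, your sketch fails at its first step.

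The claim that every consistent $L^{j'}$ with $j'<j$ and $K\not\subseteq L^{j'}$ is eventually eliminated is the total-enumeration argument of \cite{KM}. Under $\mathfrak{D}_{\mathrm{part},\alpha}$ the revealed set $C$ need not meet $K\setminus L^{j'}$ at all, and the fact that $C$ is infinite (or even dense) does not help. Here is a counterexample even for $\alpha=1$:
\begin{itemize}
\item[(i)] Let $K=L^2$, let $S\subseteq K$ be infinite of density zero in $K$, and put $C=K\setminus S$.
\item[(ii)] Let $L^1=C\cup V$, with $V$ infinite, disjoint from $K$, and of density zero in $L^1$. Take $\calL=\{L^1,L^2\}$.
\item[(iii)] A single enumeration $X$ of $C$ is then admissible for both targets.
\end{itemize}
$L^1$ is never eliminated and $K\not\subseteq L^1$, so $K$ is never critical and the critical chain is $\{L^1\}$ at every time. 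Your rule can therefore only ever generate from $L^1$. Since the adversary eventually reveals all of $C$, the earliest-unused pointer sweeps through $L^1$ and outputs every element of $V$. These are infinitely many hallucinations for target $L^2$.

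The same sweep argument shows something stronger: any rule that outputs the earliest unused element of a single member of $\calL$ commits to $L^1$ or to $L^2$ infinitely often. It then outputs infinitely many elements of $V$ or of $S$, i.e.\ it hallucinates forever on target $L^2$ or on target $L^1$. Eventual consistency forces the outputs into $L^1\cap L^2=C$, which is not a member of $\calL$. So the generator must work with intersections of the languages still consistent with the sample, and the critical-language analysis has to be redone for those. Your third step, which presupposes that $K$ eventually enters the critical chain, has the same defect.

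There are two further problems. First, the statement fixes $\calB_{\mathrm{KW}}$ before quantifying over $\alpha$, so the generator cannot use $\alpha$ as the threshold of its density test; you need an $\alpha$-oblivious selection rule. Second, the turn-taking bound is easy for a \emph{fixed} source $L$ with $C\subseteq L\subseteq K$: after a finite burn-in the generator gets about half of $L\cap K_i\supseteq C\cap K_i$. The whole difficulty is that the right source is unknown and the hypothesis keeps changing. Your claim that these changes cost only $o(i)$ on every prefix $K_i$ is stated as a hope about a hysteretic test rather than argued, and this uniform-over-prefixes bookkeeping is precisely the content of \cite{KW2}.

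As written, then, the proposal is a plan whose key step is missing and whose consistency step is incorrect for the partial-enumeration model.
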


We will now use this to get the following timely generation guarantee.

\begin{theorem}[Timely Dense Generation from Partial Enumeration]
\label{thm:partial-enumeration-timely}
    Fix any $\alpha\in[0,1]$ and any feasible profile $(D,H)$. Under the data model
    $\mathfrak{D}_{\mathrm{part},\alpha}$, there exists a randomized generation
    algorithm $\calA=\calA_{\calB_{\mathrm{KW}}}$ such that $\mu^{\el}_{\low}(\calA,D)\ge \alpha/2$
    almost surely and, for every countable collection $\calL$,
    \[
        \sup_{K\in\calL}
        \sup_{X\in\mathfrak{D}_{\mathrm{part},\alpha}(K)}
        H_{\calA}^{\calL,K,X}(t)
        \le H(t)
    \]
    for all sufficiently large $t$.
\end{theorem}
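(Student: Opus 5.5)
The plan is to compose the two general-data-model reductions already established, \cref{lem:general-reduction} and \cref{lem:general-algorithm-guarantee}, with the blackbox $\calB_{\mathrm{KW}}$ of \cref{thm:kw-partial-enumeration}.

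\textbf{Step 1: pass to a prefix-wise profile.} Starting from the given feasible profile $(D,H)$, which we now regard as the element-wise profile $(D_{\el},H_{\el})$, apply \cref{lem:general-reduction} with $\rho=\alpha/2$. This produces a feasible prefix profile $(D_\pfx,H_\pfx)$. The lemma is data-model agnostic, so it applies verbatim to $\mathfrak{D}_{\mathrm{part},\alpha}$.

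\textbf{Step 2: build the prefix-wise generator.} Apply \cref{lem:general-algorithm-guarantee} with data model $\mathfrak{D}_{\mathrm{part},\alpha}$, feasible profile $(D_\pfx,H_\pfx)$, and blackbox $\calB=\calB_{\mathrm{KW}}$. \cref{thm:kw-partial-enumeration} says $\calB_{\mathrm{KW}}$ is deterministic and eventually consistent, with $\mu_{\low}(\calB_{\mathrm{KW}},\infty)\ge\alpha/2$. Two small checks are needed here.

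\begin{itemize}
\item \emph{Inequality versus equality.} \cref{lem:general-algorithm-guarantee} is stated with $\mu_{\low}(\calB,\infty)=\rho$, but its proof only uses the lower bound $\ge\rho$ (through the existence of $i_1(\epsilon)$). So it applies with $\rho=\alpha/2$ whenever the blackbox density is at least $\alpha/2$.
\item \emph{Eventual consistency under partial enumeration.} Eventual consistency on every $X\in\mathfrak{D}_{\mathrm{part},\alpha}(K)$ is exactly the property used to bound $N_{\calB}(X,K)<\infty$. The remainder of that proof never uses totality of the enumeration, only the definition of $t_i$ via the blackbox's and adversary's outputs.
\end{itemize}

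The output is $\calA=\calA_{\calB_{\mathrm{KW}}}$ with $\mu^\pfx_{\low}(\calA,D_\pfx)\ge\alpha/2$ almost surely and hallucination rate at most $H_\pfx(t)$ for large $t$.

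\textbf{Step 3: return to element-wise density.} Feed $\calA$ back into \cref{lem:general-reduction} to conclude $\mu^{\el}_{\low}(\calA,D)\ge\alpha/2$ and $\sup_K\sup_X H_{\calA}^{\calL,K,X}(t)\le H(t)$ for large $t$.

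\textbf{Main obstacle.} The delicate point is the "almost surely" qualifier. \cref{lem:general-reduction} is stated deterministically. Its density argument is the pathwise inequality
\[
\mu_i^{\el}(\calA(X),K,D)\ge\mu_i^\pfx(\calA(X),K,D_\pfx)-\frac{r(i)}{i}.
\]
This inequality holds for every random seed. I would therefore apply it on the probability-one event where the prefix-wise liminf is at least $\alpha/2$, for each fixed $K$ and $X$. The hallucination bound is in expectation and transfers without change.

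Finally, the main \cref{thm:lower-density-guarantee} follows as the case $\alpha=1$, since $\mathfrak{D}_{\mathrm{part},1}$ contains the total enumerations and the density bound becomes $1/2$.
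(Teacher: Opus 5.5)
Your proposal is correct and follows the paper's proof: treat $(D,H)$ as the element-wise profile, use \cref{lem:general-reduction} to obtain $(D_\pfx,H_\pfx)$, apply \cref{lem:general-algorithm-guarantee} with $\calB_{\mathrm{KW}}$ under $\mathfrak{D}_{\mathrm{part},\alpha}$, then transfer back through \cref{lem:general-reduction}. The differences are cosmetic. The paper skips your equality-versus-inequality check by taking the density parameter to be $\rho_\alpha := \mu_{\low}(\calB_{\mathrm{KW}},\infty)\ge\alpha/2$, so the lemma's hypothesis holds verbatim; re-checking that lemma's proof under partial enumeration is unnecessary, since it is already stated for arbitrary $\mathfrak{D}$; and your pathwise handling of ``almost surely'' makes explicit a step the paper leaves implicit.
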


\begin{proof}
    Fix $\alpha\in[0,1]$ and a feasible profile $(D,H)$. By \cref{thm:kw-partial-enumeration}, there exists a deterministic eventually consistent generator $\calB_{\mathrm{KW}}$ such that, for $\mathfrak{D}_{\mathrm{part},\alpha}$, $\mu_{\low}(\calB_{\mathrm{KW}},\infty)\ge \alpha/2$. Let $\rho_\alpha := \mu_{\low}(\calB_{\mathrm{KW}},\infty)\ge \alpha/2$. 
    
    Apply \cref{lem:general-reduction} with density parameter
    $\rho_\alpha$ and target feasible profile $(D,H)$. This yields a feasible profile $(D_\pfx,H_\pfx)$ such that any generator satisfying $\mu^\pfx_{\low}(\cdot,D_\pfx)\ge \rho_\alpha$ and hallucination rate bounded by $H_\pfx$ also satisfies $\mu^{\el}_{\low}(\cdot,D)\ge \rho_\alpha$ and hallucination rate bounded by $H$.

    Now apply \cref{lem:general-algorithm-guarantee} with data model $\mathfrak{D}=\mathfrak{D}_{\mathrm{part},\alpha}$, density parameter $\rho_\alpha$, feasible profile $(D_\pfx,H_\pfx)$, and
    black-box generator $\calB_{\mathrm{KW}}$. We obtain a randomized algorithm $\calA=\calA_{\calB_{\mathrm{KW}}}$ such that $\mu^\pfx_{\low}(\calA,D_\pfx)\ge \rho_\alpha$ almost surely and, for every countable collection $\calL$,
    \[
        \sup_{K\in\calL} \sup_{X\in\mathfrak{D}_{\mathrm{part},\alpha}(K)}
        H_{\calA}^{\calL,K,X}(t)
        \le H_\pfx(t)
    \]
    for all sufficiently large $t$.

    Therefore, by \cref{lem:general-reduction}, $\mu^{\el}_{\low}(\calA,D)\ge \rho_\alpha\ge \alpha/2$
    almost surely and
    \[
        \sup_{K\in\calL} \sup_{X\in\mathfrak{D}_{\mathrm{part},\alpha}(K)}
        H_{\calA}^{\calL,K,X}(t)
        \le H(t)
    \]
    for all sufficiently large $t$.
\end{proof}

We can now prove the main result in the main body of our paper.

\begin{proof}[Proof of \cref{thm:lower-density-guarantee}]
    Apply \cref{thm:partial-enumeration-timely} with $\alpha=1$.
\end{proof}

Another example of when we can directly apply our reduction is contamination framework of \cite{mehrotra2025languagegenerationinfinitecontamination}. We will provide an explicit example of the finite contamination data model; however, the reduction extends to the positive results for their other contamination models as well. In particular, exactly when they can achieve density $\rho$ in the limit under any one of their data models we can convert that into an algorithm that does so under any superlinear deadline and with any feasible hallucination rate.

The work in \cite{mehrotra2025languagegenerationinfinitecontamination} combined with the work in \cite{KW2} implies the following guarantee.

\begin{theorem}[Theorem 6.18 of \cite{mehrotra2025languagegenerationinfinitecontamination} + Theorem 1.6 of \cite{KW2}]\label{thm:finite-contamination-base}
    There exists a deterministic eventually consistent generator $\calB_{\mathrm{MVYZ}}$ such that, under the data model $\mathfrak{D}_{\mathrm{fin\text{-}cont}}$, $\mu_{\low}(\calB_{\mathrm{MVYZ}},\infty)\ge 1/2.$
\end{theorem}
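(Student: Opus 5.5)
The plan is to derive the guarantee by running the generator of \cite{KW2} on an enlarged collection, which is essentially what Theorem~6.18 of \cite{mehrotra2025languagegenerationinfinitecontamination} does; I follow that route and do not expect to need anything beyond \cref{thm:kw-partial-enumeration} with $\alpha=1$. The obstacle is that under $\mathfrak{D}_{\mathrm{fin\text{-}cont}}(K)$ the observed sequence $X$ is not an enumeration of any language in $\calL$. To fix this, I will absorb the finitely many corruptions into the target itself. Given $\calL$, define
\[
    \calL' := \{(L\cup F)\setminus G : L\in\calL,\ F,G\subseteq U \text{ finite}\}.
\]
This is countable, since $U$ has countably many finite subsets. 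Fix a bijective enumeration of $\calL'$. Let $\calB_{\mathrm{MVYZ}}$ be the deterministic eventually consistent generator $\calB_{\mathrm{KW}}$ of \cref{thm:kw-partial-enumeration} with $\alpha=1$, run on $\calL'$. Since $\calL$ is arbitrary and $\calL'$ depends only on $\calL$, this is a legitimate generator for the original collection.

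\textbf{Step 1 (reduction to total enumeration).} Fix $K\in\calL$ and $X\in\mathfrak{D}_{\mathrm{fin\text{-}cont}}(K)$. Set $F:=X\setminus K$ and $G:=K\setminus X$, both finite, and let $K' := (K\cup F)\setminus G$. Then $K'\in\calL'$, and $X$ is an enumeration, possibly with repetitions, of exactly $K'$. So $X\in\mathfrak{D}_{\mathrm{enum}}(K')\subseteq\mathfrak{D}_{\mathrm{part},1}(K')$, and \cref{thm:kw-partial-enumeration} applies to the instance $(\calL',K',X)$.

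\textbf{Step 2 (consistency transfers).} By eventual consistency, all but finitely many outputs of $\calB_{\mathrm{KW}}$ lie in $K'\subseteq K\cup F$. Since $F$ is finite and outputs never repeat, only finitely many outputs lie outside $K$. Hence $\calB_{\mathrm{MVYZ}}$ is eventually consistent with respect to the true target $K$.

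\textbf{Step 3 (density transfers).} This is the step needing the most care. Write $K_i$ and $K'_i$ for the length-$i$ prefixes in the global order. Because $K\,\triangle\,K'\subseteq F\cup G$ is finite, there is a constant $c=|F|+|G|$ such that $|K_i\,\triangle\,K'_{i}|\le 2c$ for every $i$: past the last element of $F\cup G$, the prefixes are offset by a bounded shift. Therefore
\[
    \bigl|\,|S\cap K_i| - |S\cap K'_i|\,\bigr| \le 2c
\]
for the output set $S$ of the generator. Two conventions need checking, and neither loses more than $O(1)$ elements:
\begin{itemize}
    \item The density of \cite{KW2} credits only strings not previously played by the adversary. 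Elements of $F$ are played by the adversary, but they are outside $K$ and so irrelevant to $K_i$.
    \item Elements of $G$ can never be credited, but there are only finitely many of them.
\end{itemize}
Dividing by $i$ and taking $\liminf$ then gives $\liminf_i \mu_i(\calB_{\mathrm{MVYZ}}(X),K,\infty)\ge 1/2$. Since $K$ and $X$ were arbitrary, $\mu_{\low}(\calB_{\mathrm{MVYZ}},\infty)\ge 1/2$ under $\mathfrak{D}_{\mathrm{fin\text{-}cont}}$.

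The only genuinely delicate point is the bookkeeping in Step 3: verifying that the prefix-index shift between $K$ and $K'$ is uniformly bounded and does not interact badly with the adversary-credit convention. Finiteness of $F\cup G$ makes every discrepancy an additive $O(1)$, which vanishes after normalizing by $i$.
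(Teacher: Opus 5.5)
Your argument is correct. The paper gives no proof of this statement; it imports it directly by combining Theorem~6.18 of \cite{mehrotra2025languagegenerationinfinitecontamination} with Theorem~1.6 of \cite{KW2}. Your proof is a self-contained version of that same two-step combination rather than a genuinely different route. You close $\calL$ under finite modifications, observe that $K'=(K\cup F)\setminus G$ is exactly the set of strings appearing in $X$, and run $\calB_{\mathrm{KW}}$ with $\alpha=1$ on this total enumeration of $K'$. Consistency and density then transfer back to $K$ through the bounded symmetric difference. What your version buys is that the only external input is the total-enumeration case of \cref{thm:kw-partial-enumeration}, and the transfer step is explicit. The citation buys brevity.

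One small correction to Step~3: elements of $G$ \emph{can} be credited. They lie in $K$ and are never observed, so the generator is allowed to output them. This is harmless, because your bound $|K_i\triangle K'_i|\le 2c$ already absorbs every such discrepancy. In fact neither bulleted convention check is needed. The density $\mu_i(\cdot,\cdot,\infty)$ counts only the generator's own output set $S$, so $\bigl|\,|S\cap K_i|-|S\cap K'_i|\,\bigr|\le 2c$ is all the transfer requires.
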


With this, we get the following timely generation guarantee.

\begin{theorem}[Timely Dense Generation from Finite Contamination]
\label{thm:finite-contamination-timely}
    Fix any feasible profile $(D,H)$. Under the data model $\mathfrak{D}_{\mathrm{fin\text{-}cont}}$, there exists a randomized generation algorithm $\calA=\calA_{\calB_{\mathrm{MVYZ}}}$ such that $\mu^{\el}_{\low}(\calA,D)\ge 1/2$ almost surely and, for every countable collection $\calL$,
    \[
        \sup_{K\in\calL}
        \sup_{X\in\mathfrak{D}_{\mathrm{fin\text{-}cont}}(K)}
        H_{\calA}^{\calL,K,X}(t)
        \le H(t)
    \]
    for all sufficiently large $t$.
\end{theorem}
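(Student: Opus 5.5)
The argument mirrors the proof of \cref{thm:partial-enumeration-timely}: it is a direct composition of the two reductions, with \cref{thm:finite-contamination-base} supplying the black box. Fix a feasible profile $(D,H)$ and let $\calB_{\mathrm{MVYZ}}$ be the deterministic eventually consistent generator of \cref{thm:finite-contamination-base}. Set $\rho := \mu_{\low}(\calB_{\mathrm{MVYZ}},\infty)$ under $\mathfrak{D}_{\mathrm{fin\text{-}cont}}$, so that $\rho\ge 1/2$.

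\textbf{Step 1: pass to the prefix-wise metric.} Apply \cref{lem:general-reduction} with density parameter $\rho$ and target profile $(D,H)$. This yields a feasible profile $(D_\pfx,H_\pfx)$ with the following transfer property. Any generator satisfying $\mu^\pfx_{\low}(\cdot,D_\pfx)\ge\rho$, with hallucination rate bounded by $H_\pfx$ for large $t$ uniformly over $K\in\calL$ and $X\in\mathfrak{D}_{\mathrm{fin\text{-}cont}}(K)$, also satisfies $\mu^{\el}_{\low}(\cdot,D)\ge\rho$ and has hallucination rate bounded by $H$. This step is data-model agnostic: it only compares $D_\pfx$ with $D$ and $H_\pfx$ with $H$.

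\textbf{Step 2: build the prefix-wise generator.} Apply \cref{lem:general-algorithm-guarantee} with data model $\mathfrak{D}_{\mathrm{fin\text{-}cont}}$, profile $(D_\pfx,H_\pfx)$, and black box $\calB_{\mathrm{MVYZ}}$. This produces $\calA=\calA_{\calB_{\mathrm{MVYZ}}}$ with $\mu^\pfx_{\low}(\calA,D_\pfx)\ge\rho$ almost surely and hallucination rate at most $H_\pfx(t)$ for all sufficiently large $t$. Feeding $\calA$ into Step 1 gives $\mu^{\el}_{\low}(\calA,D)\ge\rho\ge 1/2$ almost surely, together with the bound $H$ on the hallucination rate.

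\textbf{Main obstacle.} The step needing care is checking that \cref{lem:general-algorithm-guarantee} really holds under contamination, where the adversary may show finitely many strings outside $K$ and permanently withhold finitely many strings of $K$. I would recheck its proof on these points:

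\begin{itemize}
\item The hallucination bound uses only the black box's eventual consistency, which gives finitely many black-box hallucinations, plus the $O(\sum_m C(m))$ expected speculative steps. Adversarial noise is irrelevant here, since hallucination is measured against $K$.
\item The speculation count is unaffected, because $\mathsf{NextUnused}(L^{\choice_t})$ still outputs an element of $K$ whenever $\choice_t$ indexes $K$.
\item In the case $t_i\ge D(i)$, the time $t_i$ must be defined via coverage of $K_i$ by black-box outputs and adversary observations. Withheld strings could leave $t_i$ undefined. I would handle this by redefining $t_i$ through the black box's density guarantee directly, i.e.\ the first time $\calB$ has output $(\rho-\epsilon/2)i$ credited elements of $K_i$. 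The adversary-claims counting then only uses that credited elements are unclaimed.
\item The finitely many adversarial strings in $K$ that are shown late cost only $O(1)$, hence $o(i)$.
\end{itemize}

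With these checks the Freedman and Borel--Cantelli portions go through verbatim, and the theorem follows.
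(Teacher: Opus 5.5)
Your proposal is correct and is the paper's own argument: obtain $(D_\pfx,H_\pfx)$ from \cref{lem:general-reduction}, apply \cref{lem:general-algorithm-guarantee} with data model $\mathfrak{D}_{\mathrm{fin\text{-}cont}}$ and black box $\calB_{\mathrm{MVYZ}}$, then transfer back through the reduction. Your extra re-check of the black-box lemma under contamination goes beyond the paper, which simply invokes that lemma as already stated for arbitrary data models. That re-check includes redefining $t_i$, which can be infinite when strings are withheld, and setting $\rho:=\mu_{\low}(\calB_{\mathrm{MVYZ}},\infty)$ rather than $\rho=1/2$ so that the lemma's equality hypothesis holds literally; these are harmless refinements.
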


\begin{proof}
    Fix a feasible profile $(D,H)$. By \cref{thm:finite-contamination-base}, there exists a deterministic eventually consistent generator $\calB_{\mathrm{MVYZ}}$ such that, under $\mathfrak{D}_{\mathrm{fin\text{-}cont}}$, we have $\mu_{\low}(\calB_{\mathrm{MVYZ}},\infty)=1/2$. Let $\rho:=1/2$.

    Apply \cref{lem:general-reduction} with density parameter $\rho$ and target feasible profile $(D,H)$. This yields a feasible profile $(D_\pfx,H_\pfx)$ such that any generator satisfying $\mu^\pfx_{\low}(\cdot,D_\pfx)\ge \rho$ and hallucination rate bounded by $H_\pfx$ also satisfies $\mu^{\el}_{\low}(\cdot,D)\ge \rho$ and hallucination rate bounded by $H$.

    Now apply \cref{lem:general-algorithm-guarantee} with data model $\mathfrak{D}=\mathfrak{D}_{\mathrm{fin\text{-}cont}}$, density parameter $\rho$, feasible profile $(D_\pfx,H_\pfx)$, and black-box generator $\calB_{\mathrm{MVYZ}}$. We obtain a randomized algorithm $\calA=\calA_{\calB_{\mathrm{MVYZ}}}$ such that $\mu^\pfx_{\low}(\calA,D_\pfx)\ge \rho$ almost surely and, for every countable collection $\calL$,
    \[
        \sup_{K\in\calL}
        \sup_{X\in\mathfrak{D}_{\mathrm{fin\text{-}cont}}(K)}
        H_{\calA}^{\calL,K,X}(t)
        \le H_\pfx(t)
    \]
    for all sufficiently large $t$.

    Therefore, by \cref{lem:general-reduction}, we conclude that $\mu^{\el}_{\low}(\calA,D)\ge \rho=1/2$ almost surely and
    \[
        \sup_{K\in\calL}
        \sup_{X\in\mathfrak{D}_{\mathrm{fin\text{-}cont}}(K)}
        H_{\calA}^{\calL,K,X}(t)
        \le H(t)
    \]
    for all sufficiently large $t$.
\end{proof}

Our results can clearly port to many additional data models by virtue of its black-box nature and we defer exploration of these details to the full version of this paper, as well as to future work.

\section{Optimal linear-time upper density via eventually consistent generators}\label{app:upper-density}

In this section, we focus on the total enumeration data model and show there exists a consistent generation algorithm that achieves optimal upper density under the identity deadline function. 

We begin with a result from \cite{KW}. They show that for any collection of languages $\calL$, there exists a simple, deterministic algorithm $\mathsf{Accurate} = \{\mathsf{Accurate}_t : U^t \to \calL\}_{t=1}^{\infty}$ with the following property. For any target language $K \in \calL$ and any total enumeration $E \in \calE(K)$ of $K$, if we denote the output at time $t$ by $\hat{L}^t = \mathsf{Accurate}(x_{1:t})$, then:
\begin{enumerate}
    \item $\hat{L}^t = K$ for infinitely many $t$, and
    \item there exists a finite time $T$ such that for all $t \ge T$, $\hat{L}^t \subseteq K$.
\end{enumerate}
That is, the algorithm eventually only produces guesses that are consistent with the target language, and moreover, it guesses the target language correctly infinitely often.

Using this procedure, Kleinberg and Wei \cite{KW} construct a generation algorithm $\calA$ satisfying $\mu_{up}(\calA, \infty) = 1/2$, which can easily be shown to be optimal. We now show that a simple modification of this construction yields a stronger result: one can obtain an algorithm $\mathsf{GCG}$, given in \cref{alg:consistent-generation}, such that $\mu^{\el}_{up}(\mathsf{GCG}, D) = 1/2$ even under the identity deadline function $D(i) = i$.

\begin{algorithm}[h]
\caption{\textsf{G}eneration via \textsf{C}onsistent \textsf{G}uessing (\textsf{GCG})}
\label{alg:consistent-generation}
\begin{algorithmic}[1]
\State Initialize $m \gets 0$, $G \gets [\,]$, and $\hat{L}^0 \gets L^1$ \Comment{$G=(g_1,\dots,g_{|G|})$; $g_m$ is its $m$-th element}
\State Define $\alpha_m \gets 1/2 - 2^{-m}$ for all $m \in \bbN$

\For{$t = 1,2,\dots$}
    \State $\hat{L}^t \gets \mathsf{Accurate}(H^{t-1}, x_t)$
    
    \If{$\hat{L}^t \supsetneq \hat{L}^{t-1}$}
        \State Append $\hat{L}^t$ to $G$ 
    \EndIf
    
    \If{$m = 0$}
        \State Output $\mathsf{OnTimeUnused}(\hat{L}^t)$ \Comment{first element of $\hat{L}_t$ not yet late and not output/observed}
    \Else
        \State Output $\mathsf{OnTimeUnused}(g_m)$
    \EndIf
    
    \If{$|G| \ge m+1$ and $\mu^{\el}_t(\mathsf{GCG}(E), g_m, t) \ge \alpha_m$}
        \State $m \gets m+1$
    \EndIf
\EndFor
\end{algorithmic}
\end{algorithm}

The generator maintains an ordered list $G$ of candidate languages produced over time. Whenever $\mathsf{Accurate}$ outputs a guess that is a strict superset of its previous guess, this guess is appended to $G$. Intuitively, these strict superset transitions correspond to moments where the learner expands its current hypothesis and therefore may expose new elements worth generating from. Thus, at any time we may write $G = (g_1,\dots,g_{|G|})$, where $g_m$ denotes the $m$-th element whenever $m \le |G|$.

In addition to $G$, the generator maintains a stage index $m \in \bbN$ that determines from which candidate language it generates. When $m=0$, the generator is in an initialization phase and generates directly from the current guess $\hat{L}^t$ produced by $\mathsf{Accurate}$. Once $G$ becomes nonempty, the generator transitions to $m=1$ and subsequently generates from elements of $G$.

For any language $L \subseteq U$, let $\mathsf{OnTimeUnused}(L)$ denote the first element of $L$ (under the canonical ordering of $U$) that is not yet late (according to the deadline function) and has neither previously been output by the generator nor previously been observed from the adversary. At each round $t$, the generator outputs $\mathsf{OnTimeUnused}(\hat{L}^t)$ when $m=0$, and $\mathsf{OnTimeUnused}(g_m)$ when $m \ge 1$.

The progression of the stage index $m$ is governed by a sequence of thresholds $\{\alpha_m\}_{m\in\bbN}$ defined by $\alpha_m = 1/2 - 2^{-m}$. Whenever $m \ge 1$ and $G$ contains at least $m+1$ elements, the generator monitors its empirical density with respect to the current candidate $g_m$. If at time $t$ the condition $\mu^{\el}_t(\mathsf{GCG}(E), g_m, t) \ge \alpha_m$ is satisfied, then the generator increments $m \gets m+1$ and begins generating from the next candidate in $G$. In this way, the generator proceeds through the list $G$, spending sufficient time generating from each candidate before advancing.

We now state the formal guarantee achieved by \cref{alg:consistent-generation}, given below, and directly provide a full proof.

\begin{theorem}[Restatement of \cref{thm:total enumeration consistent timely generation}]
    The deterministic algorithm $\mathsf{GCG}$ is eventually consistent and satisfies $\mu^{\el}_{up}(\mathsf{GCG}, D) = 1/2$ with deadline function $D(i)=i$.
\end{theorem}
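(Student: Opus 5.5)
The proof splits into three parts: an upper bound of $1/2$, eventual consistency, and a lower bound on the upper density, obtained by driving the stage index $m$ to infinity.

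\textbf{Upper bound and consistency.} The bound $\mu^{\el}_{up}(\mathsf{GCG},D)\le 1/2$ comes from the footnote's adversary. On a single language, it enumerates the earliest unused element at every step, so by time $i$ it has claimed at least about half of $K_i$ that could still be on time.

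For eventual consistency, take $T$ from property (2) of $\mathsf{Accurate}$. After $T$, every guess $\hat L^t\subseteq K$. Hence every element appended to $G$ after $T$ is a subset of $K$. We must also rule out $m$ staying forever on some $g_m$ appended before $T$, or on a pre-$T$ guess while $m=0$. The claim is that every stage is abandoned. Property (1) gives $\hat L^t=K$ infinitely often, while after $T$ all guesses lie inside $K$. So there are infinitely many strict-superset transitions up to $K$, and $G$ grows without bound. Once $|G|\ge m+1$, the algorithm generates from the fixed language $g_m$, always outputting its first on-time unused element.

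\textbf{Key lemma (the main obstacle).} Under identity deadlines, outputting $\mathsf{OnTimeUnused}(g_m)$ every step eventually yields $\mu^{\el}_t(\mathsf{GCG}(E),g_m,t)\ge \alpha_m<1/2$ for some $t$, provided $g_m$ is infinite. The element-wise credit at index $t$ counts the $j\le t$ for which the $j$-th element of $g_m$ was produced by time $j$.

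The delicate point is that the adversary also consumes elements, and its elements are not credited. Also, elements of $g_m$ may already have been used earlier, since they are shared with $K$ and were generated before this stage began. I would handle this by a charging argument over each step after the stage begins. Each step either outputs the next on-time element of $g_m$ (credited, and the pointer advances by one) or the adversary skips one. So within any window, the generator's pointer advances at least as fast as the deadline index, and it receives at least half of the indices. Only the finitely many indices used before the stage started are lost, which is $o(t)$. Thus the density tends to at least $1/2-o(1)>\alpha_m$.

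One subtlety I would need to verify is whether "not yet late" keeps the pointer ahead of the deadline. Another is whether $g_m$ might be finite; I would assume all languages are infinite, as is standard. Given both, each stage terminates, so $m\to\infty$, and with the previous part this gives eventual consistency. Consistency also needs $m$ to surpass the pre-$T$ entries of $G$, which follows because $m$ increases one entry at a time through $G$ and every stage terminates.

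\textbf{Conclusion.} The stage index moves to $m+1$ at a time $t$ with $\mu^{\el}_t(\cdot,g_m,t)\ge\alpha_m$. For the density on $K$ itself, I would use the fact that $K$ appears in $G$ infinitely often, namely at each transition to $\hat L^t=K$. Whenever $g_m=K$, the stage advances only after density at least $\alpha_m$ relative to $K$ is attained at some index $t$. This gives infinitely many $t$ with $\mu^{\el}_t(\mathsf{GCG}(E),K,D)\ge \alpha_m\to 1/2$. Therefore the $\limsup$ is at least $1/2$, and combined with the upper bound, $\mu^{\el}_{up}(\mathsf{GCG},D)=1/2$.
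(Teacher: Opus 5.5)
\paragraph{The gap: $G$ need not be infinite.} Your argument breaks at the sentence ``So there are infinitely many strict-superset transitions up to $K$, and $G$ grows without bound.'' Property (1) only says $\hat L^t=K$ infinitely often. That is compatible with $\hat L^t=K$ for \emph{every} $t\ge T$, in which case nothing is appended after $T$ and $G$ is finite, possibly empty.

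In that regime the increment test requires $|G|\ge m+1$, so it freezes the stage index at $|G|$ (or at $0$). Hence ``every stage terminates, so $m\to\infty$'' fails. Moreover, $K$ occurs in $G$ only finitely often, so you get only finitely many density certificates, which says nothing about the $\limsup$.

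Your consistency and $\limsup$ arguments therefore cover only the paper's case ``$G$ grows indefinitely.'' In that case they essentially coincide with the paper's argument, and $K$ does recur in $G$. The paper treats the remaining cases separately:
\begin{itemize}
\item If $G$ stays empty, the guesses equal $K$ from $T$ on, and stage $0$ generates greedily from $K$.
\item If $G$ is finite and nonempty, the paper argues that the last entry $g_{m^\star}$ equals $K$, so the generator sits on $K$ forever.
\end{itemize}
In both cases your greedy charging lemma, applied directly to $K$ rather than through stage transitions, gives consistency and density tending to $1/2$ (even as a $\liminf$).

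\paragraph{A caution for the finite, nonempty case.} The paper's reasoning there only excludes $g_{m^\star}\subsetneq K$. Concluding $g_{m^\star}=K$ also needs $g_{m^\star}\subseteq K$. That is guaranteed only if the last append occurs at or after $T$, because before $T$ the guesses can step down from a strict superset of $K$ to $K$ without triggering an append.

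For example, take $L^1=4\bbN$, $L^2=\bbN$, $L^3=2\bbN=K$ and the enumeration $4,2,6,8,\dots$. The guess sequence $L^1,L^2,K,K,\dots$ satisfies (1) and (2) with $T=3$, yet gives $G=(\bbN)$. $\mathsf{GCG}$ then generates from $\bbN$ forever and hallucinates infinitely often. So properties (1)--(2) of $\mathsf{Accurate}$ alone do not close this case. You need either more structure from $\mathsf{Accurate}$ or a modified append rule.

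Appending $\hat L^t$ whenever $\hat L^t\neq\hat L^{t-1}$ is enough:
\begin{itemize}
\item If $G$ is infinite, the guesses change infinitely often and equal $K$ infinitely often. So every re-entry into $K$ is appended, and $K$ recurs in $G$.
\item If $G$ is finite, the guesses are eventually constant, hence eventually $K$ by (1). So the last entry of $G$ (or, if $G$ is empty, the stage-$0$ guess) is $K$.
\end{itemize}
Your stage-termination lemma, together with the frozen-stage case, then completes the proof.
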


\begin{proof}
    We recall the guarantees of the algorithm $\mathsf{Accurate}$ from the main body. For any target language $K \in \calL$ and any total enumeration $E\in \calE(K)$, the sequence of guesses $\{\hat{L}^t\}_{t\in \bbN}$ satisfies that
    \begin{enumerate}
        \item $\hat{L}^t = K$ for infinitely many $t$, and
        \item there exists a finite time $T$ such that for all $t \geq T$, $\hat{L}^t \subseteq K$.
    \end{enumerate}
    
    The algorithm $\mathsf{GCG}$ maintains two objects: a list $G = (g_1,\dots,g_{|G|})$ of candidate languages which is initialized to be empty and a stage index $m \in \bbN$ which is initialized to be $0$. The list $G$ is constructed from the guesses output by $\mathsf{Accurate}$ as follows: whenever at some round $t$ we have $\hat{L}^t \supsetneq \hat{L}^{t-1}$, the guess $\hat{L}^t$ is appended to $G$. Thus, $G$ consists precisely of those guesses produced by $\mathsf{Accurate}$ that are strict supersets of its previous guess.
    
    At each round, if $m=0$, the generator outputs the first element of $\hat{L}^t$ that is both not yet late and has not previously been output by the generator or observed from the adversary. Otherwise, it outputs the first element of $g_m$ with this same property. The stage index $m$ is incremented once the generator has achieved the density threshold $\mu^{\el}_i(\mathsf{GCG}(E), g_m, i) \geq \alpha_m$, where $\alpha_m = \tfrac{1}{2} - 2^{-m}$.
    
    We analyze the behavior of the algorithm via three cases.
    
    \textbf{$G$ remains empty:} In this case, there never exists a time $t$ for which $\hat{L}^t \supsetneq \hat{L}^{t-1}$, as otherwise $G$ would be nonempty. Since there are infinitely many rounds $t$ for which $\hat{L}^t = K$, and since $T$ is finite, there must also be infinitely many rounds after time $T$ for which this is true. Furthermore, since for all $t \geq T$ we have $\hat{L}^t \subseteq K$, it cannot be the case that there exists some $t \geq T$ for which $\hat{L}^t \subsetneq K$, as otherwise there would exist some later round $t' > t$ for which $\hat{L}^{t'} = K$, implying $\hat{L}^{t'} \supsetneq \hat{L}^{t'-1}$, a contradiction. As such, we must have $\hat{L}^t = K$ for all $t \geq T$.
    
    It follows that for every round $t \geq T$, the generator outputs the earliest element of $K$ that is both not yet late and has not previously been output by the generator or observed from the adversary. We now show that this implies $\limsup_{i\to\infty}\mu^{\el}_i(\mathsf{GCG}(E),K,i)\geq \tfrac{1}{2}$.
    
    Now fix any $i > 2T$. By time $T$, the adversary has made at most $T$ outputs and hence, in the worst case, as many as $2T$ elements of $K$ can no longer be generated for credit by the generator. In particular, since the deadline function is $D(i)=i$, the generator will have already lost credit for the first $T$ elements of $K$ and the adversary may have generated $T$ more elements of $K \setminus K_T$ that can then no longer be generated by the generator for credit. Therefore, at time $T$ at least $i-2T$ elements of $K_i$ remain available to the generator to generate for credit.
    
    From time $T$ onward, the generator always outputs the earliest element of $K$ that is both not yet late and has not previously been generated by either player. Thus, among these $i-2T$ available elements, the generator greedily takes the earliest available element at each opportunity, while the adversary can remove at most one such element per round. It follows that, in the worst case, the adversary and generator alternate in generating elements from this set. Consequently, the generator generates at least $\left\lfloor \frac{i-2T}{2} \right\rfloor$ elements of $K_i$ on time by the time all elements of $K_i$ have been generated.
    
    Therefore, we have
    \[
        \mu^{\el}_i(\mathsf{GCG}(E),K,i) = \frac{1}{i}\sum_{j=1}^i \ind\{k_j \in \mathsf{GCG}(E)_j\} \geq \frac{\lfloor (i-2T)/2 \rfloor}{i}.
    \]
    Therefore, $\limsup_{i\to\infty}\mu^{\el}_i(\mathsf{GCG}(E),K,i)\geq \tfrac{1}{2}$.
    
    \textbf{$G$ grows indefinitely:} In this case, $\hat{L}^t$ does not converge to a fixed language, and the list $G$ grows without bound. We first claim that $K$ appears infinitely often in $G$. Indeed, since there are infinitely many rounds $t$ for which $\hat{L}^t = K$ and for all $t \geq T$ we have $\hat{L}^t \subseteq K$, it follows that whenever the sequence transitions from a strict subset of $K$ to $K$, we must have $\hat{L}^t \supsetneq \hat{L}^{t-1}$, and hence $K$ is appended to $G$. Since this occurs infinitely often, $K$ appears infinitely often in $G$.
    
    We now show that each stage $m$ terminates in finite time. Suppose for the sake of contradiction that stage $m$ did not terminate. Fix any $m$ and suppose the algorithm enters stage $m$ at some time $T_m$. Then for all $t\geq T_m$, round $t$ would be in stage $m$. From time $T_m$ onward, the generator outputs the earliest element of $g_m$ that is both not yet late and has not previously been generated by either player. We now argue that the generator achieves $\mu^{\el}_t(\mathsf{GCG}(E), g_m, t) \geq \alpha_m$ after finitely many rounds.
    
    Indeed, fix any round $t > 2T_m$. By time $T_m$, the generator can no longer receive credit for the first $T_m$ elements of $g_m$, since these elements are already late with respect to the deadline function $D(i)=i$. Moreover, by time $T_m$, the adversary has made at most $T_m$ outputs, and in the worst case these outputs may all be distinct elements of $(g_m)_t \setminus (g_m)_{T_m}$. Such elements can then no longer be generated by the generator for credit. Therefore, in the worst case, as many as $2T_m$ elements of $(g_m)_t$ are unavailable to the generator to generate for credit by time $T_m$, and hence at least $t-2T_m$ elements of $(g_m)_t$ remain available to be generated for credit at time $T_m$.
    
    From time $T_m$ onward, the generator greedily outputs the earliest available element of $g_m$, while the adversary can remove at most one such element per round. Thus, in the worst case, the adversary and generator alternate in generating elements from this set. Consequently, by time $t$, the generator generates at least $\lfloor \frac{t-2T_m}{2} \rfloor$ elements of $(g_m)_t$ on time. It follows that
    \[
        \mu^{\el}_t(\mathsf{GCG}(E), g_m, t) = \frac{1}{t} \sum_{j=1}^t \ind\{(g_m)_j \in \mathsf{GCG}(E)_j\} \geq \frac{\lfloor (t-2T_m)/2 \rfloor}{t}.
    \]
    Since $\alpha_m < \tfrac{1}{2}$ is fixed, there exists some finite $t_m$ such that for all $t \geq t_m$, $\lfloor (t-2T_m)/2 \rfloor/t \geq \alpha_m$. Therefore, stage $m$ terminates in finite time.
    
    Since each stage terminates in finite time and $K$ appears infinitely often in $G$, there exists an infinite sequence of stages $\{m_j\}_{j\in\bbN}$ such that $g_{m_j} = K$. At each such stage $m_j$, the generator achieves density at least $\alpha_{m_j}$ with respect to $K$. Finally, since $\alpha_{m_j}$ goes to $\tfrac{1}{2}$ as $j$ goes to infinity, it follows that there exists a sequence of indices for which $\mu^{\el}_i(\mathsf{GCG}(E),K,i)$ approaches $\tfrac{1}{2}$. Therefore, $\limsup_{i\to\infty} \mu^{\el}_i(\mathsf{GCG}(E),K,i) \geq \tfrac{1}{2}$.
    
    \textbf{$G$ is nonempty and stops growing after some finite time:} Suppose $G$ is nonempty and there exists some final stage $m^\star$ such that $g_{m^\star}$ is the last element ever appended to $G$. Let $T^\star$ denote the time at which $g_{m^\star}$ is appended to $G$. Since there are infinitely many rounds $t$ for which $\hat{L}^t = K$, and since $T^\star$ is finite, there must also be infinitely many rounds after time $T^\star$ for which this is true. Furthermore, since for all $t \geq T$ we have $\hat{L}^t \subseteq K$, it cannot be the case that $g_{m^\star} \subsetneq K$, as otherwise there would exist some later round $t' > T^\star$ for which $\hat{L}^{t'} = K$, implying $\hat{L}^{t'} \supsetneq \hat{L}^{t'-1}$ and hence that some additional language is appended to $G$, a contradiction. As such, we must have $g_{m^\star} = K$.
    
    By the argument above, each stage terminates in finite time. In particular, stages $1,\dots,m^\star-1$ all terminate in finite time, and so the generator eventually reaches stage $m^\star$. Since $g_{m^\star}$ is the final element of $G$, once the generator enters stage $m^\star$ it remains there forever.
    
    Let $T_{m^\star}$ denote the time at which the generator enters stage $m^\star$. From time $T_{m^\star}$ onward, the generator outputs the earliest element of $K$ that is both not yet late and has not previously been generated by either player. We now argue, exactly as in the case where $G$ remains empty, that this implies $\limsup_{i\to\infty}\mu^{\el}_i(\mathsf{GCG}(E),K,i)\geq \tfrac{1}{2}$.
    
    Fix any $i > 2T_{m^\star}$. By time $T_{m^\star}$, the generator can no longer receive credit for the first $T_{m^\star}$ elements of $K$, since these elements are already late with respect to the deadline function $D(i)=i$. Moreover, by time $T_{m^\star}$, the adversary has made at most $T_{m^\star}$ outputs, and in the worst case these outputs may all be distinct elements of $K_i \setminus K_{T_{m^\star}}$. Such elements can then no longer be generated by the generator for credit. Therefore, in the worst case, as many as $2T_{m^\star}$ elements of $K_i$ are unavailable to the generator to generate for credit by time $T_{m^\star}$, and hence at least $i-2T_{m^\star}$ elements of $K_i$ remain available to be generated for credit at time $T_{m^\star}$.
    
    From time $T_{m^\star}$ onward, the generator greedily outputs the earliest available element of $K$, while the adversary can remove at most one such element per round. Thus, in the worst case, the adversary and generator alternate in generating elements from this set. Consequently, by the time all elements of $K_i$ have been generated, the generator generates at least $\lfloor \tfrac{i-2T_{m^\star}}{2}\rfloor$ elements of $K_i$ on time. It follows that
    \[
        \mu^{\el}_i(\mathsf{GCG}(E),K,i) = \frac{1}{i}\sum_{j=1}^i \ind\{k_j \in \mathsf{GCG}(E)_j\} \geq \frac{\lfloor (i-2T_{m^\star})/2 \rfloor}{i}.
    \]
    Therefore, $\limsup_{i\to\infty}\mu^{\el}_i(\mathsf{GCG}(E),K,i)\geq \tfrac{1}{2}$.
    
    As we have now argued that $\limsup_{i\to\infty}\mu^{\el}_i(\mathsf{GCG}(E),K,i)\geq \tfrac{1}{2}$ in each case, we have proven the first claim.
\end{proof}

\end{document}